\documentclass[11pt]{article}
\usepackage{titlesec}

\usepackage[authoryear,round]{natbib}
\usepackage[T1]{fontenc}
\usepackage{amsmath,amsfonts,amsthm}
\usepackage{newpxtext}
\usepackage{newpxmath}

\usepackage[utf8]{inputenc}
\usepackage[T1]{fontenc}
\usepackage{textcomp}
\usepackage{microtype}

\usepackage{amsmath}
\usepackage{amsfonts}
\usepackage{amsthm}
\usepackage{mathtools}
\usepackage{bm}
\usepackage{nicefrac}
\usepackage{cases}
\usepackage{siunitx}

\PassOptionsToPackage{table}{xcolor}
\usepackage{xcolor}
\usepackage{graphicx}
\usepackage{svg}
\usepackage{wrapfig}
\usepackage{subcaption}
\usepackage{booktabs}
\usepackage{tabularx}
\usepackage{multirow}
\usepackage{makecell}
\usepackage{threeparttable}
\usepackage{colortbl}

\usepackage[ruled,boxed,commentsnumbered]{algorithm2e}
\usepackage{algorithmicx}
\usepackage{algpseudocode}
\usepackage[noend]{algcompatible}
\usepackage[edges]{forest}
\usepackage{enumitem}
\usepackage{fancyvrb}
\usepackage{verbatim}
\usepackage{tcolorbox}

\usepackage{xurl}
\usepackage{hyperref}
\usepackage{bibentry}
\usepackage{xspace}
\usepackage{etoolbox}
\usepackage{ragged2e}
\usepackage{pifont}
\usepackage{utfsym}
\usepackage{fontawesome5}
\usepackage{tikzsymbols}
\usepackage{lineno}

\usepackage{titletoc}

\definecolor{hiddendraw}{RGB}{20,68,106}
\definecolor{hidden-pink}{RGB}{255,245,247}
\definecolor{hidden-red}{RGB}{180,0,0}
\definecolor{output-black}{RGB}{0,0,0}
\definecolor{output-white}{RGB}{255,255,255}
\definecolor{myorange}{RGB}{255,208,153}
\definecolor{mygreen}{RGB}{166,207,152}
\definecolor{forestgreen}{RGB}{34,139,34}
\definecolor{darkgreen}{RGB}{24,119,24}
\definecolor{melon}{HTML}{F89E7B}
\definecolor{lightorange}{HTML}{FBB982}
\definecolor{userblack}{HTML}{41B0E4}
\definecolor{cerulean}{HTML}{00A2E3}
\definecolor{mygrey}{RGB}{128,128,128}
\definecolor{verylightgray}{RGB}{240,240,240}
\definecolor{titlebgcolor}{RGB}{92,54,117}
\definecolor{textbgcolor}{RGB}{180,147,69}
\definecolor{mypink}{rgb}{1,0.928,0.928}
\definecolor{mydarkpink}{rgb}{1,0.7,0.7}
\definecolor{lightblue}{RGB}{57,139,250}

\newtheorem{proposition}{Proposition}

\newtheorem{lemma}{Lemma}

\newtcolorbox{warningbox}[2][]{
  colframe=red!25,
  colback=red!10,
  coltitle=red!20!black,
  title=#2,
  #1,
}

\newtcolorbox{hintbox}[2][]{
  colframe=green!25,
  colback=green!10,
  coltitle=green!20!black,
  #1,
}

\newtcolorbox{citebox}[2][]{
  colframe=mygrey!55,
  colback=mygrey!10,
  #1,
}

\newtcolorbox{infobox}[2][]{
  colframe=titlebgcolor!28,
  colback=textbgcolor!7,
  boxrule=0.45pt,
  arc=2pt,
  top=7pt,
  bottom=7pt,
  left=8pt,
  right=8pt,
  #1,
}

\newtcolorbox{methbox}[2][]{
  colframe=titlebgcolor!35,
  colback=textbgcolor!7,
  fonttitle=\sffamily\bfseries\small,
  title={#2},
  boxrule=0.45pt,
  arc=2pt,
  top=8pt,
  bottom=8pt,
  left=9pt,
  right=9pt,
  colbacktitle=titlebgcolor!8,
  coltitle=black,
  #1,
}

\usepackage{placeins}
\usepackage{float}
\definecolor{cordCiteBlue}{HTML}{37627E}
\definecolor{cordLinkRed}{HTML}{B82E38}
\hypersetup{
  colorlinks=true,
  citecolor=cordCiteBlue,
  linkcolor=cordLinkRed,
  urlcolor=cordLinkRed
}
\usepackage[
  letterpaper,
  left=1in,
  right=1in,
  top=0.85in,
  bottom=0.9in,
  headheight=14pt,
  headsep=14pt,
  footskip=24pt
]{geometry}
\usepackage{fancyhdr}
\tcbuselibrary{skins,breakable}

\colorlet{cordPurple}{titlebgcolor}
\colorlet{cordGold}{textbgcolor}

\titleformat{\section}
  {\sffamily\bfseries\fontsize{14}{17}\selectfont}
  {\thesection}{0.65em}{}
\titleformat{\subsection}
  {\sffamily\bfseries\fontsize{11.5}{14}\selectfont}
  {\thesubsection}{0.65em}{}
\titleformat{\subsubsection}
  {\sffamily\bfseries\normalsize}
  {\thesubsubsection}{0.65em}{}
\titleformat{\paragraph}[runin]
  {\normalfont\bfseries}{\theparagraph}{0.65em}{}
\titlespacing*{\section}{0pt}{14pt plus 3pt minus 2pt}{5pt}
\titlespacing*{\subsection}{0pt}{10pt plus 2pt minus 1pt}{4pt}
\titlespacing*{\subsubsection}{0pt}{8pt plus 2pt minus 1pt}{3pt}
\titlespacing*{\paragraph}{0pt}{6pt plus 1pt minus 1pt}{0.65em}

\setlist{topsep=4pt,itemsep=2pt,parsep=0pt}
\setcitestyle{authoryear,round,semicolon}
\apptocmd{\thebibliography}{\raggedright}{}{%
  \PackageWarning{cord-report}{Could not set bibliography alignment}%
}

\fancypagestyle{plain}{%
  \fancyhf{}
  \fancyfoot[C]{\sffamily\footnotesize\thepage}

}

\makeatletter
\renewcommand{\@maketitle}{%
  \cordaffiliationlogos
  \vskip 12pt
  \noindent
  {\sffamily\bfseries\fontsize{23}{27}\selectfont\@title\par}%
  \vskip 13pt
  \noindent
  {\sffamily\bfseries\fontsize{11.5}{16}\selectfont\@author\par}%
  \vskip 16pt
}
\makeatother

\newtcolorbox{cordabstractbox}{
  enhanced,
  breakable,
  colback=cordGold!4!white,
  colframe=cordPurple!24!white,
  coltext=black,
  boxrule=0.5pt,
  arc=6pt,
  outer arc=6.5pt,
  boxsep=0pt,
  left=14pt,
  right=14pt,
  top=11pt,
  bottom=11pt,
  before skip=0pt,
  after skip=12pt
}
\renewenvironment{abstract}{%
  \begin{cordabstractbox}%
  \small
  \setlength{\parskip}{0pt}%
  \noindent{\sffamily\bfseries\normalsize Abstract}\par\vspace{5pt}
  \ignorespaces
}{%
  \par\end{cordabstractbox}%
}

\providecommand{\R}{\mathbb{R}}
\definecolor{AppendixTOCAccent}{HTML}{1F6278}
\newcommand{\cordappendixcontents}{%
  \begingroup
  \thispagestyle{plain}
  \setlength{\parskip}{0pt}
  \renewcommand{\baselinestretch}{1}
  \vspace*{12pt}
  \phantomsection
  \pdfbookmark[0]{Appendix Contents}{appendix-contents}
  {\centering\sffamily\bfseries\fontsize{18}{22}\selectfont
    Appendix Contents\par}
  \vspace{14pt}
  \hypersetup{linkcolor=cordLinkRed}
  \titlecontents{section}[1.8em]
    {\addvspace{14pt}\sffamily\bfseries\fontsize{12}{16}\selectfont}
    {\contentslabel[\textcolor{AppendixTOCAccent}{\thecontentslabel}]{1.8em}}{}
    {\titlerule*[0.65pc]{.}\contentspage}
    [\addvspace{4pt}]
  \titlecontents{subsection}[4em]
    {\addvspace{6pt}\normalfont\fontsize{11}{17}\selectfont}
    {\contentslabel[\textcolor{AppendixTOCAccent}{\thecontentslabel}]{2.4em}}{}
    {\titlerule*[0.6pc]{.}\contentspage}
  \setcounter{tocdepth}{2}
  \printcontents[appendices]{}{1}{}
  \endgroup
}

\newcommand{\Gobj}[1]{\bm{G}_{#1}} %
\newcommand{\gobj}[2]{\bm{g}_{#1}^{(#2)}} %
\newcommand{\Gr}{\Gobj{r}}         %
\newcommand{\Go}{\Gobj{o}}         %
\newcommand{\dsum}{\bm{d}_{\mathrm{sum}}}
\newcommand{\dcord}{\bm{d}_{\mathrm{CORD}}}
\newcommand{\gfinal}{\bm{g}_{\mathrm{final}}}
\newcommand{\Lgrpo}{\mathcal{L}_{\mathrm{GRPO}}}
\newcommand{\Lopd}{\mathcal{L}_{\mathrm{OPD}}}

\newcommand{\inner}[2]{\langle #1, #2 \rangle}
\newcommand{\norm}[1]{\lVert #1 \rVert}

\newcommand{\stdv}[1]{\ensuremath{\,\pm #1}}
\newcommand{\best}[1]{{\bfseries\boldmath #1}}

\newcommand{\corddeltaup}[1]{\textcolor{red!70!black}{\ensuremath{+#1}}}
\newcommand{\corddeltadown}[1]{\textcolor{darkgreen}{\ensuremath{-#1}}}

\newcommand{\cordaffiliationlogos}{%
  \noindent
  \raisebox{-0.5\height}{\cordcuhklogo{28pt}}%
  \hspace{22pt}%
  \raisebox{-0.5\height}{\cordtencentlogo{16pt}}%
  \hspace{22pt}%
  \raisebox{-0.5\height}{\cordpkulogo{28pt}}%
  \par
}

\title{CORD: Conflict-Aware Optimization\\
for Joint RL and Distillation in Agents}

\author{%
\begin{tabular}{@{}l@{}}
Wenhao Yu\textsuperscript{1},\enspace
Zhengcheng Wang\textsuperscript{2},\enspace
Jiahong Liu\textsuperscript{1},\enspace
Zexuan Qiu\textsuperscript{1},\enspace
Minda Hu\textsuperscript{1},\\[1pt]
Shaohang Wei\textsuperscript{3},\enspace
Jiafei Lyu\textsuperscript{2},\enspace
Junyan Lyu\textsuperscript{2},\enspace
Haobo Fu\textsuperscript{2},\enspace
Irwin King\textsuperscript{1} \\[6pt]
\normalfont\fontsize{10.5}{14}\selectfont
\textsuperscript{1}The Chinese University of Hong Kong
\quad \textsuperscript{2}Tencent \quad \textsuperscript{3}Peking University \\[3pt]
\normalfont\fontsize{10.5}{14}\selectfont
Email: \texttt{yuwenhao117@gmail.com}, \texttt{jensencwang@tencent.com}
\end{tabular}%
}

\usepackage{pgf}
\makeatletter
\providecommand{\cord@vectorlogo}[4]{%
  \resizebox{!}{#1}{%
    \begin{pgfpicture}%
      \pgfpathrectangle{\pgfpointorigin}{\pgfpoint{#2bp}{#3bp}}%
      \pgfusepath{use as bounding box}%
      \pgfsys@invoke{q 0 0 #2 #3 re W n #4 Q}%
    \end{pgfpicture}%
  }%
}
\providecommand{\cordcuhklogo}[1]{\cord@vectorlogo{#1}{247}{165}{\cord@cuhkpath}}
\providecommand{\cordtencentlogo}[1]{\cord@vectorlogo{#1}{109}{24}{\cord@tencentpath}}
\providecommand{\cordpkulogo}[1]{\cord@vectorlogo{#1}{80}{80}{\cord@pkupath}}

\def\cord@cuhkpath{%
q 0.1 0 0 0.1 0 0 cm 0.470703 0.141113 0.443115 rg 1206.99 1650 m 910.941 1650 694.879 1601.64
684.262 1599.26 c 669.57 1597.03 661.289 1583.75 661.289 1569.06 c 661.289 1548.95 667.109 1255
668.551 1159.22 c 671.129 991.41 688.629 853.32 740.039 736.41 c 694.879 742.422 627.422 688.949
632.539 647.699 c 634.34 633.52 649.262 627.891 659.449 619.879 c 684.18 600.699 711.641 575.27
730.309 547.031 c 724.648 485.082 708.871 430.082 675.621 376.559 c 671.289 369.57 667.969 364.922
665.898 362.27 c 624.18 368.91 583.441 373.551 547.621 373.551 c 486.879 373.551 433.59 362.809
395.512 352.148 c 357.191 341.371 329.02 328.121 y 332.77 296.641 333.051 262.73 v 314.141 266.879
268.59 274.648 192.699 274.648 c 91.4492 274.648 14.3398 250.98 5.85938 241.449 c -1.98828 232.578
0.308594 224.691 0.308594 214.34 c 0.308594 204.961 5.35156 56.4805 6.08984 54.5703 c 6.08984 8.19922
207.699 3.23828 224.691 3.23828 c 390.352 3.23828 560.699 39.1406 703.051 126.09 c 722.852 138.199
735.352 147.301 740.309 151.051 c 791.129 144.801 839.73 137.969 880.621 131.289 c 1000.74 111.879
1122.93 95.0781 1238.98 95.0781 c 1419.38 95.0781 1627.66 145.199 1695.74 161.879 c 1725.31 168.98 l
1768.32 129.02 1834.92 80.5508 1888.44 60.9805 c 1950.9 38.0898 2039.69 15.3516 2150.94 15.3516 c
2262.42 15.3516 2359.65 46.6406 2390.16 58.4805 c 2412.23 67.1094 2461.09 90.4297 2462.3 119.18 c
2466.17 139.531 2454.22 167.969 2448.44 187.539 c 2442.23 208.711 2442.73 240.781 2443.75 259.141 c
2444.45 271.641 2448.05 294.102 2440.59 305 c 2432.07 317.301 2388.95 320.699 2321.37 313.281 c
2285.9 309.41 2233.83 296.449 2190.59 284.34 c 2159.18 275.512 2132.46 267.23 2120.35 263.441 c
2116.45 287.34 2114.65 311.719 2120.2 338.199 c 2123.63 354.41 2123.79 365.898 2111.17 374.73 c
2100.94 381.879 2065.04 398.59 1973.4 398.59 c 1921.02 398.59 1865.39 391.211 1807.3 380.039 c
1794.77 377.621 l 1746.45 422.23 1721.41 489.219 1698.28 548.98 c 1724.77 596.559 1755.04 631.449
1800.74 672.578 c 1786.91 710.82 1740.39 726.68 1704.18 731.68 c 1694.34 733.051 1684.18 731.762
1675.31 729.691 c 1736.41 860.512 1762.03 1030.74 1768.12 1242.19 c 1770.86 1337.89 1760.47 1553.91
1760.47 1574.02 c 1760.47 1588.67 1752.42 1601.56 1737.93 1604.18 c 1726.8 1607.7 1592.34 1647.89
1206.99 1650 c h 732.578 721.328 m 738.789 721.371 744.379 720 748.949 716.988 c 774.77 663.441
808.16 614.449 851.129 569.02 c 843.32 570.352 835.352 570.898 827.578 570.66 c 801.988 569.879
775.43 545.039 758.98 526.559 c 741.719 568.398 704.059 605.16 669.059 632.301 c 667.77 633.32 651.25
644.73 648.441 647.852 c 641.801 676.129 697.031 721.129 732.578 721.328 c h 1695.23 715.66 m 1727.81
716.02 1765.59 701.328 1781.64 676.801 c 1737.34 635.859 1702.27 592.301 1674.77 538.359 c 1654.26
556.02 1626.84 572.301 1599.02 572.301 c 1583.95 572.301 1570.27 569.488 1558.67 563.48 c 1576.64
581.879 1594.06 601.602 1610.43 622.699 c 1631.37 649.488 1649.77 678.789 1665.9 710.43 c 1669.65
711.762 l 1677.38 714.301 1686.09 715.59 1695.23 715.66 c h 1599.02 556.602 m 1630.47 556.602 1659.02
532.891 1679.45 511.801 c 1682.5 518.359 1685.51 524.57 1688.55 530.66 c 1709.45 477.148 1732.54
422.66 1771.17 379.609 c 1773.05 377.5 1774.84 375.621 1776.48 373.91 c 1722.46 362.699 1668.79
349.609 1615.08 336.801 c 1599.18 355.16 1537.19 430 1537.19 486.801 c 1537.19 539.059 1563.09
556.602 1599.02 556.602 c h 831.602 555.039 m 849.488 555 865.941 550.551 877.73 542.539 c 881.328
539.219 884.922 535.898 888.551 532.621 c 891.25 529.141 893.281 525.309 894.531 521.172 c 910.43
468.789 887.578 400.199 865.23 352.191 c 860.078 340.98 855.352 332.148 852.148 326.41 c 849.059
327.031 l 799.922 336.449 741.172 349.57 683.199 359.449 c 691.449 371.41 705.23 394.379 721.129
432.539 c 730.941 456.051 737.23 485.82 741.02 509.301 c 741.879 514.379 742.621 519.18 743.199
523.629 c 744.301 521.129 l 747.301 513.871 753.16 494.141 y 769.262 515.16 800.66 554.141 828.051
554.961 c 829.219 555 830.43 555.039 831.602 555.039 c h 1532.85 538.281 m 1525.59 525.309 1521.48
508.281 1521.48 486.801 c 1521.48 449.922 1543.67 408.32 1564.65 376.988 c 1577.19 358.238 1589.73
342.5 1597.89 332.73 c 1482.34 305.469 1366.37 280.781 1246.21 280.781 c 1113.91 280.781 984.18
301.48 868.59 323.32 c 886.48 355.859 901.25 396.762 908.711 430.781 c 917.23 470 918.199 489.57
913.398 511.25 c 956.68 475.781 1007.3 442.582 1066.29 411.09 c 1148.87 367.031 1214.73 346.91 y
1277.15 367.109 1356.29 411.211 v 1410.35 441.332 1473.59 483.09 1532.85 538.281 c h 1973.4 382.891 m
2063.67 382.891 2095.82 366.328 2102.23 361.801 c 2106.29 358.949 2108.09 356.801 2104.84 341.41 c
2101.37 324.961 2098.71 288.871 2103.16 267.301 c 2106.72 250.121 2117.27 206.949 2118.63 178.398 c
2097.66 194.879 2051.41 223.16 1982.03 223.16 c 1942.19 223.16 1887.85 215.309 1836.56 206.41 c
1837.07 207.301 1837.15 208.012 y 1832.5 225.23 l 1830.12 234.219 1827.85 248.281 1826.84 260.121 c
1825.9 271.02 1825.43 276.719 1823.16 286.68 c 1826.68 287.66 1830.2 288.281 1833.95 288.52 c 1843.36
288.98 1848.05 286.129 1851.37 287.539 c 1854.65 288.98 1855.62 297.422 1847.58 299.801 c 1839.61
302.148 1836.8 300.699 1827.81 300.27 c 1822.19 299.961 1822.27 299.879 1820.08 299.18 c 1818.24
306.25 1816.48 312.66 1814.53 321.52 c 1811.68 334.262 1806.52 338.98 1802.73 337.621 c 1798.95
336.172 1797.58 333.789 1798.48 329.102 c 1799.41 324.379 1803.71 318.32 1805.55 308.32 c 1808.32
296.172 l 1798.32 293.672 1795.08 291.559 1789.06 289.379 c 1782.38 287.07 1778.63 282.379 1778.63
278.09 c 1778.63 273.871 1785.78 270.59 1790 271.949 c 1794.22 273.359 1795.66 276.211 1806.02
280.469 c 1811.37 282.621 l 1812.85 275.512 1814.1 268.238 1814.53 262.031 c 1815.51 246.91 1818.75
232.34 1819.73 223.789 c 1822.19 206.449 l 1823.2 205.43 1824.88 204.531 v 1811.8 202.191 1801.02
200.199 1788.91 197.809 c 1787.73 208.441 1785.51 226.68 1783.4 234.609 c 1780 247.379 1777.7 241.719
1776.25 243.121 c 1774.84 244.57 1773.48 246.48 1772.03 251.172 c 1770.62 255.898 1770.16 260.121
1764.02 260.121 c 1757.89 260.121 1753.63 260.59 1750.78 261.051 c 1749.41 261.289 1749.3 262.77
1749.45 264.18 c 1755.74 266.02 1761.8 268.551 1764.96 271.949 c 1771.56 279.02 1772.5 292.66 1768.71
301.211 c 1764.96 309.73 1758.83 316.371 1750.31 316.84 c 1741.84 317.301 1738.48 317.77 1738.52
318.48 c 1747.23 323.238 1753.16 329.77 1755.08 336.641 c 1757.38 345.121 1755.08 346.949 1752.7
348.441 c 1750.9 349.57 1750.2 351.68 1748.91 352.422 c 1823.05 369.262 1898.32 382.891 1973.4
382.891 c h 547.621 357.852 m 598.441 357.852 674.691 343.441 759.801 326.879 c 759.18 323.32 758.871
318.672 760.121 315.391 c 762.5 309.301 770.941 304.02 773.32 294.609 c 775.621 285.16 774.02 253.711
772.379 247.379 c 766.719 225.23 765.738 217.621 765.738 203.012 c 765.738 188.359 771.371 172.77
779.449 173.711 c 787.539 174.648 787.578 181.289 785.621 189.34 c 783.672 197.379 782.77 218.629
786.129 231.371 c 789.41 244.102 793.199 252.191 793.672 268.199 c 794.141 284.219 795.59 294.57
789.379 304.488 c 783.199 314.449 777.539 317.77 778.012 321.52 c 778.051 323.398 l 861.68 307.109
953.051 289.57 1053.12 277.621 c 1052.23 275.98 1051.6 273.91 1051.02 271.52 c 1048.63 266.289
1035.66 261.02 1034.45 247.379 c 1032.54 226.641 1042.46 223.238 1041.56 218.551 c 1040.59 213.871
1028.32 210.512 1030.2 194.488 c 1031.25 185.738 1034.69 180.512 1039.22 177.379 c 1037.73 173.98
1035.16 168.949 1031.6 165.699 c 1025.94 160.512 1019.34 153.441 1016.52 149.219 c 1013.67 144.922
1013.2 138.789 1017.97 136.949 c 1022.66 135.039 1025.94 137.422 1029.77 142.621 c 1033.52 147.77
1034.92 148.75 1038.2 154.379 c 1039.69 156.91 1041.95 159.57 1043.98 161.68 c 1043.4 153.441 l
1042.34 145.621 1039.69 136.449 1039.61 127.93 c 863.98 150.66 651.371 188.16 462.461 188.16 c
402.621 188.16 355.039 163.789 350.59 161.48 c 348.121 160.352 345.199 158.949 342.23 157.191 c
343.012 173.949 345.039 197.809 347.621 230.039 c 350.551 265.16 347.301 304.102 345.859 318.52 c
407.109 344.301 481.91 357.852 547.621 357.852 c h 1743.52 351.172 m 1743.55 350.898 1743.55 350.66 y
1742.54 349.141 1742.66 346.68 1742.34 343.238 c 1742.07 340.469 1739.77 336.25 1736.17 332.809 c
1735.2 334.73 l 1732.81 338.711 1729.1 343.711 1726.52 347.23 c 1742.85 351.02 l h 1716.95 344.961 m
1717.73 339.449 l 1718.63 336.762 1719.06 332.109 1719.06 327.23 c 1713.05 327.73 1710.27 329.57
1706.91 331.48 c 1702.62 333.789 1697.38 337.148 1696.45 331.48 c 1695.51 325.781 1695.12 325.352
1699.84 321.52 c 1702.77 319.141 1709.92 315.469 1717.85 314.219 c 1717.3 311.949 1716.64 310.309
1715.86 309.73 c 1712.58 307.422 1701.72 302.621 1699.84 296.48 c 1697.93 290.352 1700.31 271.09
1706.91 266.762 c 1713.16 262.699 1723.12 259.961 1728.09 258.281 c 1723.87 255.898 1718.2 252.621
1718.2 249.77 c 1718.2 246.91 1720.08 246.02 1726.21 246.91 c 1732.34 247.852 1738.09 250.23 1744.65
250.699 c 1751.25 251.172 1758.36 252.109 1754.57 249.262 c 1750.78 246.48 1743.63 242.66 1735.66
241.25 c 1727.62 239.879 1721.02 239.84 1717.27 237.461 c 1713.52 235.121 1710.66 233.711 1712.11
228.98 c 1713.52 224.262 1715.86 224.77 1723.4 226.641 c 1730.98 228.551 1737.62 231.84 1743.28
232.77 c 1748.95 233.711 1756.91 234.219 y 1751.8 226.172 1744.18 223.32 v 1736.6 220.512 1728.59
219.531 1725.74 219.02 c 1722.93 218.551 1722.46 214.34 1725.74 212.422 c 1729.06 210.551 1735.7
210.551 1743.28 213.398 c 1750.78 216.25 1756.45 218.629 1762.58 221.48 c 1768.71 224.262 1769.65
228.121 1772.97 220.98 c 1775 216.602 1777.38 203.59 1778.12 195.66 c 1736.29 187.23 1700.9 179.301
1691.99 177.109 c 1623.83 160.391 1417.27 110.781 1238.98 110.781 c 1187.34 110.781 1124.06 117.148
1055.74 125.738 c 1056.17 133.121 l 1056.88 142.07 1057.3 145.859 1060 150.121 c 1067.03 147.301
1066.56 139.262 1073.16 137.891 c 1079.77 136.48 1082.15 138.789 1081.17 146.371 c 1080.23 153.91
1075.55 158.121 1067.03 162.34 c 1063.63 164.102 1061.33 165.66 1059.73 166.91 c 1059.77 172.77 l
1064.1 172.73 1068.55 173.199 1072.19 174.879 c 1074.34 173.98 1083.16 170.23 1089.77 168.98 c
1097.27 167.621 1110.98 167.109 1117.07 169.449 c 1123.2 171.84 1129.88 181.762 1132.73 199.219 c
1135.55 216.68 1139.3 234.691 1137.42 246.91 c 1135.55 259.18 1132.62 263.012 1126.48 263.48 c
1120.43 263.98 1118.05 261.52 1118.52 256.371 c 1118.98 251.172 1120.39 247.379 1119.92 238.441 c
1119.02 223.98 l 1116.91 223.359 1114.34 223.012 1111.41 223.32 c 1103.87 224.262 1098.2 226.172
1095.39 224.77 c 1092.54 223.32 1091.13 216.641 1093.05 212.891 c 1094.92 209.141 1099.14 206.762
1107.66 206.289 c 1111.09 206.129 1114.53 206.719 1117.38 207.461 c 1116.45 199.961 1115.12 193.012
1113.32 188.828 c 1108.55 178.012 1103.4 178.98 1096.8 180.82 c 1091.33 182.422 1084.57 185.59
1080.04 186.641 c 1081.76 199.379 1077.34 206.988 1074.57 211.949 c 1071.8 217.109 1068.44 218.121
1073.63 223.789 c 1078.79 229.449 1082.62 239.879 1081.68 250.23 c 1080.7 260.59 1077.42 262.539
1074.57 267.73 c 1073.01 270.59 1072.3 273.441 1072.03 275.43 c 1129.14 269.18 1187.58 265.078
1246.21 265.078 c 1302.27 265.078 1363.83 271.949 1421.99 281.129 c 1421.64 280.121 1421.56 279.34 y
1422.15 275.898 1422.5 271.988 1421.99 269.41 c 1411.99 266.371 1400.94 260.621 1394.3 252.109 c
1386.25 241.719 1381.48 228.949 1381.02 217.621 c 1380.59 206.289 1381.02 198.75 1384.34 195 c
1387.62 191.25 1394.3 191.211 1395.7 195.469 c 1396.84 198.75 1395.66 203.672 1395.51 210.121 c
1400.08 209.301 1406.45 208.09 1411.25 206.762 c 1417.5 205.078 1424.22 201.051 1430.62 194.531 c
1422.62 178.98 1421.21 169.922 1420.7 159.57 c 1420.23 149.219 1421.21 141.719 1422.62 136.949 c
1423.98 132.148 1425.9 128.871 1430.12 130.27 c 1434.41 131.641 1436.37 135.512 1435.82 141.172 c
1435.31 146.84 1435.43 162.77 1437.27 173.711 c 1438.12 178.711 1439.92 183.551 1442.27 188.16 c
1451.21 180.352 1457.11 168.281 1461.8 161.02 c 1466.99 152.969 1469.84 138.32 1472.62 136.949 c
1475.51 135.512 1481.17 137.891 1481.17 142.109 c 1481.17 146.371 1479.77 149.219 1475.98 161.48 c
1472.46 172.621 1460.82 191.289 1453.32 203.012 c 1464.18 213.871 1475.55 224.77 1481.17 229.449 c
1484.69 232.621 l 1485.86 229.059 1486.64 225.66 1486.84 222.422 c 1487.3 214.34 1487.81 206.289
1494.41 206.762 c 1501.02 207.27 1502.07 214.77 1501.48 222.422 c 1500.55 234.219 1493.05 252.07
1485.9 260.121 c 1478.75 268.199 1460.35 272.891 1453.32 273.398 c 1448.24 273.711 1443.44 273.59
1437.07 272.621 c 1435.62 277.23 1435 276.48 1435.35 281.129 c 1435.23 282.148 1434.77 283.281 v
1519.14 297.23 1594.65 315.43 1631.37 324.219 c 1714.73 344.41 l h 676.09 328.16 m 675.121 328.199
674.18 328.16 673.238 328.121 c 658.59 327.66 658.121 321.988 658.121 317.301 c 658.121 312.578
661.879 310.23 668.012 309.73 c 674.141 309.262 696.879 308.871 709.609 307.422 c 722.301 305.941
727.969 303.551 732.23 303.09 c 736.48 302.621 738.871 307.422 739.34 311.172 c 739.77 314.922 734.57
321.52 722.301 322.969 c 710.82 324.301 690.59 328.09 676.09 328.16 c h 1745.08 307.031 m 1748.36
306.91 1752.81 305.121 1753.63 302.621 c 1754.57 299.801 1750.31 296.949 1745.12 296.48 c 1739.96
296.02 1737.54 297.422 1737.11 301.211 c 1736.64 305 1740.43 306.48 1743.75 306.949 c 1744.14 306.988
1744.61 307.031 1745.08 307.031 c h 696.211 303.16 m 692.809 303.051 691.129 300.898 691.129 291.762
c 691.129 280.941 686.91 253.551 685.039 236.051 c 683.121 218.551 681.719 204.879 681.719 194.02 c
681.719 183.199 682.66 178.48 688.789 178.48 c 694.922 178.48 698.711 184.141 698.711 196.879 c
698.711 209.648 701.09 239.801 703.91 256.371 c 706.762 272.891 708.672 282.852 708.199 290.352 c
707.73 297.891 703.09 302.621 698.238 303.09 c 697.5 303.16 696.84 303.199 696.211 303.16 c h 2383.63
301.41 m 2413.2 301.41 2426.41 297.809 2427.62 296.051 c 2428.67 294.531 2429.53 291.129 2429.45
284.18 c 2429.14 250.27 2423.52 216.68 2433.4 183.16 c 2436.33 173.16 2442.34 157.301 2445.39 143.09
c 2428.12 151.02 2401.84 150.59 2388.98 150.59 c 2266.13 150.59 2159.02 75.3516 2036.45 75.3516 c
1988.05 75.3516 1972.23 89.5703 1969.57 97.5 c 1969.1 98.9492 1968.52 102.148 v 1969.38 111.68 1976.6
122.27 1994.8 128.711 c 2027.03 140.121 2083.87 138.012 2120.35 144.801 c 2132.89 147.148 2135.31
159.449 2134.92 170.551 c 2134.1 197.539 2128.05 222.621 2123.2 247.852 c 2184.69 267.73 2260.39
290.898 2323.12 297.699 c 2348.2 300.43 2368.16 301.41 2383.63 301.41 c h 1723.59 300.98 m 1726.21
300.941 1727.27 299.379 1727.62 297.422 c 1728.12 295.121 1726.68 293.199 1722.93 291.762 c 1719.18
290.352 1715.35 288.012 1713.95 292.73 c 1712.58 297.422 1717.27 299.801 1721.48 300.699 c 1722.3
300.898 1722.97 300.98 1723.59 300.98 c h 670.352 291.371 m 670.039 291.371 669.73 291.328 669.449
291.289 c 664.73 290.82 660.98 291.289 659.102 278.551 c 657.191 265.82 650.039 240.859 647.73
229.449 c 645.352 218.09 642.969 208.672 644.41 203.012 c 645.82 197.34 649.609 194.961 654.84 197.34
c 660.039 199.691 660.512 204.488 663.789 220.512 c 667.07 236.52 671.02 251.988 675.121 266.762 c
681.289 289.34 675.039 291.559 670.352 291.371 c h 728.789 287.699 m 728.281 287.699 727.699 287.621
727.07 287.539 c 720.941 287.539 720 286.602 719.02 275.27 c 718.09 263.98 718.09 220 719.02 208.672
c 720 197.379 721.84 187.422 729.879 186.949 c 737.891 186.48 739.77 193.09 738.871 203.48 c 737.891
213.871 735.039 244.488 735.512 261.52 c 735.98 277.5 736.84 287.66 728.789 287.699 c h 1755.51
287.621 m 1757.19 287.66 1758.32 287.301 1758.32 285.621 c 1758.32 282.379 1758.36 280.469 1753.63
279.02 c 1748.95 277.621 1745.12 277.621 1744.65 280.469 c 1744.18 283.238 1746.09 287.07 1749.88
287.07 c 1751.76 287.07 1753.87 287.539 1755.51 287.621 c h 1718.24 279.449 m 1719.84 279.531 1721.99
279.02 1723.87 279.02 c 1727.66 279.02 1732.34 277.621 1732.34 275.738 c 1732.34 273.871 1729.53
270.98 1723.4 270.59 c 1717.27 270.078 1716.33 272.422 1715.86 276.719 c 1715.59 278.828 1716.68
279.41 1718.24 279.449 c h 1446.41 260.199 m 1462.07 260.352 1474.14 255.551 1477.81 248.32 c 1478.91
246.129 l 1475.74 242.621 1469.77 236.449 1461.33 229.449 c 1452.7 222.379 1445.66 216.211 1439.57
208.551 c 1426.09 214.41 1404.06 220.309 1396.45 222.191 c 1397.77 230.391 1400.35 239.379 1405.59
245.039 c 1410 249.73 1416.95 253.98 1425 256.719 c 1427.3 251.559 1428.24 244.102 1432.5 242.191 c
1436.8 240.352 1439.18 241.25 1439.18 246.48 c 1439.18 250.078 1439.92 254.18 1439.49 259.84 c
1441.88 260.078 1444.18 260.199 1446.41 260.199 c h 1057.03 259.648 m 1057.23 259.691 1057.42 259.648
1057.62 259.609 c 1060.43 259.141 1067.5 253.051 1067.03 246.91 c 1066.56 240.781 1064.22 232.77
1058.55 231.84 c 1052.93 230.859 1048.63 240.352 1048.63 247.379 c 1048.63 254.02 1054.02 259.41
1057.03 259.648 c h 192.699 258.949 m 272.5 258.949 317.699 250.27 332.809 246.719 c 332.66 241.48
332.422 236.328 332.031 231.289 c 328.441 187.539 326.129 159.02 326.129 142.621 c 326.129 118.121
353.59 111.371 373.828 111.371 c 394.879 111.371 426.559 112.5 452.461 112.5 c 459.84 112.5 497.5
112.422 497.5 99.1406 c 497.5 98.0898 497.34 97.1484 497.031 96.25 c 492.301 80.8984 450.781 80.0781
441.879 80.0781 c 401.949 80.0781 266.91 111.719 158.949 111.719 c 100.941 111.719 45.5898 97.1484
20.3516 77.4609 c 18.9805 118.398 16.0195 207.191 16.0195 214.34 c 16.0195 219.102 14.1406 227.07
17.6211 231.02 c 20.6992 234.41 90.0391 258.949 192.699 258.949 c h 1452.23 249.77 m 1449.49 249.648
1448.28 232.301 1433.95 231.371 c 1419.3 230.391 1417.38 238.441 1416.48 239.879 c 1415.55 241.25
1408.48 244.57 1408.48 238.441 c 1408.48 232.301 1417.34 220.512 1433.95 222.422 c 1450.47 224.262
1455.08 233.871 1457.11 239.379 c 1460.35 248.32 1455.2 249.301 1452.38 249.77 c 1452.34 249.77
1452.3 249.77 1452.23 249.77 c h 1055.23 209.141 m 1059.02 209.141 1063.75 202.539 1066.13 198.32 c
1068.44 194.02 1065.16 185.078 1059.02 184.141 c 1052.89 183.199 1046.29 187.422 1045.82 195.941 c
1045.35 204.379 1051.48 209.141 1055.23 209.141 c h 1982.03 207.461 m 2028.12 207.461 2075.82 193.359
2111.68 163.91 c 2113.48 162.422 2114.96 161.09 2116.17 160 c 2094.77 156.129 2071.8 155.551 2050.35
153.281 c 2027.27 150.898 1986.41 148.629 1965.62 129.77 c 1965.35 134.879 1965.23 140.66 1965.23
147.191 c 1965.23 164.922 1969.53 180.121 1973.79 190.98 c 1977.5 200.27 1981.21 206.211 1982.03
207.461 c h 1963.67 207.148 m 1954.57 189.102 1949.53 165.551 1949.53 147.191 c 1949.53 124.262
1951.25 109.57 1952.73 101.25 c 1952.73 100.078 l 1952.73 77.7305 1982.81 59.6484 2036.45 59.6484 c
2159.45 59.6484 2266.33 134.879 2388.98 134.879 c 2397.85 134.879 2446.41 135.699 2446.41 120.078 c
2446.41 113.949 2440.35 105.391 2427.34 95.8594 c 2365 50.1602 2215.51 31.0508 2150.94 31.0508 c
2041.88 31.0508 1954.96 53.3203 1893.83 75.6602 c 1833.91 97.6602 1768.87 151.52 1744.02 173.32 c
1808.2 187.93 1893.01 204.57 1963.67 207.148 c h 462.461 172.461 m 477.578 172.461 493.91 172.109
511.172 171.371 c 510.352 157.031 508.199 134.57 504.379 117.191 c 490.98 127.578 465.781 128.199
452.461 128.199 c 426.52 128.199 394.879 126.988 373.828 126.988 c 355.98 126.988 345.941 131.91
343.398 138.441 c 346.051 141.641 350.551 143.98 354.18 145.82 c 355.859 146.68 357.34 147.301 y
386.719 163.121 429.691 172.461 462.461 172.461 c h 526.84 170.66 m 585.078 167.699 652.23 161.41
717.191 153.871 c 711.328 149.879 703.871 145.039 694.879 139.531 c 554.961 54.0586 387.422 18.9102
224.691 18.9102 c 166.641 18.9102 114.488 24.4492 76.9102 32.5781 c 68.6719 34.3789 27.4609 44.0586
21.7188 54.5703 c 21.7188 66.4805 73.6289 96.0195 158.949 96.0195 c 266.91 96.0195 401.949 64.3789
441.879 64.3789 c 461.051 64.3789 503.441 66.6406 511.762 90.3516 c 518.48 102.93 522.109 124.219
524.18 141.211 c 525.66 153.238 526.449 164.18 526.84 170.66 c h 526.84 170.66 m f 0.835938 0.611328
0.141113 rg 1207.93 1633.05 m 905.551 1633.05 691.328 1583.52 y 680.039 1580.43 678.238 1574.84
678.238 1569.06 c 678.238 1548.95 684.02 1255.2 685.512 1159.49 c 690.859 809.922 759.961 593.711
1074.26 426.051 c 1144.14 388.789 1199.96 369.531 1213.83 365 c 1213.83 558.051 l 1182.77 558.051 l
1187.77 550.27 1187.77 539.02 v 1187.77 527.73 1174.61 509.691 1163.32 501.211 c 1152.03 492.73
1132.3 493.441 1124.1 495.551 c 1115.86 497.66 1119.18 502.66 1128.32 506.879 c 1137.5 511.09 1147.11
513.91 1147.11 530.551 c 1147.11 547.27 1118.44 556.602 1099.41 556.602 c 1080.39 556.602 1079.22
545.352 1079.22 523.051 c 1079.22 500.738 1086.99 476.801 1093.83 466.91 c 1100.59 457.07 1102.97
452.148 1095.2 454.262 c 1087.46 456.371 1071.52 465.551 1058.83 482.191 c 1046.13 498.832 1038.63
529.84 1038.63 546.801 c 1038.63 563.711 1037.93 563.711 1030.86 564.379 c 1023.83 565.082 1017.5
563.711 1009.02 563.711 c 1000.59 563.711 995.859 555.941 995.859 536.211 c 995.859 516.48 1005.51
508.238 1011.13 504.02 c 1016.8 499.77 1016.8 493.441 1004.1 494.879 c 991.449 496.289 982.5 509.691
973.359 521.68 c 964.219 533.629 959.219 559.449 971.25 583.91 c 983.16 608.32 1032.3 608.52 1039.34
609.961 c 1046.37 611.371 1043.55 621.02 1043.55 627.34 c 1043.55 646.09 1063.75 672.148 1092.38
677.809 c 1105.94 680.469 1120.59 685.551 1120.59 679.922 c 1120.59 674.301 1114.96 670.738 1104.38
664.41 c 1093.83 658.051 1086.33 647.07 1081.33 636.48 c 1076.45 625.898 1078.55 609.961 1084.22
605.699 c 1089.8 601.48 1118.44 597.93 1142.89 616.762 c 1167.3 635.551 1177.19 700.82 1177.19 711.41
c 1177.19 721.949 1176.45 724.77 1169.69 724.77 c 1162.85 724.77 1126.21 716.289 1075 716.289 c
959.488 716.289 914.41 801.09 913.949 810.23 c 913.48 819.379 936.91 831.129 951.051 836.09 c 965.121
841.02 988.16 836.09 997.301 832.539 c 1006.45 829.02 1001.99 839.57 y 990.941 849.449 966.988
849.449 v 933.871 849.449 884.34 818.012 855.66 799.691 c 827.031 781.371 801.719 747.77 802.379
736.02 c 803.09 724.301 803.75 714.18 798.16 714.18 c 792.5 714.18 775.16 742.148 775.16 775.98 c
775.16 809.77 789.02 831.129 805.941 838.91 c 822.809 846.68 872.379 864.961 900.551 873.871 c 928.75
882.809 947.5 892.93 973.359 893.59 c 999.141 894.301 1007.62 883.711 1013.95 880.898 c 1020.27
878.121 1025.94 876.68 1025.94 884.449 c 1025.94 892.191 1016.8 897.852 996.602 902.07 c 976.41
906.289 936.91 901.371 915.078 896.41 c 893.281 891.48 895.59 887.27 887.891 886.559 c 880.121 885.82
874.488 887.93 872.379 901.371 c 870.27 914.77 859.219 984.73 859.219 1018.32 c 859.219 1094.65
888.551 1096.02 888.551 1112.93 c 888.551 1129.84 870.469 1126.29 844.41 1133.12 c 818.359 1139.92
808.75 1158.48 804.488 1167.66 c 800.27 1176.8 793.949 1176.56 784.059 1173.28 c 774.219 1170 760.352
1164.14 750.469 1149.34 c 746.988 1144.1 744.879 1135.94 740.199 1135.94 c 735.469 1135.94 733.828
1143.71 738.75 1160.62 c 743.672 1177.54 762.5 1195.59 772.301 1206.88 c 782.191 1218.12 784.059
1217.42 782.891 1234.1 c 781.719 1250.78 768.828 1452.73 761.051 1496.88 c 758.359 1512.19 755.391
1509.53 746.949 1508.87 c 738.551 1508.12 727.5 1502.5 727.5 1489.14 c 727.5 1475.7 733.828 1475.27
738.75 1471.76 c 743.672 1468.2 753.281 1473.16 753.281 1467.54 c 753.281 1461.88 741.328 1454.84
729.609 1454.84 c 717.852 1454.84 694.801 1465.39 694.801 1484.84 c 694.801 1509.06 698.75 1522.23
711.949 1533.4 c 725.121 1544.53 760.43 1553.67 782.73 1553.67 c 799.961 1553.67 809.648 1540.86
813.32 1503.91 c 817.031 1466.95 826.172 1259.84 828.051 1230.27 c 829.879 1200.74 839.262 1193.28
859.531 1184.96 c 879.84 1176.64 890.039 1177.58 889.102 1192.38 c 888.199 1207.19 855.469 1213.01
855.469 1247.15 c 855.469 1268.67 863.199 1271.09 867.621 1271.72 c 872.07 1272.3 868.98 1259.92
872.699 1251.41 c 876.371 1242.85 883.051 1233.12 894.02 1233.12 c 905 1233.12 905 1242.7 904.379
1260.59 c 903.789 1278.4 892.191 1402.46 894.648 1431.25 c 897.07 1460.12 913.52 1470.27 943.551
1477.15 c 973.629 1484.02 1021.72 1492.58 1041.45 1490.74 c 1061.09 1488.95 1060.47 1478.98 1061.09
1466.21 c 1061.68 1453.4 1058.05 1443.63 1064.77 1443.63 c 1071.45 1443.63 1078.16 1444.1 1078.75
1434.96 c 1079.38 1425.82 1080.12 1416.02 1092.34 1413.59 c 1104.49 1411.17 1105.12 1420.16 1105.78
1430.66 c 1106.37 1441.21 1106.37 1447.89 1104.49 1458.87 c 1102.7 1469.84 1095.98 1497.42 1110.62
1509.65 c 1125.23 1521.8 1136.84 1531.13 1157.73 1534.18 c 1178.59 1537.23 1184.69 1537.97 1184.69
1528.67 c 1184.69 1519.38 1186.52 1495.59 1188.98 1490.16 c 1191.41 1484.65 1198.12 1483.44 1202.97
1484.65 c 1207.85 1485.86 1206.64 1494.38 1207.27 1507.81 c 1207.85 1521.21 1207.27 1555.55 y 1196.48
1558.55 1181.05 1555.55 v 1165.66 1552.46 1141.88 1540.27 1137.58 1541.48 c 1133.36 1542.7 1144.92
1552.46 1151.02 1556.13 c 1157.11 1559.77 1184.1 1576.41 1194.45 1583.12 c 1204.84 1589.8 1207.27
1597.15 1207.85 1602.62 c 1208.36 1606.95 1208.05 1625.47 1207.93 1633.05 c 1209.77 630 m 1202.5
623.281 1195.78 619.379 v 1189.06 615.43 1182.38 607.852 1181.8 603.91 c 1181.21 600 1186.29 601.129
1193.01 600.551 c 1199.69 600 1212.54 599.102 1213.67 608.398 c 1214.8 617.73 1209.77 630 y 1208.63
875 m 1176.17 875.59 1150.12 875 v 1124.02 874.41 1100.47 875.59 1083.36 872.191 c 1066.21 868.828
1050.9 856.172 1050.9 828.199 c 1050.9 800.199 1062.62 777.699 1096.05 777.699 c 1129.41 777.699
1148.44 790.898 y 1126.8 791.48 1109.45 795.391 v 1092.15 799.301 1085.59 807.109 1085.59 819.449 c
1085.59 831.762 1094.34 839.379 1102.19 839.379 c 1110 839.379 1208.63 842.73 y h 935.121 1100.04 m
917.23 1100.04 888.359 1081.56 888.359 1053.83 c 888.359 1026.02 923.398 1016.52 936.25 1016.52 c
949.102 1016.52 983.59 1032.54 983.59 1056.56 c 983.59 1080.66 956.949 1100.04 935.121 1100.04 c
1202.5 1176.29 m 1196.91 1175.2 1171.91 1154.49 1166.88 1133.44 c 1161.84 1112.34 1163.55 1080.12
1165.78 1069.49 c 1167.97 1058.87 1176.17 1057.15 1186.29 1056.05 c 1196.33 1054.92 1208.63 1056.05
1209.22 1061.05 c 1209.77 1066.09 1209.22 1111.8 y 1199.14 1110.66 1192.42 1101.13 v 1185.7 1091.64
1180.66 1082.15 1180.08 1089.41 c 1179.57 1096.68 1177.85 1111.45 1184.57 1126.88 c 1191.29 1142.38
1197.42 1152.46 1204.73 1160.82 c 1212.03 1169.22 1208.12 1177.42 1202.5 1176.29 c 956.91 1430.08 m
948.199 1429.3 941.02 1429.3 941.02 1414.22 c 941.02 1399.1 944.18 1341.33 944.18 1316.95 c 944.18
1292.66 942.07 1280.7 952.148 1279.14 c 962.191 1277.54 976.52 1282.3 976.52 1289.41 c 976.52 1296.6
967.73 1313.01 977.301 1324.96 c 986.84 1336.84 998.52 1340.2 1006.33 1340.2 c 1014.14 1340.2 1016.45
1337.97 1015.86 1332.34 c 1015.27 1326.76 1008.91 1331.25 1004.1 1325.62 c 999.262 1320.04 995.121
1304.02 1010.23 1301.8 c 1025.35 1299.57 1024.77 1306.6 1027.07 1312.81 c 1029.26 1318.95 1029.26
1332.34 y 1048.48 1318.55 1046.41 1295.62 v 1044.41 1272.66 1030.39 1262.85 1024.77 1260.04 c 1019.22
1257.23 1018.09 1249.41 1020.35 1243.83 c 1022.58 1238.2 1027.58 1233.71 1033.79 1237.11 c 1039.88
1240.43 1062.62 1257.03 1067.7 1263.75 c 1072.73 1270.47 1072.15 1276.05 1076.64 1272.66 c 1081.13
1269.34 1083.79 1255.55 1080.55 1232.62 c 1075.55 1196.45 1028.16 1146.88 991.449 1146.88 c 962.34
1146.88 954.18 1184.73 954.18 1200.94 c 954.18 1217.19 961.172 1234.84 y 981.02 1189.14 993.121
1189.14 v 1005.23 1189.14 1001.52 1207.97 997.621 1219.77 c 993.629 1231.48 979.488 1249.96 965.699
1249.96 c 951.879 1249.96 925.039 1223.09 925.039 1185.82 c 925.039 1148.52 926.719 1121.88 940.738
1121.88 c 954.691 1121.88 986.211 1130.86 1005.23 1127.46 c 1024.26 1124.06 1043.83 1117.93 1043.28
1111.25 c 1042.7 1104.53 1008.01 1094.65 1008.01 1037.77 c 1008.01 980.941 1045.86 919.57 1147.3
919.57 c 1209.77 919.57 l 1209.77 1005.31 l 1151.76 1005.31 l 1133.91 1005.31 1103.87 1019.3 1103.87
1067.77 c 1103.87 1116.25 1098.91 1134.49 1106.09 1154.69 c 1120.62 1195.35 1162.73 1214.69 1177.85
1225.9 c 1193.01 1237.11 1200.82 1238.79 1205.31 1247.15 c 1209.77 1255.55 1208.63 1272.15 1208.63
1283.36 c 1208.63 1391.45 l 1194.61 1388.12 1188.48 1383.63 v 1182.38 1379.18 1176.17 1373.59 y
1175.66 1384.57 1180.66 1392.93 v 1185.7 1401.33 1186.8 1407.54 1186.8 1416.45 c 1186.8 1425.39
1185.39 1443.09 1173.05 1443.09 c 1165.86 1443.09 1160.39 1443.52 1157.34 1438.59 c 1155.2 1435.08
1154.73 1427.5 1154.02 1420.94 c 1152.34 1405.23 1156.84 1383.44 1156.25 1365.7 c 1155.66 1348.05
1149.53 1315 1111.09 1315 c 1072.73 1315 1063.75 1329.22 1062.11 1339.65 c 1060.43 1350.08 1056.88
1349.69 1050.31 1349.18 c 1043.83 1348.59 1034.88 1346.72 1034.3 1358.48 c 1033.79 1370.2 1031.48
1396.33 1029.84 1409.73 c 1028.16 1423.16 1025.35 1435.47 1018.09 1435.47 c 1010.82 1435.47 1006.33
1431.56 1006.33 1421.48 c 1006.33 1411.45 1009.92 1389.8 998.121 1384.22 c 986.41 1378.59 980.82
1380.31 971.879 1385.31 c 962.891 1390.35 951.879 1402.46 y 962.34 1401.91 970.199 1407.54 v 978.012
1413.09 979.691 1419.26 975.781 1424.84 c 971.879 1430.47 965.621 1430.9 956.91 1430.08 c 837.07
980.941 m 846.02 980.941 849.34 968.629 849.922 958.52 c 850.512 948.48 852.73 938.012 847.699
931.879 c 842.621 925.699 835.391 930.199 817.109 926.289 c 798.871 922.34 776.449 921.25 769.73
916.172 c 763.012 911.172 758.012 900.16 753.52 900.16 c 749.059 900.16 747.379 919.57 749.57 936.371
c 751.879 953.121 755.199 964.102 770.898 970.82 c 786.52 977.539 828.09 980.941 837.07 980.941 c
935.121 1081.8 m 952.461 1081.8 960.078 1071.17 960.078 1059.96 c 960.078 1052.66 946.879 1037.77
935.121 1037.77 c 923.398 1037.77 911.25 1042.27 911.25 1057.15 c 911.25 1072.07 923.398 1081.8
935.121 1081.8 c 1529.73 1023.2 m 1511.21 1005.98 1473.95 972.621 1466.41 965.512 c 1458.83 958.32
1457.97 957.77 1457.97 953.16 c 1457.97 948.52 1460.82 944.77 1466.72 944.77 c 1472.58 944.77 1479.14
946.48 1479.14 952.77 c 1479.14 959.059 1473.95 954.879 1473.95 959.141 c 1473.95 963.48 1475.23
965.512 1479.57 965.512 c 1483.83 965.512 1488.4 961.25 1488.4 953.16 c 1488.4 945.078 1474.8 939.262
1465.55 939.262 c 1451.21 939.262 1446.6 944.34 1446.6 952.34 c 1446.6 960.309 1454.18 969.691 1460.9
978.551 c 1467.66 987.379 1475.23 995.352 1475.23 1005.98 c 1475.23 1021.64 1457.46 1035.55 1442.77
1035.55 c 1428.01 1035.55 1428.44 1026.6 y 1444.49 1025.74 1444.49 1004.73 v 1444.49 995.469 1436.95
986.09 1429.3 986.09 c 1421.56 986.09 1410.74 992.809 1409.45 999.18 c 1408.24 1005.47 1407.23
1015.12 1414.8 1026.02 c 1422.42 1036.95 1438.63 1049.45 1449.57 1052.77 c 1460.51 1056.17 1469.77
1050.7 1479.57 1043.01 c 1489.34 1035.31 1494.26 1029.14 y 1497.23 1040.08 1496.8 1049.84 v 1496.37
1059.65 1483.71 1071.76 y 1488.79 1073.79 1495.08 1073.4 v 1501.45 1072.97 1508.16 1064.14 1514.45
1062.03 c 1520.78 1059.96 1533.52 1060.39 y 1529.73 1075.51 1525.51 1081.02 v 1521.33 1086.45 1514.88
1087.7 1514.88 1091.21 c 1514.88 1094.65 1518.67 1099.18 1523.36 1099.61 c 1528.12 1100.04 1530.55
1093.67 1536.02 1092.89 c 1541.52 1092.03 1554.49 1081.02 y 1555.78 1094.1 1552.46 1102.15 v 1549.06
1110.12 1545.27 1111.41 1544.84 1115.59 c 1544.41 1119.8 1545.27 1122.34 1548.2 1122.34 c 1551.17
1122.34 1558.75 1121.48 1565.04 1116.84 c 1571.37 1112.23 1573.16 1100.47 y 1583.28 1111.8 1582.85
1119.38 v 1582.42 1126.95 1574.45 1129.88 1570.2 1134.65 c 1565.98 1139.38 1567.66 1141.8 1571.48
1146.8 c 1575.23 1151.88 1581.56 1147.23 1586.17 1145.55 c 1590.82 1143.91 1597.54 1133.79 y 1600.08
1143.91 1598.79 1151.05 v 1597.54 1158.16 1592.11 1156.48 1591.25 1164.53 c 1590.39 1172.5 1597.11
1175.98 y 1600.08 1169.53 1606.8 1165.78 v 1613.55 1161.99 1621.64 1163.24 y 1620.39 1175.59 1616.56
1182.7 v 1612.81 1189.84 1608.91 1189.84 1608.91 1194.49 c 1608.91 1199.1 1615.35 1201.21 1620.39
1197.38 c 1625.47 1193.63 1632.58 1185.66 y 1635.98 1198.24 1634.69 1206.72 v 1633.44 1215.12 1626.29
1219.8 1625.86 1224.45 c 1625.47 1229.06 1630.9 1229.92 1635.98 1228.2 c 1640.98 1226.56 1647.34
1220.66 y 1648.98 1228.2 1646.48 1236.25 v 1643.95 1244.22 1635.55 1248.05 y 1639.3 1255.2 1643.12
1263.67 v 1646.91 1272.19 1646.48 1272.11 1648.55 1278.44 c 1650.66 1284.73 1655 1292.3 1656.25
1299.88 c 1657.54 1307.46 1663.4 1322.27 y 1667.62 1285.16 1662.58 1247.19 v 1657.54 1209.18 1632.5
1148.91 1607.23 1114.3 c 1581.99 1079.73 1548.2 1040.51 1529.73 1023.2 c 1395.66 669.531 m 1389.69
669.531 1387.46 663.59 1384.1 651.801 c 1380.74 640.078 1369.77 615.66 1345.12 608.949 c 1320.51
602.23 1281.02 598.91 1261.02 597.23 c 1241.09 595.512 1223.55 594.922 y 1223.55 561.559 l 1266.09
555.98 1293.28 555.98 v 1320.51 555.98 1378.28 560.82 1396.76 584.301 c 1415.23 607.852 1414.69
630.199 1412.42 646.211 c 1410.2 662.27 1401.56 669.531 1395.66 669.531 c 1223.16 1584.14 m 1219.3
1583.71 1216.68 1580.7 1216.68 1574.69 c 1216.68 1568.63 1216.72 1567.34 1225.27 1563.01 c 1233.91
1558.71 1245.47 1555.31 1247.66 1538.95 c 1249.77 1522.62 1248.91 1508.79 1236.48 1494.65 c 1224.02
1480.47 1213.67 1477.81 y 1213.67 1403.44 l 1240.31 1416.72 1253.2 1430.94 v 1266.13 1445.2 1285.08
1483.01 1285.08 1510.55 c 1285.08 1538.09 1266.09 1567.81 1251.91 1575.51 c 1237.77 1583.28 1227.03
1584.57 1223.16 1584.14 c 1596.05 1594.02 m 1565.51 1594.02 1474.8 1558.32 1436.95 1532.54 c 1399.1
1506.68 1383.55 1477.89 1371.99 1440.47 c 1360.35 1403.05 1350.9 1356.6 1324.26 1326.91 c 1297.54
1297.19 1263.98 1270.08 1245.9 1259.34 c 1227.89 1248.63 1213.24 1242.58 y 1213.24 1188.4 l 1298.83
1214.69 1333.67 1224.57 v 1368.55 1234.45 1441.17 1255.23 1469.57 1267.54 c 1519.02 1289.06 1540.12
1320.9 y 1493.24 1305.35 1464.41 1297.62 v 1435.59 1289.92 1406.88 1286.48 1398.63 1286.05 c 1390.43
1285.62 1388.24 1289.49 y 1404.65 1328.59 l 1407.23 1334.61 1409.84 1335.47 1417.97 1336.37 c 1426.09
1337.23 1470.82 1348.91 1502.27 1363.91 c 1559.06 1390.98 1578.4 1424.06 y 1524.92 1406.52 1495
1394.45 v 1446.84 1375.08 1416.29 1372.46 y 1415.31 1390.2 1426.99 1403.44 v 1438.63 1416.76 1450.2
1425.74 1466.13 1430.94 c 1482.07 1436.17 1528.55 1458.52 1548.32 1469.69 c 1568.05 1480.86 1587.46
1495.55 1596.05 1505.86 c 1604.61 1516.17 1609.38 1526.52 y 1574.96 1506.25 1535.43 1493.79 v 1495.82
1481.33 1468.71 1472.7 1459.69 1468.44 c 1450.7 1464.1 1445.08 1458.12 y 1454.53 1499.88 1495.82
1523.91 v 1537.11 1547.97 1557.3 1561.76 1584.41 1573.4 c 1611.56 1584.96 1636.48 1589.69 y 1613.67
1594.02 1596.05 1594.02 v 1272.81 1171.8 m 1254.3 1171.8 1224.18 1168.75 1219.06 1165.74 c 1213.91
1162.66 1214.57 1159.02 1213.91 1152.93 c 1213.36 1146.88 1213.91 1121.64 y 1216.6 1120.43 1223.32
1120.43 v 1230 1120.43 1237.03 1122.85 1252.5 1121.05 c 1267.93 1119.22 1281.95 1121.91 1281.95
1096.76 c 1281.95 1071.52 1270.16 1067.89 1253.71 1061.17 c 1237.3 1054.53 1224.18 1053.28 1220.27
1053.28 c 1216.37 1053.28 1216.99 1050.51 1216.99 1045.35 c 1216.99 1015.94 l 1216.99 1012.93 1218.48
1010.47 1225.78 1010.47 c 1233.05 1010.47 1282.54 1011.68 1291.68 1015.35 c 1300.82 1018.98 1319.34
1043.91 1319.34 1084.92 c 1319.34 1125.94 1315.94 1141.99 1307.19 1154.8 c 1298.4 1167.54 1291.33
1171.8 1272.81 1171.8 c 1698.4 839.73 m 1693.36 841.949 1674.02 844.531 1660.23 848.359 c 1644.02
852.809 1638.05 865.121 y 1661.33 931.879 1664.14 940.82 v 1666.95 949.801 1667.46 946.988 1676.45
950.309 c 1685.39 953.711 1702.54 967.461 1721.02 987.66 c 1726.25 993.359 1730 998.789 1732.7
1003.67 c 1734.96 1020.43 l 1729.02 1015.39 1718.05 1006.8 1708.12 1001.95 c 1693.59 994.879 1675.35
993.199 y 1684.3 1040.04 1686.52 1066.09 v 1688.79 1092.19 1691.52 1110.12 y 1708.71 1129.14 1717.11
1141.8 v 1725.47 1154.49 1732.19 1170.9 y 1720.43 1166.99 1711.52 1160.27 v 1702.54 1153.59 1700.31
1149.65 1694.34 1150.2 c 1688.4 1150.78 1694.34 1165.31 1694.34 1183.55 c 1694.34 1201.84 1696.6
1248.83 1698.09 1260.59 c 1699.49 1272.34 1702.54 1273.24 1715.43 1287.27 c 1728.28 1301.21 1733.87
1315 1738.36 1329.57 c 1742.81 1344.14 1743.75 1352.27 1740.35 1353.44 c 1737.03 1354.57 1734.96
1346.37 1725.47 1337.42 c 1715.94 1328.44 1705.9 1325.12 y 1704.22 1357.89 1705.9 1378.01 v 1707.58
1398.2 1721.02 1398.01 1731.64 1416.99 c 1742.27 1436.02 1737.77 1463.24 y 1719.92 1440.9 1712.03
1440.31 v 1704.22 1439.77 1705.9 1446.99 1706.45 1457.62 c 1707.03 1468.32 1707.58 1514.53 1705.9
1542.3 c 1704.22 1570.08 1679.26 1585.78 y 1682.07 1535.04 1678.09 1501.68 v 1674.18 1468.32 1672.54
1452.62 1662.46 1453.2 c 1652.34 1453.71 1615.66 1489.34 y 1606.37 1475.55 1613.4 1433.79 v 1617.15
1411.72 1630.2 1401.33 1636.33 1401.91 c 1642.54 1402.46 1652.58 1423.16 1657.97 1421.48 c 1663.36
1419.8 1662.46 1399.65 1656.84 1376.41 c 1651.25 1353.09 1638.59 1334.61 1633.55 1332.93 c 1628.52
1331.25 1623.48 1335.16 1612.85 1347.46 c 1602.23 1359.8 1587.34 1392.03 y 1578.95 1375.82 1578.95
1338.55 v 1578.95 1320.23 1588.44 1309.41 1592.93 1305.12 c 1597.38 1300.9 1600.78 1312.23 1610.59
1312.81 c 1620.47 1313.32 1621.8 1297.3 1617.3 1286.68 c 1612.85 1276.05 1601.29 1254.45 1596.84
1253.32 c 1592.34 1252.23 1588.98 1255.55 1577.23 1267.11 c 1565.51 1278.63 1551.72 1318.4 y 1537.19
1281.09 1537.19 1251.05 v 1537.19 1237.77 1542.77 1225.31 1547.81 1223.67 c 1552.81 1221.99 1555.08
1232.07 1562.34 1230.9 c 1569.61 1229.8 1568.32 1221.41 1564.02 1210.98 c 1559.73 1200.59 1529.34
1165.9 1523.16 1165.31 c 1517.03 1164.73 1513.32 1172.03 1506.05 1189.14 c 1498.75 1206.33 1493.12
1228.12 1492.62 1237.11 c 1492.03 1246.02 1488.71 1248.32 1481.99 1244.34 c 1475.27 1240.43 1456.45
1202.03 1454.18 1190.27 c 1451.95 1178.52 1455.27 1160.82 1460.9 1154.1 c 1466.48 1147.38 1473.79
1150.78 1479.38 1151.88 c 1485 1153.01 1489.22 1145.16 1487.58 1135.08 c 1485.9 1125.04 1442.46
1096.68 y 1440.23 1098.32 1432.54 1110.66 v 1424.96 1122.97 1413.52 1143.48 1409.61 1166.41 c 1405.7
1189.38 1411.88 1216.02 y 1401.21 1208.75 1393.4 1196.45 v 1385.55 1184.14 1379.06 1167.54 1375.66
1150.2 c 1372.34 1132.85 1374.57 1128.05 1376.25 1116.84 c 1377.97 1105.62 1376.84 1103.4 1381.29
1102.3 c 1385.78 1101.13 1393.4 1105.12 1401.8 1103.95 c 1410.2 1102.81 1409.61 1080.12 1403.48
1073.4 c 1397.3 1066.68 1388.59 1061.05 1371.76 1049.84 c 1355 1038.71 1359.45 1018.79 1361.13
991.559 c 1362.85 964.34 1358.91 952.578 1330.59 937.461 c 1302.27 922.34 1238.67 917.34 1227.46
915.078 c 1216.25 912.852 1218.75 911.949 1217.62 902.422 c 1216.45 892.93 1217.03 882.809 1219.65
880.59 c 1222.23 878.32 1244.26 877.809 1272.03 880.59 c 1299.8 883.398 1333.4 889.02 1349.92 899.059
c 1366.52 909.141 1376.25 916.172 1381.29 915.66 c 1386.37 915.078 1374.57 899.648 1356.64 883.949 c
1338.79 868.281 1322.19 855.59 1294.96 847.23 c 1267.77 838.828 1219.65 837.73 y 1219.65 649.609 l
1230.27 653.52 1239.26 661.719 v 1248.16 669.879 1260.51 695.27 1265.31 709.059 c 1270.12 722.852
1274.8 726.988 1291.64 729.801 c 1308.4 732.578 1331.68 739.84 1353.32 755.859 c 1374.92 771.879
1388.01 794.801 1397.3 817.23 c 1406.6 839.57 1410.2 855.078 1415.82 865.121 c 1421.41 875.199
1420.82 872.77 1432.03 882.301 c 1443.24 891.801 1453.63 898.16 1473.2 914.18 c 1492.81 930.199
1509.96 945.898 y 1513.32 941.41 1517.23 933.551 v 1521.17 925.699 1526.56 916.172 y 1535.47 926.84
1536.05 939.102 v 1536.6 951.449 1530.47 967.461 y 1538.87 965.23 1543.32 959.102 v 1547.81 952.93
1554.49 942.5 y 1558.98 954.219 1558.98 963.551 v 1558.98 972.852 1559.22 974.219 1558.67 979.77 c
1558.09 985.352 1556.72 991.559 1559.53 992.07 c 1562.34 992.66 1568.32 989.301 1572.23 984.84 c
1576.13 980.352 1577.81 973.629 y 1585.66 980.551 1586.37 995.512 v 1586.72 1002.46 1582.27 1008.09
1582.27 1016.52 c 1582.27 1024.92 1588.98 1021.02 1593.48 1018.2 c 1597.97 1015.39 1602.46 1006.45 y
1607.46 1015.98 1607.46 1023.79 v 1607.46 1031.64 1600.78 1038.32 1600.78 1043.36 c 1600.78 1048.44
1606.91 1046.52 1611.76 1041.13 c 1616.56 1035.74 1620.7 1026.02 y 1625.7 1042.27 1623.48 1049.34 v
1621.29 1056.41 1613.4 1059.96 y 1615.08 1062.77 1626.84 1086.8 v 1638.59 1110.86 1640.82 1119.06 y
1641.41 1085.7 1638.59 1063.36 v 1635.82 1040.94 1632.42 1033.83 1625.7 1021.52 c 1618.98 1009.26
1615.08 1005.31 y 1623.4 997.539 1623.91 993.871 v 1624.3 991.41 1617.3 987.07 1609.18 985.352 c
1600.98 983.75 1597.38 980.941 y 1604.1 972.539 l 1608.87 968.121 1610.55 962.539 v 1612.23 956.91
1609.06 957.66 1601.99 951.48 c 1594.88 945.352 1584.53 942.5 y 1587.34 936.91 1591.25 932.969 v
1595.16 929.059 1602.46 926.289 y 1599.06 917.852 1594.06 914.531 v 1588.98 911.172 1575.94 911.559 y
1582.27 899.961 l 1585.59 893.828 1588.98 892.34 y 1590.16 890.121 1580.62 863.48 v 1571.09 836.762
1548.91 790.352 1534.38 773.199 c 1519.84 756.051 1498.75 747.422 1490.9 748.012 c 1483.09 748.59
1467.07 763.672 1458.09 772.66 c 1449.14 781.602 1448.05 791.48 1451.37 798.199 c 1454.77 804.879
1467.07 812.73 1474.34 818.32 c 1481.6 823.949 1481.41 825.621 1494.3 826.719 c 1507.15 827.852
1507.73 823.359 1507.73 814.961 c 1507.73 806.602 1505.47 803.789 1499.34 799.879 c 1493.12 795.98
1482.19 794.801 1478.28 791.48 c 1474.34 788.09 1475.78 778.238 1486.41 777.699 c 1497.11 777.148
1513.87 777.148 1524.84 791.988 c 1531.84 801.48 1540.51 820.551 1540.51 831.559 c 1540.51 848.672
1520.04 864.57 1511.64 871.602 c 1503.24 878.711 1493.71 881.129 1485.31 878.32 c 1476.91 875.59
1452.54 856.172 1429.22 836.559 c 1405.9 816.988 1401.8 813.828 1401.8 783.281 c 1401.8 752.73 1469.3
696.371 1489.8 696.371 c 1510.31 696.371 1523.16 711.879 1528.2 716.371 c 1533.24 720.781 1537.19
725.27 1543.87 720.27 c 1550.59 715.199 1568.32 705.16 1592.34 703.48 c 1616.41 701.762 1630.2
714.102 y 1612.85 714.648 1599.65 722.5 v 1590.51 727.891 1580.04 735.352 1575.04 738.711 c 1570
742.109 1570 744.648 1571.64 747.422 c 1573.32 750.23 1621.29 816.641 y 1624.8 816.09 1649.22 818.91
v 1669.77 821.25 1691.84 835.309 1698.4 839.73 c f Q%
}

\def\cord@tencentpath{%
q 1 0 0 -1 0 24 cm q q 0 0.321569 0.85098 rg 23.587 7.91 m 28.714 7.91 29.759 11.517 29.247 14.47 c
29.137 15.105 28.985 15.614 28.857 15.904 c 18.307 15.904 l 17.989 18.352 20.018 19.418 22.572 19.418
c 24.403 19.418 26.566 18.735 28.265 17.514 c 27.725 20.598 l 25.985 21.568 23.845 22 21.556 22 c
16.435 22 13.905 18.973 14.619 14.839 c 15.264 11.116 18.512 7.91 23.587 7.91 c h 71.474 7.91 m
76.604 7.91 77.648 11.517 77.139 14.47 c 77.061 14.961 76.93 15.442 76.749 15.904 c 66.198 15.904 l
65.881 18.352 67.908 19.418 70.461 19.418 c 72.294 19.418 74.456 18.735 76.155 17.514 c 75.615 20.598
l 73.875 21.568 71.734 22 69.446 22 c 64.326 22 61.794 18.973 62.509 14.839 c 63.154 11.116 66.401
7.91 71.474 7.91 c h 57.742 7.917 m 59.202 7.907 60.652 8.18 62.009 8.72 c 61.39 12.39 l 60.33 10.99
58.65 10.573 57.409 10.573 c 54.366 10.573 52.032 12.475 51.618 14.946 c 51.138 17.713 53.196 19.3
55.913 19.3 c 57.245 19.3 59.126 18.99 60.513 17.494 c 59.867 21.099 l 58.979 21.429 56.63 21.997
54.664 21.997 c 49.717 21.997 46.797 18.967 47.519 14.778 c 48.219 10.741 52.193 7.918 57.742 7.918 c
h 41.218 7.912 m 43.968 7.912 46.582 9.427 45.843 13.692 c 44.437 21.707 l 40.447 21.707 l 41.781
14.093 l 42.114 12.158 41.522 10.683 39.342 10.683 c 37.756 10.683 36.506 11.643 35.91 12.685 c
35.738 12.965 35.606 13.378 35.54 13.767 c 34.151 21.707 l 30.161 21.707 l 31.799 12.227 l 32.003
11.055 32.151 9.866 32.267 8.851 c 32.372 7.911 l 35.887 9.995 l 35.98 9.995 l 36.859 8.948 38.726
7.912 41.218 7.912 c h 89.107 7.912 m 91.857 7.912 94.471 9.427 93.732 13.692 c 92.328 21.707 l
88.336 21.707 l 89.67 14.093 l 90.005 12.158 89.413 10.683 87.231 10.683 c 85.646 10.683 84.395
11.643 83.799 12.685 c 83.627 12.965 83.494 13.378 83.429 13.767 c 82.04 21.707 l 78.048 21.707 l
79.687 12.227 l 79.916 10.908 80.076 9.568 80.198 8.479 c 80.261 7.912 l 83.776 9.995 l 83.868 9.995
l 84.718 8.984 86.486 7.983 88.85 7.915 c 89.107 7.912 l h 20.633 2 m 20.139 4.866 l 13.034 4.866 l
10.085 21.706 l 6.095 21.706 l 9.043 4.866 l 2 4.866 l 2.498 2 l 20.633 2 l h 99.323 4.414 m 107.184
10.758 l 102.206 10.758 l 100.286 21.707 l 96.294 21.707 l 98.209 10.758 l 95.93 10.758 l 94.162
8.215 l 98.657 8.215 l 99.323 4.414 l h 22.934 10.239 m 20.478 10.239 19.129 12.145 18.713 13.526 c
25.656 13.526 l 25.889 12.278 25.606 10.239 22.934 10.239 c h 70.822 10.239 m 68.369 10.239 67.019
12.145 66.603 13.526 c 73.546 13.526 l 73.574 13.351 l 73.749 12.096 73.375 10.239 70.822 10.239 c h
f* Q Q Q%
}

\def\cord@pkupath{%
q 1 0 0 -1 0 80 cm 0.571198 0.060394 0.077667 rg 26.246094 73.957031 m 26.191406 73.980469 l
25.863281 73.804688 25.460938 73.609375 24.980469 73.394531 c 24.507812 73.183594 24.007812 72.964844
23.480469 72.75 c 23.457031 72.691406 l 23.554688 72.476562 l 23.613281 72.457031 l 23.742188 72.5
23.867188 72.542969 23.984375 72.582031 c 24.199219 72.660156 24.335938 72.699219 24.40625 72.703125
c 24.457031 72.703125 24.511719 72.671875 24.570312 72.617188 c 24.667969 72.523438 24.898438
72.082031 25.253906 71.285156 c 25.660156 70.394531 l 25.78125 70.125 25.929688 69.785156 26.105469
69.371094 c 26.222656 69.097656 l 26.277344 68.980469 26.277344 68.910156 26.226562 68.886719 c
26.207031 68.878906 26.121094 68.871094 25.976562 68.871094 c 25.582031 68.875 25.242188 68.878906
24.957031 68.886719 c 24.925781 68.84375 l 24.972656 68.703125 25.015625 68.539062 25.054688
68.351562 c 25.070312 68.328125 l 25.816406 68.441406 26.667969 68.527344 27.621094 68.582031 c
27.671875 68.667969 l 27.601562 68.804688 27.492188 69.011719 27.351562 69.28125 c 27.292969
69.378906 27.160156 69.65625 26.945312 70.109375 c 26.296875 71.515625 l 25.839844 72.523438
25.613281 73.070312 25.613281 73.160156 c 25.609375 73.222656 25.628906 73.277344 25.675781 73.304688
c 25.746094 73.371094 25.960938 73.496094 26.316406 73.679688 c 26.34375 73.734375 l 26.246094
73.957031 l f 37.097656 74.789062 m 37.054688 75.09375 36.960938 75.359375 36.804688 75.574219 c
36.511719 75.996094 36.09375 76.253906 35.546875 76.347656 c 35.324219 76.390625 35.058594 76.386719
34.75 76.347656 c 34.089844 76.257812 33.613281 75.964844 33.320312 75.472656 c 33.1875 75.253906
33.144531 74.980469 33.1875 74.65625 c 33.242188 74.261719 33.425781 73.957031 33.742188 73.746094 c
33.847656 73.679688 34.066406 73.585938 34.402344 73.476562 c 33.859375 73.101562 33.625 72.644531
33.695312 72.101562 c 33.777344 71.492188 34.144531 71.089844 34.796875 70.886719 c 35.082031
70.800781 35.394531 70.78125 35.738281 70.824219 c 36.234375 70.890625 36.617188 71.058594 36.875
71.320312 c 37.121094 71.570312 37.21875 71.871094 37.175781 72.214844 c 37.109375 72.683594 36.78125
73.03125 36.1875 73.25 c 36.574219 73.460938 36.835938 73.691406 36.972656 73.945312 c 37.101562
74.167969 37.140625 74.453125 37.097656 74.789062 c h 36.0625 75.109375 m 36.113281 74.734375
35.960938 74.421875 35.613281 74.183594 c 35.386719 74.027344 35.140625 73.878906 34.871094 73.734375
c 34.757812 73.679688 l 34.308594 73.90625 34.054688 74.238281 33.996094 74.671875 c 33.960938
74.921875 33.984375 75.125 34.070312 75.292969 c 34.25 75.667969 34.546875 75.886719 34.960938
75.941406 c 35.25 75.980469 35.496094 75.917969 35.695312 75.769531 c 35.902344 75.609375 36.027344
75.390625 36.0625 75.109375 c h 36.417969 72.226562 m 36.449219 71.96875 36.394531 71.753906
36.246094 71.570312 c 36.09375 71.394531 35.875 71.285156 35.582031 71.25 c 35.320312 71.210938
35.101562 71.257812 34.929688 71.382812 c 34.765625 71.5 34.675781 71.660156 34.644531 71.863281 c
34.621094 72.058594 34.6875 72.257812 34.851562 72.441406 c 34.941406 72.542969 35.078125 72.644531
35.257812 72.753906 c 35.320312 72.800781 35.417969 72.851562 35.542969 72.929688 c 35.769531
73.054688 l 36.148438 72.894531 36.363281 72.617188 36.417969 72.226562 c f 46.347656 72.785156 m
46.445312 73.484375 46.382812 74.074219 46.171875 74.566406 c 45.949219 75.085938 45.617188 75.523438
45.175781 75.871094 c 44.722656 76.230469 44.199219 76.453125 43.605469 76.535156 c 43.226562
76.324219 l 43.3125 76.160156 l 43.402344 76.171875 43.507812 76.171875 43.640625 76.15625 c 44.09375
76.09375 44.46875 75.886719 44.769531 75.535156 c 44.886719 75.394531 44.992188 75.214844 45.078125
74.996094 c 45.222656 74.652344 45.285156 74.160156 45.269531 73.53125 c 44.886719 73.949219
44.667969 74.167969 44.621094 74.179688 c 44.4375 74.269531 44.234375 74.324219 44.011719 74.359375 c
43.621094 74.410156 43.292969 74.324219 43.015625 74.105469 c 42.714844 73.863281 42.527344 73.507812
42.464844 73.03125 c 42.355469 72.265625 42.578125 71.660156 43.125 71.210938 c 43.382812 71
43.714844 70.863281 44.132812 70.808594 c 44.910156 70.699219 45.507812 70.953125 45.929688 71.554688
c 46.136719 71.855469 46.273438 72.265625 46.347656 72.785156 c h 45.15625 72.539062 m 45.085938
72.054688 44.957031 71.707031 44.769531 71.496094 c 44.578125 71.285156 44.347656 71.199219 44.078125
71.230469 c 43.746094 71.28125 43.539062 71.472656 43.46875 71.820312 c 43.425781 72.003906 43.421875
72.214844 43.457031 72.457031 c 43.515625 72.859375 43.640625 73.160156 43.835938 73.351562 c
44.019531 73.539062 44.246094 73.609375 44.511719 73.570312 c 44.859375 73.527344 45.066406 73.355469
45.148438 73.066406 c 45.183594 72.933594 45.183594 72.757812 45.15625 72.539062 c f 55.597656
71.152344 m 55.722656 71.433594 55.777344 71.707031 55.757812 71.972656 c 55.722656 72.484375
55.496094 72.921875 55.078125 73.285156 c 54.90625 73.4375 54.679688 73.574219 54.394531 73.695312 c
53.785156 73.960938 53.230469 73.960938 52.726562 73.691406 c 52.496094 73.570312 52.316406 73.359375
52.1875 73.066406 c 52.027344 72.699219 52.027344 72.335938 52.1875 71.996094 c 52.242188 71.882812
52.382812 71.691406 52.613281 71.421875 c 51.949219 71.382812 51.511719 71.113281 51.296875 70.617188
c 51.046875 70.050781 51.152344 69.511719 51.609375 69.003906 c 51.808594 68.78125 52.0625 68.601562
52.378906 68.460938 c 52.839844 68.261719 53.253906 68.203125 53.609375 68.296875 c 53.949219
68.386719 54.191406 68.589844 54.328125 68.910156 c 54.519531 69.34375 54.414062 69.8125 54.019531
70.308594 c 54.460938 70.285156 54.804688 70.34375 55.050781 70.488281 c 55.277344 70.617188
55.457031 70.835938 55.597656 71.152344 c h 53.6875 69.308594 m 53.582031 69.074219 53.417969
68.917969 53.199219 68.84375 c 52.976562 68.769531 52.734375 68.789062 52.46875 68.910156 c 52.226562
69.011719 52.0625 69.164062 51.980469 69.363281 c 51.902344 69.546875 51.902344 69.730469 51.984375
69.914062 c 52.066406 70.101562 52.226562 70.226562 52.460938 70.308594 c 52.589844 70.34375
52.757812 70.367188 52.96875 70.367188 c 53.042969 70.367188 53.15625 70.367188 53.304688 70.367188 c
53.558594 70.351562 l 53.804688 70.023438 53.84375 69.671875 53.6875 69.308594 c h 54.878906
71.960938 m 54.730469 71.613281 54.4375 71.425781 54.015625 71.398438 c 53.738281 71.382812 53.449219
71.386719 53.148438 71.40625 c 53.023438 71.410156 l 52.753906 71.835938 52.707031 72.257812
52.882812 72.652344 c 52.984375 72.882812 53.113281 73.050781 53.269531 73.152344 c 53.621094
73.382812 53.984375 73.410156 54.367188 73.242188 c 54.632812 73.125 54.816406 72.945312 54.910156
72.707031 c 55.003906 72.46875 54.992188 72.21875 54.878906 71.960938 c f 24.328125 3.207031 m
19.582031 5.210938 15.324219 8.085938 11.667969 11.742188 c 8.007812 15.398438 5.136719 19.660156
3.132812 24.40625 c 1.054688 29.316406 0 34.53125 0 39.910156 c 0 45.289062 1.054688 50.507812
3.132812 55.414062 c 5.136719 60.160156 8.007812 64.414062 11.667969 68.074219 c 15.324219 71.734375
19.582031 74.605469 24.328125 76.609375 c 29.238281 78.6875 34.453125 79.738281 39.832031 79.738281 c
45.207031 79.738281 50.425781 78.6875 55.335938 76.609375 c 60.078125 74.605469 64.339844 71.734375
67.996094 68.074219 c 71.65625 64.414062 74.527344 60.160156 76.53125 55.414062 c 78.609375 50.507812
79.664062 45.289062 79.664062 39.910156 c 79.664062 34.53125 78.609375 29.316406 76.53125 24.40625 c
74.527344 19.660156 71.65625 15.398438 67.996094 11.742188 c 64.339844 8.085938 60.078125 5.210938
55.335938 3.207031 c 50.425781 1.132812 45.207031 0.074219 39.832031 0.074219 c 34.453125 0.074219
29.238281 1.132812 24.328125 3.207031 c h 24.839844 75.394531 m 20.253906 73.457031 16.136719
70.679688 12.601562 67.140625 c 9.058594 63.609375 6.28125 59.484375 4.34375 54.902344 c 2.335938
50.15625 1.320312 45.109375 1.320312 39.910156 c 1.320312 34.710938 2.335938 29.667969 4.34375
24.914062 c 6.28125 20.328125 9.058594 16.210938 12.601562 12.675781 c 16.136719 9.136719 20.253906
6.355469 24.839844 4.417969 c 29.585938 2.410156 34.628906 1.390625 39.832031 1.390625 c 45.03125
1.390625 50.074219 2.410156 54.820312 4.417969 c 59.410156 6.355469 63.527344 9.136719 67.066406
12.675781 c 70.605469 16.210938 73.378906 20.328125 75.320312 24.914062 c 77.328125 29.667969
78.347656 34.710938 78.347656 39.910156 c 78.347656 45.109375 77.328125 50.15625 75.320312 54.902344
c 73.378906 59.484375 70.605469 63.609375 67.066406 67.140625 c 63.527344 70.679688 59.410156
73.457031 54.820312 75.394531 c 50.074219 77.40625 45.03125 78.417969 39.832031 78.417969 c 34.628906
78.417969 29.585938 77.40625 24.839844 75.394531 c f 28.363281 12.765625 m 24.859375 14.246094
21.707031 16.371094 19.007812 19.078125 c 16.300781 21.785156 14.175781 24.933594 12.691406 28.441406
c 11.15625 32.074219 10.378906 35.929688 10.378906 39.90625 c 10.378906 43.882812 11.15625 47.738281
12.691406 51.371094 c 14.175781 54.878906 16.300781 58.027344 19.007812 60.734375 c 21.707031 63.4375
24.859375 65.5625 28.363281 67.042969 c 32 68.585938 35.851562 69.363281 39.828125 69.363281 c
43.804688 69.363281 47.664062 68.585938 51.296875 67.042969 c 54.804688 65.5625 57.957031 63.4375
60.660156 60.734375 c 63.367188 58.027344 65.484375 54.878906 66.972656 51.371094 c 68.507812
47.738281 69.289062 43.882812 69.289062 39.90625 c 69.289062 35.929688 68.507812 32.074219 66.972656
28.441406 c 65.484375 24.933594 63.367188 21.785156 60.660156 19.078125 c 57.957031 16.371094
54.804688 14.246094 51.296875 12.765625 c 47.664062 11.226562 43.804688 10.453125 39.828125 10.453125
c 35.851562 10.453125 32 11.226562 28.363281 12.765625 c h 28.714844 66.226562 m 25.3125 64.789062
22.257812 62.730469 19.632812 60.105469 c 17.007812 57.480469 14.949219 54.429688 13.511719 51.027344
c 12.023438 47.503906 11.265625 43.765625 11.265625 39.90625 c 11.265625 36.050781 12.023438
32.308594 13.511719 28.785156 c 14.949219 25.382812 17.007812 22.328125 19.632812 19.707031 c
22.257812 17.082031 25.3125 15.023438 28.714844 13.585938 c 32.234375 12.09375 35.972656 11.339844
39.828125 11.339844 c 43.683594 11.339844 47.429688 12.09375 50.953125 13.585938 c 54.351562
15.023438 57.40625 17.082031 60.03125 19.707031 c 62.65625 22.328125 64.710938 25.382812 66.152344
28.785156 c 67.644531 32.308594 68.394531 36.050781 68.394531 39.90625 c 68.394531 43.765625
67.644531 47.503906 66.152344 51.027344 c 64.710938 54.429688 62.65625 57.480469 60.03125 60.105469 c
57.40625 62.730469 54.351562 64.789062 50.953125 66.226562 c 47.429688 67.71875 43.683594 68.472656
39.828125 68.472656 c 35.972656 68.472656 32.234375 67.71875 28.714844 66.226562 c f 6.046875
50.109375 m 6.328125 50.003906 6.585938 49.96875 6.828125 50.007812 c 7.070312 50.039062 7.292969
50.136719 7.496094 50.285156 c 7.703125 50.445312 7.886719 50.648438 8.042969 50.890625 c 8.203125
51.132812 8.347656 51.417969 8.472656 51.746094 c 8.625 52.140625 l 10 51.621094 l 10.082031
51.589844 10.148438 51.546875 10.195312 51.488281 c 10.246094 51.433594 10.273438 51.355469 10.273438
51.253906 c 10.277344 51.199219 10.269531 51.113281 10.246094 51.003906 c 10.226562 50.890625
10.207031 50.789062 10.183594 50.71875 c 10.457031 50.613281 l 11.480469 53.296875 l 11.210938
53.398438 l 11.175781 53.335938 11.136719 53.25 11.078125 53.144531 c 11.023438 53.035156 10.980469
52.96875 10.941406 52.925781 c 10.871094 52.851562 10.804688 52.808594 10.738281 52.800781 c
10.671875 52.792969 10.597656 52.808594 10.5 52.839844 c 7.039062 54.160156 l 6.957031 54.195312
6.894531 54.234375 6.847656 54.289062 c 6.796875 54.335938 6.769531 54.414062 6.769531 54.523438 c
6.765625 54.597656 6.773438 54.691406 6.792969 54.808594 c 6.8125 54.925781 6.828125 55.011719
6.84375 55.066406 c 6.570312 55.167969 l 5.542969 52.464844 l 5.296875 51.835938 5.21875 51.320312
5.300781 50.929688 c 5.386719 50.546875 5.632812 50.269531 6.046875 50.109375 c h 6.730469 51.382812
m 6.367188 51.519531 6.117188 51.699219 6 51.917969 c 5.875 52.136719 5.886719 52.429688 6.03125
52.808594 c 6.140625 53.097656 l 8.292969 52.273438 l 8.234375 52.125 l 8.089844 51.753906 7.898438
51.503906 7.648438 51.371094 c 7.398438 51.238281 7.09375 51.242188 6.730469 51.382812 c f 4.652344
41.863281 m 4.683594 42.152344 l 4.5 42.242188 4.3125 42.386719 4.125 42.582031 c 3.941406 42.785156
3.84375 42.964844 3.828125 43.144531 c 3.820312 43.214844 3.820312 43.316406 3.828125 43.445312 c
3.839844 43.578125 3.851562 43.6875 3.863281 43.765625 c 3.949219 44.519531 l 5.960938 44.285156 l
5.898438 43.753906 l 5.882812 43.621094 5.855469 43.503906 5.808594 43.398438 c 5.765625 43.292969
5.691406 43.199219 5.589844 43.121094 c 5.53125 43.070312 5.421875 43.027344 5.273438 42.984375 c
5.128906 42.949219 5 42.925781 4.894531 42.917969 c 4.859375 42.632812 l 7.027344 42.378906 l
7.066406 42.667969 l 6.960938 42.691406 6.835938 42.738281 6.6875 42.824219 c 6.542969 42.902344
6.441406 42.964844 6.386719 43.027344 c 6.308594 43.117188 6.257812 43.21875 6.238281 43.328125 c
6.214844 43.445312 6.214844 43.570312 6.234375 43.714844 c 6.292969 44.25 l 7.800781 44.066406 l
7.957031 44.050781 8.082031 44.027344 8.171875 43.992188 c 8.257812 43.957031 8.324219 43.90625
8.367188 43.835938 c 8.40625 43.773438 8.425781 43.683594 8.429688 43.570312 c 8.433594 43.464844
8.421875 43.320312 8.402344 43.140625 c 8.390625 43.054688 8.378906 42.953125 8.359375 42.832031 c
8.347656 42.710938 8.324219 42.605469 8.304688 42.523438 c 8.28125 42.421875 8.25 42.328125 8.214844
42.238281 c 8.179688 42.15625 8.136719 42.089844 8.082031 42.054688 c 7.949219 41.949219 7.753906
41.839844 7.488281 41.71875 c 7.226562 41.601562 7.035156 41.527344 6.902344 41.496094 c 6.871094
41.230469 l 8.503906 41.140625 l 9.078125 46.03125 l 8.789062 46.066406 l 8.777344 45.996094 8.753906
45.902344 8.726562 45.789062 c 8.699219 45.667969 8.671875 45.585938 8.648438 45.539062 c 8.597656
45.449219 8.539062 45.398438 8.476562 45.367188 c 8.414062 45.34375 8.332031 45.335938 8.238281
45.351562 c 4.558594 45.777344 l 4.472656 45.789062 4.398438 45.804688 4.34375 45.84375 c 4.289062
45.878906 4.246094 45.949219 4.214844 46.054688 c 4.191406 46.121094 4.175781 46.214844 4.164062
46.335938 c 4.152344 46.457031 4.148438 46.542969 4.148438 46.605469 c 3.859375 46.640625 l 3.316406
42.019531 l 4.652344 41.863281 l f 9.363281 32.152344 m 9.070312 34.207031 l 8.546875 34.445312
8.109375 34.667969 7.753906 34.871094 c 7.398438 35.074219 6.949219 35.332031 6.410156 35.652344 c
6.382812 35.839844 l 8.019531 36.074219 l 8.109375 36.085938 8.183594 36.082031 8.25 36.0625 c
8.316406 36.035156 8.375 35.980469 8.433594 35.890625 c 8.457031 35.851562 8.484375 35.789062
8.515625 35.707031 c 8.550781 35.625 8.578125 35.539062 8.597656 35.453125 c 8.886719 35.496094 l
8.492188 38.257812 l 8.203125 38.214844 l 8.210938 38.152344 8.214844 38.058594 8.222656 37.933594 c
8.230469 37.8125 8.226562 37.726562 8.214844 37.671875 c 8.191406 37.585938 8.152344 37.511719
8.097656 37.46875 c 8.042969 37.425781 7.96875 37.402344 7.875 37.378906 c 4.1875 36.859375 l
4.101562 36.847656 4.027344 36.847656 3.964844 36.871094 c 3.902344 36.890625 3.84375 36.949219
3.78125 37.039062 c 3.734375 37.105469 3.695312 37.199219 3.660156 37.300781 c 3.621094 37.402344
3.59375 37.488281 3.578125 37.554688 c 3.292969 37.511719 l 3.699219 34.660156 l 3.984375 34.703125 l
3.980469 34.777344 3.980469 34.871094 3.976562 34.976562 c 3.976562 35.085938 3.980469 35.179688
3.996094 35.253906 c 4.011719 35.355469 4.054688 35.425781 4.121094 35.464844 c 4.183594 35.507812
4.257812 35.535156 4.339844 35.546875 c 6.097656 35.796875 l 6.117188 35.644531 l 5.96875 35.433594
5.792969 35.179688 5.585938 34.890625 c 5.375 34.605469 5.203125 34.378906 5.066406 34.214844 c 4.875
33.996094 4.722656 33.824219 4.609375 33.714844 c 4.496094 33.605469 4.390625 33.542969 4.300781
33.53125 c 4.257812 33.523438 4.21875 33.566406 4.183594 33.644531 c 4.148438 33.730469 4.113281
33.867188 4.078125 34.050781 c 3.789062 34.007812 l 4.113281 31.75 l 4.398438 31.792969 l 4.390625
32.066406 4.410156 32.304688 4.453125 32.496094 c 4.496094 32.691406 4.585938 32.886719 4.722656
33.09375 c 4.921875 33.386719 5.09375 33.628906 5.242188 33.820312 c 5.386719 34.015625 5.578125
34.261719 5.808594 34.5625 c 6.128906 34.375 6.519531 34.140625 6.980469 33.875 c 7.441406 33.605469
7.890625 33.335938 8.332031 33.066406 c 8.441406 32.996094 8.554688 32.917969 8.660156 32.824219 c
8.765625 32.730469 8.859375 32.621094 8.929688 32.503906 c 8.96875 32.445312 8.996094 32.378906
9.023438 32.308594 c 9.050781 32.230469 9.066406 32.167969 9.078125 32.113281 c 9.363281 32.152344 l
f 11.347656 26.941406 m 10.261719 29.648438 l 9.992188 29.542969 l 10.015625 29.476562 10.042969
29.382812 10.082031 29.257812 c 10.121094 29.136719 10.140625 29.050781 10.140625 28.992188 c
10.140625 28.894531 10.121094 28.820312 10.074219 28.769531 c 10.03125 28.714844 9.964844 28.667969
9.878906 28.632812 c 6.421875 27.25 l 6.34375 27.222656 6.273438 27.203125 6.199219 27.214844 c
6.128906 27.222656 6.058594 27.253906 5.988281 27.328125 c 5.9375 27.375 5.882812 27.445312 5.816406
27.546875 c 5.75 27.644531 5.695312 27.730469 5.65625 27.800781 c 5.386719 27.695312 l 6.46875
24.988281 l 6.738281 25.09375 l 6.71875 25.167969 6.691406 25.257812 6.664062 25.367188 c 6.636719
25.472656 6.621094 25.566406 6.613281 25.648438 c 6.609375 25.75 6.628906 25.824219 6.679688
25.882812 c 6.730469 25.9375 6.796875 25.980469 6.878906 26.007812 c 10.332031 27.390625 l 10.414062
27.429688 10.492188 27.441406 10.558594 27.429688 c 10.632812 27.425781 10.703125 27.386719 10.773438
27.3125 c 10.8125 27.269531 10.867188 27.203125 10.929688 27.101562 c 10.992188 26.996094 11.042969
26.902344 11.078125 26.832031 c 11.347656 26.941406 l f 12.195312 16.285156 m 12.160156 16.347656
12.113281 16.429688 12.054688 16.550781 c 11.996094 16.664062 11.96875 16.78125 11.972656 16.910156 c
11.972656 17.027344 12.027344 17.144531 12.136719 17.269531 c 12.246094 17.398438 12.429688 17.550781
12.683594 17.730469 c 15.742188 19.925781 l 15.355469 20.460938 l 10.152344 20.746094 l 12.152344
22.175781 l 12.402344 22.359375 12.605469 22.476562 12.753906 22.539062 c 12.90625 22.597656
13.042969 22.609375 13.152344 22.578125 c 13.238281 22.554688 13.339844 22.5 13.457031 22.40625 c
13.578125 22.308594 13.667969 22.234375 13.726562 22.171875 c 13.964844 22.339844 l 12.648438
24.171875 l 12.414062 24.003906 l 12.460938 23.921875 12.511719 23.820312 12.5625 23.707031 c
12.613281 23.589844 12.636719 23.476562 12.636719 23.378906 c 12.632812 23.253906 12.582031 23.136719
12.488281 23.027344 c 12.390625 22.914062 12.203125 22.757812 11.925781 22.558594 c 9.90625 21.105469
l 9.804688 21.035156 9.695312 20.980469 9.585938 20.949219 c 9.472656 20.914062 9.359375 20.910156
9.25 20.925781 c 9.132812 20.949219 9.023438 20.992188 8.917969 21.054688 c 8.8125 21.117188 8.710938
21.199219 8.605469 21.300781 c 8.367188 21.128906 l 9.519531 19.523438 l 14.085938 19.277344 l
12.460938 18.113281 l 12.203125 17.929688 11.996094 17.808594 11.847656 17.753906 c 11.699219
17.691406 11.570312 17.679688 11.453125 17.707031 c 11.363281 17.734375 11.265625 17.789062 11.152344
17.882812 c 11.035156 17.972656 10.945312 18.042969 10.882812 18.105469 c 10.65625 17.945312 l
11.960938 16.121094 l 12.195312 16.285156 l f 21.292969 11.960938 m 21.238281 12.015625 21.191406
12.066406 21.148438 12.109375 c 21.109375 12.152344 21.066406 12.214844 21.015625 12.292969 c
20.953125 12.382812 20.9375 12.460938 20.960938 12.542969 c 20.980469 12.617188 21.015625 12.6875
21.066406 12.753906 c 21.449219 13.230469 l 21.527344 13.324219 21.59375 13.402344 21.640625
13.457031 c 21.691406 13.515625 21.75 13.585938 21.824219 13.671875 c 21.722656 13.9375 21.527344
14.28125 21.234375 14.679688 c 20.941406 15.082031 20.636719 15.414062 20.320312 15.664062 c
20.023438 15.90625 19.710938 16.082031 19.378906 16.199219 c 19.046875 16.320312 18.710938 16.367188
18.371094 16.34375 c 18.03125 16.320312 17.699219 16.222656 17.371094 16.046875 c 17.046875 15.871094
16.746094 15.613281 16.472656 15.277344 c 16.21875 14.953125 16.03125 14.613281 15.921875 14.261719 c
15.804688 13.90625 15.773438 13.550781 15.816406 13.191406 c 15.855469 12.839844 15.972656 12.5
16.167969 12.167969 c 16.367188 11.835938 16.636719 11.535156 16.976562 11.257812 c 17.207031
11.074219 17.445312 10.925781 17.6875 10.824219 c 17.933594 10.722656 18.144531 10.652344 18.324219
10.617188 c 18.378906 10.3125 l 18.628906 10.105469 l 19.859375 11.582031 l 19.605469 11.785156 l
19.097656 11.449219 18.65625 11.261719 18.277344 11.207031 c 17.902344 11.15625 17.558594 11.253906
17.242188 11.503906 c 16.875 11.796875 16.71875 12.1875 16.777344 12.671875 c 16.839844 13.15625
17.09375 13.679688 17.546875 14.246094 c 18.027344 14.847656 18.507812 15.238281 18.976562 15.417969
c 19.445312 15.597656 19.859375 15.546875 20.21875 15.257812 c 20.386719 15.121094 20.515625
14.976562 20.609375 14.820312 c 20.695312 14.664062 20.75 14.523438 20.769531 14.402344 c 20.105469
13.570312 l 20.050781 13.503906 19.992188 13.449219 19.929688 13.429688 c 19.867188 13.398438
19.785156 13.398438 19.683594 13.429688 c 19.601562 13.449219 19.511719 13.484375 19.414062 13.535156
c 19.3125 13.585938 19.226562 13.632812 19.164062 13.671875 c 18.984375 13.449219 l 21.113281
11.734375 l 21.292969 11.960938 l f 33.523438 4.101562 m 33.457031 4.128906 33.359375 4.171875
33.234375 4.234375 c 33.109375 4.292969 33.011719 4.367188 32.941406 4.453125 c 32.863281 4.535156
32.828125 4.660156 32.832031 4.824219 c 32.832031 4.984375 32.878906 5.222656 32.960938 5.535156 c
33.464844 7.417969 l 33.613281 7.972656 33.539062 8.464844 33.253906 8.902344 c 32.964844 9.34375
32.515625 9.644531 31.894531 9.816406 c 31.167969 10.003906 30.566406 9.992188 30.082031 9.773438 c
29.597656 9.550781 29.277344 9.152344 29.125 8.570312 c 28.429688 5.980469 l 28.410156 5.894531
28.375 5.828125 28.328125 5.777344 c 28.28125 5.726562 28.207031 5.691406 28.101562 5.664062 c
28.03125 5.648438 27.945312 5.648438 27.851562 5.648438 c 27.753906 5.660156 27.679688 5.664062
27.621094 5.671875 c 27.546875 5.390625 l 30.207031 4.679688 l 30.28125 4.960938 l 30.207031 4.984375
30.128906 5.015625 30.054688 5.050781 c 29.980469 5.085938 29.90625 5.125 29.832031 5.171875 c
29.746094 5.222656 29.699219 5.289062 29.683594 5.363281 c 29.671875 5.441406 29.675781 5.519531
29.699219 5.605469 c 30.378906 8.15625 l 30.515625 8.660156 30.722656 8.996094 31.003906 9.164062 c
31.289062 9.335938 31.644531 9.359375 32.070312 9.25 c 32.480469 9.136719 32.777344 8.925781
32.945312 8.609375 c 33.113281 8.296875 33.132812 7.890625 33.003906 7.402344 c 32.546875 5.683594 l
32.464844 5.378906 32.382812 5.160156 32.300781 5.015625 c 32.21875 4.878906 32.125 4.78125 32.019531
4.738281 c 31.945312 4.707031 31.824219 4.6875 31.660156 4.6875 c 31.492188 4.683594 31.375 4.6875
31.3125 4.695312 c 31.234375 4.40625 l 33.449219 3.816406 l 33.523438 4.101562 l f 43.445312 3.730469
m 43.375 3.734375 43.277344 3.746094 43.148438 3.761719 c 43.019531 3.78125 42.902344 3.828125
42.800781 3.902344 c 42.707031 3.96875 42.640625 4.082031 42.601562 4.246094 c 42.558594 4.402344
42.539062 4.640625 42.535156 4.957031 c 42.484375 8.71875 l 41.824219 8.714844 l 38.617188 4.601562 l
38.582031 7.066406 l 38.578125 7.371094 38.597656 7.605469 38.632812 7.761719 c 38.671875 7.925781
38.738281 8.039062 38.828125 8.113281 c 38.894531 8.171875 39 8.21875 39.144531 8.261719 c 39.289062
8.308594 39.402344 8.339844 39.492188 8.355469 c 39.488281 8.644531 l 37.234375 8.609375 l 37.234375
8.328125 l 37.332031 8.316406 37.4375 8.296875 37.5625 8.273438 c 37.691406 8.246094 37.792969
8.203125 37.878906 8.148438 c 37.976562 8.074219 38.042969 7.972656 38.082031 7.824219 c 38.121094
7.683594 38.140625 7.441406 38.140625 7.097656 c 38.175781 4.605469 l 38.179688 4.480469 38.164062
4.367188 38.125 4.253906 c 38.089844 4.144531 38.03125 4.050781 37.953125 3.96875 c 37.867188
3.882812 37.769531 3.816406 37.65625 3.765625 c 37.546875 3.71875 37.417969 3.675781 37.277344
3.648438 c 37.277344 3.355469 l 39.253906 3.386719 l 42.066406 6.988281 l 42.09375 4.992188 l
42.097656 4.675781 42.082031 4.441406 42.042969 4.28125 c 42.003906 4.125 41.9375 4.011719 41.847656
3.933594 c 41.78125 3.878906 41.671875 3.828125 41.535156 3.785156 c 41.394531 3.738281 41.28125
3.714844 41.195312 3.691406 c 41.203125 3.414062 l 43.449219 3.441406 l 43.445312 3.730469 l f
49.101562 10.011719 m 46.277344 9.285156 l 46.351562 9 l 46.414062 9.015625 46.511719 9.035156
46.640625 9.054688 c 46.765625 9.078125 46.855469 9.085938 46.910156 9.078125 c 47.007812 9.066406
47.078125 9.035156 47.128906 8.980469 c 47.175781 8.933594 47.210938 8.859375 47.234375 8.773438 c
48.167969 5.167969 l 48.1875 5.085938 48.191406 5.007812 48.175781 4.9375 c 48.160156 4.871094
48.113281 4.804688 48.03125 4.742188 c 47.984375 4.703125 47.90625 4.660156 47.796875 4.601562 c
47.695312 4.550781 47.601562 4.503906 47.519531 4.476562 c 47.59375 4.199219 l 50.417969 4.929688 l
50.34375 5.207031 l 50.269531 5.195312 50.179688 5.179688 50.070312 5.167969 c 49.957031 5.152344
49.863281 5.148438 49.78125 5.152344 c 49.679688 5.160156 49.609375 5.195312 49.558594 5.25 c
49.511719 5.308594 49.476562 5.378906 49.453125 5.460938 c 48.519531 9.0625 l 48.5 9.148438 48.5
9.226562 48.511719 9.292969 c 48.527344 9.363281 48.578125 9.429688 48.660156 9.492188 c 48.703125
9.527344 48.78125 9.570312 48.890625 9.621094 c 49 9.667969 49.097656 9.707031 49.171875 9.734375 c
49.101562 10.011719 l f 59.441406 9.308594 m 59.367188 9.277344 59.292969 9.253906 59.214844 9.242188
c 59.136719 9.222656 59.054688 9.21875 58.964844 9.222656 c 58.84375 9.226562 58.738281 9.246094
58.65625 9.285156 c 58.570312 9.320312 58.472656 9.371094 58.367188 9.457031 c 58.21875 9.574219
58.007812 9.742188 57.734375 9.96875 c 57.457031 10.195312 57.144531 10.441406 56.792969 10.730469 c
56.480469 10.988281 56.136719 11.273438 55.765625 11.582031 c 55.390625 11.886719 55.035156 12.1875
54.695312 12.464844 c 54.042969 12.121094 l 54.132812 11.210938 54.226562 10.289062 54.316406
9.347656 c 54.414062 8.410156 54.476562 7.757812 54.503906 7.386719 c 54.511719 7.253906 54.507812
7.144531 54.488281 7.066406 c 54.46875 6.984375 54.429688 6.886719 54.355469 6.773438 c 54.304688
6.695312 54.253906 6.628906 54.207031 6.582031 c 54.160156 6.539062 54.089844 6.484375 54.003906
6.421875 c 54.140625 6.164062 l 56.65625 7.496094 l 56.523438 7.746094 l 56.269531 7.648438 56.089844
7.585938 55.984375 7.574219 c 55.882812 7.5625 55.820312 7.578125 55.792969 7.625 c 55.78125 7.652344
55.773438 7.679688 55.765625 7.707031 c 55.757812 7.738281 55.75 7.78125 55.738281 7.839844 c
55.707031 8.070312 55.667969 8.449219 55.617188 8.988281 c 55.570312 9.523438 55.492188 10.34375
55.375 11.429688 c 55.890625 11 56.320312 10.632812 56.671875 10.332031 c 57.015625 10.03125
57.324219 9.765625 57.585938 9.535156 c 57.71875 9.421875 57.835938 9.316406 57.933594 9.210938 c
58.035156 9.113281 58.101562 9.035156 58.132812 8.96875 c 58.171875 8.902344 58.148438 8.816406
58.070312 8.714844 c 57.996094 8.617188 57.824219 8.476562 57.5625 8.3125 c 57.699219 8.050781 l
59.582031 9.046875 l 59.441406 9.308594 l f 65.152344 15.414062 m 64.941406 15.214844 l 65.027344
15.019531 65.074219 14.789062 65.082031 14.519531 c 65.089844 14.253906 65.046875 14.046875 64.945312
13.90625 c 64.90625 13.839844 64.832031 13.765625 64.738281 13.679688 c 64.648438 13.582031 64.566406
13.511719 64.503906 13.449219 c 63.949219 12.941406 l 62.582031 14.433594 l 62.976562 14.800781 l
63.074219 14.886719 63.175781 14.957031 63.277344 15.007812 c 63.378906 15.058594 63.496094 15.082031
63.625 15.074219 c 63.703125 15.070312 63.8125 15.035156 63.949219 14.96875 c 64.085938 14.902344
64.199219 14.835938 64.285156 14.769531 c 64.496094 14.960938 l 63.023438 16.578125 l 62.808594
16.382812 l 62.867188 16.300781 62.929688 16.175781 62.988281 16.019531 c 63.050781 15.863281
63.082031 15.742188 63.082031 15.664062 c 63.082031 15.546875 63.054688 15.4375 62.996094 15.335938 c
62.9375 15.238281 62.855469 15.144531 62.746094 15.042969 c 62.351562 14.683594 l 61.332031 15.804688
l 61.222656 15.917969 61.152344 16.019531 61.101562 16.105469 c 61.058594 16.1875 61.042969 16.269531
61.058594 16.351562 c 61.066406 16.421875 61.113281 16.507812 61.179688 16.585938 c 61.25 16.671875
61.351562 16.777344 61.488281 16.902344 c 61.550781 16.957031 61.628906 17.027344 61.71875 17.109375
c 61.808594 17.1875 61.890625 17.25 61.96875 17.304688 c 62.050781 17.367188 62.136719 17.417969
62.214844 17.460938 c 62.304688 17.503906 62.375 17.519531 62.441406 17.515625 c 62.605469 17.507812
62.828125 17.460938 63.105469 17.375 c 63.378906 17.289062 63.578125 17.222656 63.695312 17.160156 c
63.890625 17.339844 l 62.714844 18.476562 l 59.085938 15.160156 l 59.28125 14.949219 l 59.335938
14.988281 59.410156 15.042969 59.507812 15.117188 c 59.609375 15.183594 59.679688 15.226562 59.730469
15.246094 c 59.828125 15.285156 59.90625 15.289062 59.96875 15.265625 c 60.035156 15.246094 60.101562
15.195312 60.164062 15.125 c 62.664062 12.394531 l 62.722656 12.328125 62.761719 12.265625 62.78125
12.203125 c 62.800781 12.140625 62.785156 12.058594 62.742188 11.949219 c 62.714844 11.886719
62.664062 11.808594 62.59375 11.710938 c 62.527344 11.613281 62.472656 11.542969 62.433594 11.492188
c 62.625 11.285156 l 66.058594 14.417969 l 65.152344 15.414062 l f 67.878906 25.722656 m 66.789062
24.027344 l 67.046875 23.449219 67.246094 22.976562 67.382812 22.613281 c 67.523438 22.25 67.675781
21.808594 67.835938 21.300781 c 67.730469 21.136719 l 66.398438 22 l 66.320312 22.042969 66.265625
22.097656 66.234375 22.160156 c 66.199219 22.222656 66.191406 22.304688 66.207031 22.40625 c 66.21875
22.453125 66.238281 22.527344 66.277344 22.625 c 66.3125 22.71875 66.347656 22.796875 66.378906
22.859375 c 66.132812 23.015625 l 64.609375 20.65625 l 64.855469 20.496094 l 64.898438 20.554688
64.960938 20.628906 65.035156 20.722656 c 65.113281 20.8125 65.171875 20.875 65.214844 20.902344 c
65.300781 20.964844 65.375 20.988281 65.441406 20.984375 c 65.511719 20.980469 65.585938 20.949219
65.664062 20.898438 c 68.78125 18.890625 l 68.851562 18.84375 68.90625 18.789062 68.9375 18.734375 c
68.980469 18.679688 68.984375 18.59375 68.96875 18.484375 c 68.949219 18.398438 68.921875 18.308594
68.878906 18.203125 c 68.839844 18.101562 68.808594 18.019531 68.78125 17.964844 c 69.027344
17.808594 l 70.625 20.28125 l 70.980469 20.839844 71.164062 21.316406 71.179688 21.714844 c 71.191406
22.113281 71.019531 22.425781 70.667969 22.65625 c 70.359375 22.855469 70.050781 22.925781 69.75
22.871094 c 69.445312 22.8125 69.125 22.636719 68.785156 22.34375 c 68.691406 22.601562 68.589844
22.902344 68.472656 23.246094 c 68.347656 23.585938 68.230469 23.9375 68.113281 24.300781 c 68.070312
24.414062 68.03125 24.570312 67.988281 24.757812 c 67.941406 24.941406 67.933594 25.085938 67.960938
25.191406 c 67.980469 25.246094 68 25.3125 68.042969 25.394531 c 68.078125 25.476562 68.105469
25.53125 68.121094 25.5625 c h 69.800781 21.496094 m 70.097656 21.304688 70.257812 21.082031
70.285156 20.828125 c 70.316406 20.578125 70.234375 20.300781 70.035156 19.996094 c 69.878906 19.75 l
68.035156 20.941406 l 68.183594 21.179688 l 68.378906 21.484375 68.613281 21.667969 68.878906
21.730469 c 69.148438 21.792969 69.457031 21.714844 69.800781 21.496094 c f 69.351562 30.035156 m
69.246094 29.730469 69.191406 29.429688 69.175781 29.144531 c 69.160156 28.855469 69.175781 28.601562
69.21875 28.386719 c 68.890625 28.222656 l 68.792969 27.925781 l 70.578125 27.28125 l 70.679688
27.585938 l 70.53125 27.714844 70.382812 27.859375 70.230469 28.023438 c 70.074219 28.191406
69.949219 28.375 69.847656 28.5625 c 69.738281 28.753906 69.664062 28.957031 69.625 29.171875 c
69.589844 29.378906 69.605469 29.601562 69.6875 29.835938 c 69.792969 30.15625 69.949219 30.371094
70.160156 30.492188 c 70.371094 30.609375 70.597656 30.628906 70.832031 30.550781 c 71.027344
30.488281 71.164062 30.359375 71.246094 30.171875 c 71.328125 29.984375 71.363281 29.722656 71.359375
29.390625 c 71.359375 29.171875 71.359375 28.96875 71.355469 28.785156 c 71.355469 28.601562
71.359375 28.429688 71.363281 28.265625 c 71.386719 27.886719 71.496094 27.570312 71.691406 27.328125
c 71.882812 27.082031 72.132812 26.902344 72.441406 26.808594 c 72.632812 26.742188 72.832031
26.71875 73.039062 26.746094 c 73.246094 26.765625 73.453125 26.839844 73.648438 26.949219 c
73.832031 27.0625 74.007812 27.226562 74.171875 27.433594 c 74.339844 27.648438 74.464844 27.894531
74.5625 28.179688 c 74.652344 28.457031 74.703125 28.726562 74.707031 28.992188 c 74.714844 29.265625
74.699219 29.476562 74.671875 29.640625 c 74.949219 29.785156 l 75.054688 30.089844 l 73.363281
30.691406 l 73.261719 30.386719 l 73.417969 30.257812 73.578125 30.109375 73.734375 29.953125 c
73.890625 29.792969 74.015625 29.640625 74.105469 29.492188 c 74.203125 29.332031 74.261719 29.164062
74.296875 28.988281 c 74.324219 28.8125 74.308594 28.621094 74.238281 28.417969 c 74.152344 28.164062
74.011719 27.984375 73.820312 27.867188 c 73.621094 27.753906 73.417969 27.730469 73.210938 27.804688
c 73.015625 27.867188 72.875 27.992188 72.796875 28.171875 c 72.71875 28.355469 72.679688 28.613281
72.683594 28.9375 c 72.683594 29.132812 72.683594 29.328125 72.6875 29.53125 c 72.6875 29.730469
72.683594 29.90625 72.675781 30.066406 c 72.664062 30.449219 72.566406 30.769531 72.398438 31.023438
c 72.226562 31.28125 71.972656 31.464844 71.628906 31.578125 c 71.414062 31.648438 71.1875 31.667969
70.953125 31.636719 c 70.71875 31.601562 70.503906 31.519531 70.3125 31.394531 c 70.101562 31.253906
69.917969 31.078125 69.757812 30.855469 c 69.597656 30.636719 69.460938 30.363281 69.351562 30.035156
c f 71.070312 38.109375 m 70.773438 35.207031 l 71.058594 35.179688 l 71.070312 35.242188 71.089844
35.339844 71.109375 35.46875 c 71.136719 35.597656 71.15625 35.683594 71.183594 35.730469 c 71.226562
35.820312 71.28125 35.878906 71.347656 35.902344 c 71.410156 35.929688 71.488281 35.9375 71.582031
35.929688 c 75.285156 35.546875 l 75.367188 35.539062 75.441406 35.515625 75.503906 35.476562 c
75.5625 35.4375 75.609375 35.371094 75.636719 35.273438 c 75.65625 35.214844 75.671875 35.121094
75.683594 35 c 75.699219 34.882812 75.707031 34.78125 75.707031 34.699219 c 75.996094 34.671875 l
76.296875 37.570312 l 76.007812 37.601562 l 75.992188 37.527344 75.972656 37.4375 75.949219 37.332031
c 75.925781 37.222656 75.894531 37.128906 75.859375 37.054688 c 75.820312 36.964844 75.761719
36.90625 75.695312 36.878906 c 75.625 36.855469 75.542969 36.84375 75.457031 36.855469 c 71.761719
37.238281 l 71.667969 37.246094 71.597656 37.269531 71.535156 37.308594 c 71.476562 37.347656
71.433594 37.417969 71.402344 37.511719 c 71.386719 37.570312 71.375 37.65625 71.363281 37.777344 c
71.355469 37.898438 71.355469 38 71.359375 38.078125 c 71.070312 38.109375 l f 74.289062 46.59375 m
74.324219 46.308594 l 74.558594 46.261719 74.808594 46.167969 75.082031 46.027344 c 75.355469
45.890625 75.511719 45.753906 75.546875 45.613281 c 75.566406 45.539062 75.585938 45.441406 75.609375
45.328125 c 75.628906 45.210938 75.644531 45.113281 75.652344 45.039062 c 75.671875 44.859375 l
71.589844 44.375 l 71.5 44.367188 71.417969 44.371094 71.347656 44.394531 c 71.277344 44.421875
71.21875 44.476562 71.164062 44.566406 c 71.140625 44.617188 71.109375 44.703125 71.070312 44.820312
c 71.03125 44.941406 71.003906 45.039062 70.984375 45.117188 c 70.695312 45.085938 l 71.042969
42.167969 l 71.328125 42.199219 l 71.328125 42.273438 71.328125 42.378906 71.324219 42.503906 c
71.320312 42.636719 71.328125 42.722656 71.34375 42.773438 c 71.371094 42.871094 71.414062 42.949219
71.476562 42.984375 c 71.539062 43.027344 71.617188 43.046875 71.710938 43.0625 c 75.828125 43.550781
l 75.847656 43.375 l 75.859375 43.296875 75.867188 43.195312 75.871094 43.078125 c 75.882812
42.964844 75.882812 42.867188 75.882812 42.789062 c 75.882812 42.644531 75.761719 42.472656 75.527344
42.277344 c 75.292969 42.078125 75.070312 41.929688 74.855469 41.828125 c 74.890625 41.539062 l
76.386719 41.71875 l 75.785156 46.773438 l 74.289062 46.59375 l f 72.582031 55.6875 m 72.601562
55.601562 72.617188 55.515625 72.621094 55.445312 c 72.628906 55.367188 72.625 55.289062 72.609375
55.214844 c 72.59375 55.09375 72.554688 54.976562 72.503906 54.871094 c 72.445312 54.757812 72.375
54.644531 72.300781 54.527344 c 72.136719 54.296875 71.957031 54.050781 71.765625 53.789062 c
71.570312 53.523438 71.355469 53.238281 71.125 52.925781 c 70.972656 52.726562 70.855469 52.59375
70.761719 52.519531 c 70.664062 52.453125 70.488281 52.371094 70.230469 52.273438 c 69.539062
52.003906 l 69.457031 51.972656 69.375 51.957031 69.300781 51.96875 c 69.226562 51.976562 69.152344
52.015625 69.082031 52.089844 c 69.050781 52.125 69 52.195312 68.921875 52.316406 c 68.847656
52.429688 68.792969 52.519531 68.769531 52.578125 c 68.496094 52.472656 l 69.550781 49.726562 l
69.820312 49.832031 l 69.796875 49.898438 69.773438 49.996094 69.742188 50.121094 c 69.707031 50.25
69.691406 50.339844 69.6875 50.394531 c 69.695312 50.496094 69.722656 50.570312 69.769531 50.628906 c
69.820312 50.679688 69.890625 50.726562 69.972656 50.761719 c 70.839844 51.089844 l 70.914062
51.121094 70.980469 51.144531 71.042969 51.164062 c 71.097656 51.179688 71.152344 51.191406 71.195312
51.191406 c 71.246094 51.191406 71.308594 51.191406 71.394531 51.179688 c 71.476562 51.164062
71.585938 51.144531 71.714844 51.125 c 72.097656 51.070312 72.457031 51.015625 72.796875 50.949219 c
73.140625 50.894531 73.460938 50.835938 73.765625 50.78125 c 73.917969 50.753906 74.050781 50.710938
74.160156 50.660156 c 74.269531 50.613281 74.363281 50.554688 74.441406 50.492188 c 74.5 50.441406
74.554688 50.378906 74.609375 50.292969 c 74.660156 50.210938 74.703125 50.136719 74.734375 50.070312
c 75.003906 50.175781 l 73.929688 52.984375 l 73.65625 52.878906 l 73.75 52.617188 73.796875
52.433594 73.804688 52.335938 c 73.8125 52.234375 73.789062 52.167969 73.738281 52.152344 c 73.710938
52.140625 73.652344 52.140625 73.566406 52.140625 c 73.480469 52.148438 73.335938 52.164062 73.144531
52.183594 c 72.964844 52.203125 72.714844 52.242188 72.394531 52.289062 c 72.074219 52.34375
71.695312 52.40625 71.261719 52.484375 c 71.789062 53.1875 72.160156 53.667969 72.371094 53.9375 c
72.585938 54.207031 72.722656 54.355469 72.785156 54.378906 c 72.863281 54.40625 72.941406 54.386719
73.019531 54.304688 c 73.097656 54.226562 73.183594 54.082031 73.277344 53.871094 c 73.542969
53.972656 l 72.851562 55.789062 l 72.582031 55.6875 l f 66.925781 39.910156 m 66.925781 54.871094
54.796875 67 39.835938 67 c 24.867188 67 12.738281 54.871094 12.738281 39.910156 c 12.738281
24.941406 24.867188 12.8125 39.835938 12.8125 c 54.796875 12.8125 66.925781 24.941406 66.925781
39.910156 c h 56.671875 53.675781 m 55.773438 51.582031 54.75 49.648438 53.632812 47.9375 c 52.460938
46.132812 51.179688 44.558594 49.828125 43.269531 c 48.371094 41.878906 46.820312 40.800781 45.214844
40.0625 c 44.152344 39.574219 43.070312 39.234375 41.972656 39.042969 c 41.972656 37.785156 l
41.972656 36.601562 41.015625 35.640625 39.832031 35.640625 c 38.652344 35.640625 37.691406 36.601562
37.691406 37.785156 c 37.691406 39.042969 l 36.59375 39.234375 35.507812 39.574219 34.453125 40.0625
c 32.847656 40.800781 31.296875 41.878906 29.835938 43.269531 c 28.484375 44.558594 27.199219
46.132812 26.03125 47.9375 c 24.914062 49.648438 23.890625 51.582031 22.992188 53.675781 c 22.542969
54.730469 23.03125 55.945312 24.078125 56.402344 c 25.128906 56.847656 26.34375 56.359375 26.796875
55.3125 c 28.417969 51.519531 30.457031 48.394531 32.695312 46.261719 c 34.304688 44.726562 35.976562
43.726562 37.691406 43.273438 c 37.691406 46.640625 l 37.691406 47.800781 36.839844 48.253906
36.445312 48.492188 c 36.320312 48.566406 36.050781 48.730469 35.921875 48.808594 c 35.128906
49.324219 34.214844 50.121094 33.363281 51.214844 c 31.824219 53.191406 30.625 55.878906 29.992188
58.792969 c 29.746094 59.914062 30.453125 61.011719 31.570312 61.253906 c 32.691406 61.496094
33.789062 60.792969 34.03125 59.671875 c 34.539062 57.355469 35.457031 55.253906 36.625 53.761719 c
37.21875 52.996094 38.429688 51.714844 39.832031 51.714844 c 41.234375 51.714844 42.445312 52.996094
43.039062 53.761719 c 44.207031 55.253906 45.128906 57.355469 45.632812 59.671875 c 45.875 60.792969
46.976562 61.496094 48.09375 61.253906 c 49.207031 61.011719 49.921875 59.914062 49.675781 58.792969
c 49.042969 55.878906 47.84375 53.191406 46.300781 51.214844 c 45.449219 50.121094 44.535156
49.324219 43.746094 48.808594 c 43.613281 48.730469 43.34375 48.566406 43.214844 48.492188 c
42.824219 48.253906 41.972656 47.800781 41.972656 46.640625 c 41.972656 43.273438 l 43.6875 43.726562
45.359375 44.726562 46.96875 46.261719 c 49.203125 48.394531 51.242188 51.519531 52.867188 55.3125 c
53.316406 56.359375 54.535156 56.847656 55.585938 56.402344 c 56.632812 55.945312 57.121094 54.730469
56.671875 53.675781 c h 41.667969 19.335938 m 41.667969 19.40625 l 41.667969 19.40625 41.667969
28.429688 41.667969 31.636719 c 41.667969 32.222656 41.667969 32.613281 41.667969 32.71875 c
41.667969 33.425781 42.179688 34.445312 43.285156 34.683594 c 45.769531 35.207031 48.605469 36.660156
51.167969 38.777344 c 53.679688 40.847656 56.042969 43.570312 58.1875 46.878906 c 58.808594 47.835938
60.089844 48.105469 61.050781 47.488281 c 61.660156 47.089844 61.988281 46.425781 61.988281 45.75 c
61.988281 45.359375 61.882812 44.96875 61.65625 44.621094 c 59.28125 40.96875 56.640625 37.929688
53.800781 35.582031 c 51.28125 33.5 48.644531 32.011719 45.949219 31.136719 c 45.949219 26.382812 l
49.828125 27.445312 53.875 29.652344 57.359375 32.636719 c 58.226562 33.375 59.53125 33.277344
60.273438 32.410156 c 61.019531 31.542969 60.914062 30.234375 60.046875 29.492188 c 57.824219
27.585938 55.289062 25.894531 52.71875 24.601562 c 50.453125 23.453125 48.183594 22.621094 45.949219
22.109375 c 45.949219 19.335938 l 45.949219 18.15625 44.992188 17.195312 43.808594 17.195312 c 42.625
17.195312 41.667969 18.15625 41.667969 19.335938 c h 35.855469 17.195312 m 34.675781 17.195312
33.714844 18.15625 33.714844 19.335938 c 33.714844 22.109375 l 31.484375 22.621094 29.210938
23.453125 26.945312 24.601562 c 24.375 25.894531 21.839844 27.585938 19.617188 29.492188 c 18.746094
30.234375 18.644531 31.542969 19.390625 32.410156 c 20.132812 33.277344 21.4375 33.375 22.308594
32.636719 c 25.789062 29.652344 29.832031 27.445312 33.714844 26.382812 c 33.714844 31.136719 l
31.023438 32.011719 28.386719 33.5 25.863281 35.582031 c 23.023438 37.929688 20.378906 40.96875
18.007812 44.621094 c 17.78125 44.96875 17.671875 45.359375 17.671875 45.75 c 17.671875 46.425781
18.007812 47.089844 18.617188 47.488281 c 19.574219 48.105469 20.855469 47.835938 21.476562 46.878906
c 23.621094 43.570312 25.984375 40.847656 28.496094 38.777344 c 31.058594 36.660156 33.898438
35.207031 36.382812 34.683594 c 37.488281 34.445312 37.996094 33.425781 37.996094 32.71875 c
37.996094 32.613281 37.996094 32.222656 37.996094 31.636719 c 37.996094 28.429688 37.996094 19.40625
37.996094 19.40625 c 37.996094 19.335938 l 37.996094 18.15625 37.035156 17.195312 35.855469 17.195312
c f Q%
}
\makeatother

\begin{document}

\maketitle

\begin{abstract}
Joint reinforcement learning and self-distillation combines task rewards
with dense teacher guidance for agent training, but their gradients can
partly cancel. A mixing coefficient controls relative scale without
selectively removing opposing components. We present \textbf{CORD}, a
general framework for conflict-aware optimization of two objectives.
CORD separates the choice of protected objective from the condition that
triggers projection, and combines the gradients once per optimizer
mini-batch. A nearest-point formulation yields a closed-form update and
characterizes its first-order effect on the protected objective before
optimizer transformations. Gradient diagnostics connect conflict in agent
training to lost local optimization progress and motivate selective
correction. As a gradient-combination mechanism, CORD can be added to
different joint training recipes while retaining their objectives.
Across three such recipes, it improves ALFWorld success by
1.5--15.9 percentage points. On WebShop, it improves two of the three pairs,
with SDAR rising from 60.7\% to 64.6\% success.
Our analyses explain why protection direction and projection gate must be
chosen together, and why the distillation objective and its mixing weight
remain consequential after projection.

\par\vspace{5pt}
{\footnotesize\sffamily\noindent
\textbf{Code}\enspace
\href{https://github.com/LvAoAo/CORD-release}{github.com/LvAoAo/CORD-release}\par}

\end{abstract}

\section{Introduction}
\label{sec:intro}

Language-model agents learn to complete tasks through sequences of actions,
but outcome rewards provide limited credit for individual decisions
\citep{feng2025gigpo}. Group Relative Policy Optimization (GRPO) compares
returns across sampled attempts \citep{shao2024deepseekmath}, yet supplies
little learning signal when rewards tie. On-policy distillation (OPD)
provides dense token-level feedback on student trajectories
\citep{agarwal2024opd}, but its usefulness depends on the teacher's guidance.
Joint training couples this guidance with reward-based optimization,
allowing agents to learn from both task outcomes and informed action
preferences \citep{lu2026sdar}.

Existing approaches refine how teacher guidance enters learning.
Skill-conditioned self-distillation supplies an informed contextual view
of the policy \citep{wang2026skillsd}, while SDAR weights token supervision
using detached teacher--student confidence gaps \citep{lu2026sdar}.
RLSD incorporates detached log-probability differences into token
advantages, preserving their signs without a separate distillation loss
\citep{yang2026rlvr}. Confidence gates reweight token
signals, whereas a gradient inner-product gate tests their accumulated
effect. The two operations can be composed for separate RL and OPD losses.

Joint optimization still requires a rule for combining these gradients.
Which contribution to preserve on conflict depends on the losses' roles
and the training goal. A mixing coefficient changes scale but cannot
selectively remove opposing components.
Figure~\ref{fig:motivation-triptych} diagnoses this interaction:
\begin{enumerate}[label=\textbf{(\alph*)},leftmargin=1.8em,itemsep=2pt,topsep=3pt,parsep=0pt]
\item \textbf{Conflict occurs during agent training.}
  In the WebShop \citep{yao2022webshop} log used for the left panel,
  48.2\% of mini-batch gradient records have negative RL--distillation cosine.
\item \textbf{Conflict reduces local optimization progress.}
  The middle panel shows that a conflicting distillation gradient reduces
  first-order progress on the RL surrogate relative to RL alone.
\item \textbf{The opposing component can be removed selectively.}
  On conflict, projecting one gradient onto the other's orthogonal
  complement moves the displayed cosine quantiles to zero and recovers the
  protected gradient's first-order progress. Aligned pairs remain unchanged.
\end{enumerate}
\begin{figure}[t]
\centering
\includegraphics[width=0.32\linewidth]{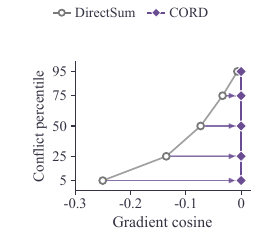}\hfill
\includegraphics[width=0.32\linewidth]{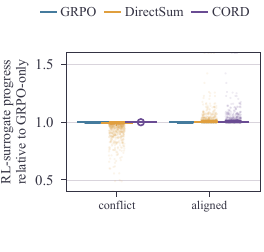}\hfill
\includegraphics[width=0.32\linewidth]{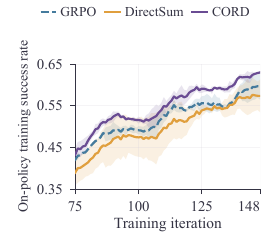}
\caption{\textbf{From gradient conflict to improved agent training.}
\textbf{Left, observations (a) and (c):} cosine quantiles of conflicting
WebShop/Qwen3-1.7B gradient records \citep{yang2025qwen3}, before and after analytic CORD projection.
\textbf{Middle, observation (b):} first-order RL-surrogate progress relative
to RL alone on the same log, evaluated at $\lambda=0.005$ with RL protection.
\textbf{Right:} preliminary WebShop/3B training success with $k_3$,
$\lambda=0.001$, and conflict-only distillation protection.
The right panel shows iterations 75--148 after 21-iteration smoothing.
Curves average two training seeds and bands show their range.
Appendix~\ref{app:gradient-diagnostics} gives measurement and smoothing details.}
\label{fig:motivation-triptych}
\end{figure}

We introduce \textbf{CORD}, a general framework for conflict-aware
optimization of two objectives. CORD accumulates both gradients under the
specified loss weights for each optimizer mini-batch. It retains the
selected protected gradient and minimally modifies the other before
combining them. Conflict-only projection removes opposing components,
while an always-projection variant also removes aligned components.
Either objective can be protected. Our analysis relates the resulting
first-order contribution on fixed sampled losses to finite step size and
optimizer transformations. The agent experiments evaluate protection,
triggering, and loss construction while retaining each host's objectives.
Appendix~\ref{app:related} discusses related work.

Our contributions are threefold:
\begin{itemize}[leftmargin=1.4em,itemsep=1pt]
\item \textbf{Conflict-aware joint optimization.}
  We connect measured gradient conflict to lost first-order progress and
  formulate CORD to preserve a selected objective's contribution while
  correcting the conflicting component of the other.
\item \textbf{Incremental gains across training methods.}
  CORD improves all three host recipes on ALFWorld by 1.5--15.9 percentage
  points, and two of three on WebShop. These paired comparisons establish
  the value of gradient combination while retaining each host's objectives.
\item \textbf{Guidance for configuring joint training.}
  We show why protection direction and triggering must be selected together,
  and how the distillation loss, mixing weight, and aggregation determine
  the gradient that projection receives. Agent cases connect these choices
  to attribute completion, repeated actions, and token probability changes.
\end{itemize}

\section{Preliminaries}
\label{sec:preliminaries}

\subsection{Group Relative Policy Optimization (GRPO)}

An agent samples response tokens $y_t$ from a rollout policy
$\pi_{\mathrm{old}}$, conditioned on interaction history $x_t$.
GRPO \citep{shao2024deepseekmath} compares returns within a group of attempts
at the same task. Each response record receives a fixed advantage
$\widehat A_t=(R_t-\mu_{g(t)})/(\sigma_{g(t)}+\epsilon)$, where
$\mu_{g(t)}$ and $\sigma_{g(t)}$ are its group's return mean and standard
deviation. The host specifies which response records enter these statistics.
With probability ratio $r_t=\pi_\theta(y_t\mid x_t)/\pi_{\mathrm{old}}(y_t\mid x_t)$,
the clipped policy loss to minimize is
\begin{equation}
\label{eq:prelim-grpo}
\mathcal L_{\mathrm{clip}}
=-\mathcal A_B\!\left[\min\!\left(r_t\widehat A_t,
\operatorname{clip}(r_t,1-\varepsilon,1+\varepsilon)\widehat A_t\right)\right].
\end{equation}
Here $\mathcal A_B$ aggregates valid response tokens in mini-batch $B$ with
the host's specified weights, with padding and observation tokens masked.
Our GRPO actor loss $\Lgrpo$ includes the host's clipping safeguards,
reference-policy KL regularization, and entropy term when enabled
(Appendix~\ref{app:implementation-evaluation}). All gradients below are
loss gradients, so a descent update subtracts the combined gradient.

\subsection{On-Policy Distillation}

On-policy distillation scores the student's sampled tokens with a teacher
rather than requiring teacher-generated trajectories
\citep{agarwal2024opd,zhao2026opd,wang2026skillsd}. In our skill-conditioned instantiation,
the teacher uses the current policy with additional skill context $S(x_t)$
and gradients disabled \citep{lu2026sdar}. Define
\begin{equation}
\label{eq:prelim-logratio}
s_t=\log\pi_\theta(y_t\mid x_t),\qquad
q_t=\operatorname{sg}\!\left[\log\pi_\theta(y_t\mid x_t\oplus S(x_t))\right],
\qquad \ell_t=s_t-q_t,
\end{equation}
where $\operatorname{sg}$ denotes stop-gradient. A token objective $\psi_t$
and confidence weight $w_t$ define a separate loss
\begin{equation}
\label{eq:prelim-opd}
\Lopd=\mathcal A_B[w_t\psi_t],\qquad
\psi_t^{k_3}=\rho_{\mathrm{on},t}(e^{-\ell_t}-1+\ell_t),\qquad
\psi_t^{\rho_{\mathrm d}}=\rho_{\mathrm{on},t}\ell_t.
\end{equation}
Equation~\ref{eq:prelim-opd} suppresses the numerical exponential cap.
Gradients pass through the capped student-to-rollout ratio
$\rho_{\mathrm{on},t}$ and $s_t$, while teacher scores and enabled
confidence weights are detached. The host $k_1$ term and numerical caps follow the inherited recipes
in Appendix~\ref{app:baseline-details}. Token weights enter before
aggregation, while the mixing coefficient below scales the resulting loss.

\subsection{Multi-Objective Optimization and Gradient Conflict}
\label{sec:setup}

Bold symbols denote parameter-space vectors. Losses, inner products,
and coefficients are scalars. We use $\Gobj{j}$ for a complete aggregated
objective gradient and $\gobj{j}{m}$ for a micro-batch gradient.
The role vectors $\bm{u},\bm{v},\bm{a},\bm{b}$ refer to these aggregates
after the stated weighting and protection choices.

For two mini-batch losses $F_1,F_2$ and parameters $\theta\in\R^d$, weighted-sum
optimization minimizes $F_1+\lambda F_2$. At fixed $\lambda>0$, write
\begin{equation}
\label{eq:oneliner}
\bm{u}=\Gobj{1}=\nabla F_1(\theta),\qquad
\bm{v}=\lambda\Gobj{2}=\lambda\nabla F_2(\theta),\qquad
\bm{g}_{\mathrm{sum}}=\bm{u}+\bm{v}.
\end{equation}
Our agent instantiation uses $F_1=\Lgrpo$ and $F_2=\Lopd$. The same
combination applies to any two differentiable losses with specified
aggregation. A small step along $-\bm{g}_{\mathrm{sum}}$ changes $F_1$ by
\begin{equation}
\label{eq:prelim-conflict}
F_1(\theta-\eta\bm{g}_{\mathrm{sum}})-F_1(\theta)
=-\eta\big(\norm{\bm{u}}^2+\inner{\bm{u}}{\bm{v}}\big)+o(\eta).
\end{equation}
For $\lambda F_2$, the corresponding expression replaces
$\norm{\bm{u}}^2$ by $\norm{\bm{v}}^2$. A negative cross term therefore
reduces each objective's first-order progress relative to its own gradient.
We call this \emph{gradient conflict}. Scaling $\lambda>0$ changes its
magnitude but not its sign. CORD addresses this interaction by choosing
which weighted gradient to preserve and minimally modifying the other.

\section{Method}
\label{sec:method}

CORD combines two objective gradients by making the protected direction and
projection gate explicit. It keeps one weighted gradient intact and makes the
smallest change to the other that satisfies the chosen geometric constraint.
The same construction accommodates either protection direction. The loss
functions and their relative weight specify the signals supplied to it.
Figure~\ref{fig:cord-main} illustrates the combination for GRPO and OPD,
with an analytic loss surface showing example update paths.

\begin{figure}[!t]
\centering
\includegraphics[width=0.90\linewidth]{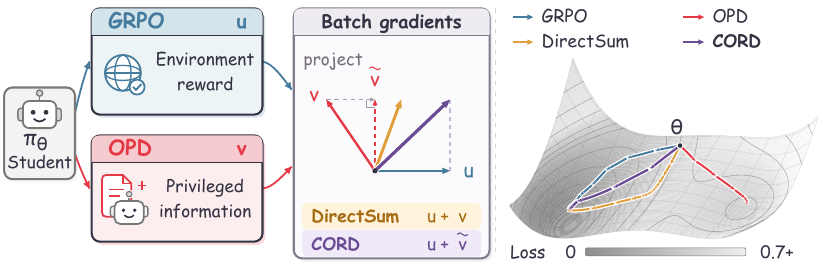}
\caption{\textbf{CORD combines two learning signals through gradient projection.}
Left: GRPO uses environment rewards, while OPD uses a teacher conditioned on
privileged skill context, with teacher gradients disabled.
Middle: accumulated gradients $\bm u=\nabla F_1$ and
$\bm v=\lambda\nabla F_2$ give the DirectSum $\bm u+\bm v$.
For the conflicting pair shown, CORD protects $\bm u$ and removes the
opposing component of $\bm v$, giving $\bm u+\tilde{\bm v}$.
Either objective can be protected.
Right: illustrative paths in an analytic two-loss construction with GRPO
protection. Height and shading encode its GRPO loss. Arrows denote updates
of step size $0.7$, stopping within $0.10$ of either local minimum.}
\label{fig:cord-main}
\end{figure}

\subsection{Choosing a protected direction}

Using the weighted gradients in Equation~\ref{eq:oneliner}, CORD selects a
protected gradient $\bm{a}$ and a candidate gradient $\bm{b}$ from
$(\bm{a},\bm{b})\in\{(\bm{u},\bm{v}),(\bm{v},\bm{u})\}$.
The protected loss $F_a$ is correspondingly $F_1$ or $\lambda F_2$, so that
$\nabla F_a=\bm{a}$. The loss definitions and training goal inform this protection choice.
Swapping roles alters the combined direction without redefining losses
or fixing priority across problems.

\subsection{Projection direction and gate}
\label{sec:template}
\label{sec:update}

Under the \emph{conflict-only} gate, CORD combines gradients as
\begin{equation}
\label{eq:gfinal}
\gfinal=\bm{a}+\tilde{\bm{b}},\qquad
\tilde{\bm{b}}=
\begin{cases}
\bm{b}-\dfrac{\inner{\bm{a}}{\bm{b}}}{\norm{\bm{a}}^2}\bm{a},
 & \inner{\bm{a}}{\bm{b}}<0,\\[5pt]
\bm{b}, & \inner{\bm{a}}{\bm{b}}\ge0.
\end{cases}
\end{equation}
A zero protected gradient leaves the candidate unchanged.

\begin{lemma}[Closest feasible replacement]
\label{lem:proj}
For $\bm{a},\bm{b}\in\R^d$, the candidate $\tilde{\bm{b}}$ in
Equation~\ref{eq:gfinal} is the unique solution of
\begin{equation}
\label{eq:lem-proj}
\min_{\bm{z}\in\R^d}\ \tfrac12\norm{\bm{z}-\bm{b}}^2
\quad\text{subject to}\quad \inner{\bm{a}}{\bm{z}}\ge0.
\end{equation}
\end{lemma}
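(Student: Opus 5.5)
We view the problem as Euclidean projection of $\bm{b}$ onto the closed convex set $H=\{\bm{z}:\inner{\bm{a}}{\bm{z}}\ge0\}$. Existence and uniqueness come first and are independent of the formula. The objective $\tfrac12\norm{\bm{z}-\bm{b}}^2$ is strictly convex and coercive. The feasible set $H$ is nonempty, since it contains $\bm{0}$, and it is closed and convex, being a half-space when $\bm{a}\neq\bm{0}$ and all of $\R^d$ when $\bm{a}=\bm{0}$. Hence a minimizer exists and is unique. It then remains to verify that $\tilde{\bm{b}}$ from Equation~\ref{eq:gfinal} is feasible and optimal. We use the convention stated after Equation~\ref{eq:gfinal}: if $\bm{a}=\bm{0}$ then $\tilde{\bm{b}}=\bm{b}$.

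We split into cases matching Equation~\ref{eq:gfinal}. \textbf{Case 1:} $\inner{\bm{a}}{\bm{b}}\ge0$, which includes $\bm{a}=\bm{0}$. Then $\bm{b}$ itself is feasible and attains objective value $0$, which is the global lower bound. So $\tilde{\bm{b}}=\bm{b}$ is optimal, and by uniqueness it is the only solution.

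\textbf{Case 2:} $\inner{\bm{a}}{\bm{b}}<0$, which forces $\bm{a}\neq\bm{0}$. First, a direct computation gives $\inner{\bm{a}}{\tilde{\bm{b}}}=\inner{\bm{a}}{\bm{b}}-\inner{\bm{a}}{\bm{b}}=0$, so $\tilde{\bm{b}}$ is feasible. For optimality we use the variational characterization of projection onto a convex set. The point $\tilde{\bm{b}}$ is the projection iff $\inner{\bm{b}-\tilde{\bm{b}}}{\bm{z}-\tilde{\bm{b}}}\le0$ for all $\bm{z}\in H$. Here $\bm{b}-\tilde{\bm{b}}=c\,\bm{a}$ with $c=\inner{\bm{a}}{\bm{b}}/\norm{\bm{a}}^2<0$. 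Therefore $\inner{\bm{b}-\tilde{\bm{b}}}{\bm{z}-\tilde{\bm{b}}}=c\,\inner{\bm{a}}{\bm{z}}\le0$ for every feasible $\bm{z}$.

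If we prefer to avoid citing the characterization, we can expand instead. Writing $\bm{z}=\tilde{\bm{b}}+\bm{w}$, we get
\[
\norm{\bm{z}-\bm{b}}^2=\norm{\bm{w}-c\bm{a}}^2=\norm{\bm{w}}^2-2c\inner{\bm{a}}{\bm{w}}+c^2\norm{\bm{a}}^2,
\]
and $\inner{\bm{a}}{\bm{w}}=\inner{\bm{a}}{\bm{z}}\ge0$. Since $-2c>0$, this is at least $c^2\norm{\bm{a}}^2=\norm{\tilde{\bm{b}}-\bm{b}}^2$, with equality only when $\bm{w}=\bm{0}$. This direct argument also gives uniqueness in this case. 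A KKT check with multiplier $\mu=-c\ge0$ is an equivalent route.

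The main obstacle is minor bookkeeping rather than any genuine difficulty. We must treat the degenerate case $\bm{a}=\bm{0}$ consistently with the paper's convention, and track the sign of $c$ carefully, since the strict inequality $\inner{\bm{a}}{\bm{b}}<0$ is exactly what makes the cross term nonnegative. We will present the expansion argument as the main proof because it is self-contained.
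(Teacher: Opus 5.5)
Your proof is correct, but it proceeds by verification rather than derivation. The paper writes the Lagrangian $\tfrac12\norm{\bm{z}-\bm{b}}^2-\mu\inner{\bm{a}}{\bm{z}}$ and reads off $\bm{z}=\bm{b}+\mu\bm{a}$ from stationarity. It then lets $\mu\ge0$ and complementary slackness select $\mu=0$ or $\mu=-\inner{\bm{a}}{\bm{b}}/\norm{\bm{a}}^2$. Finally, it cites sufficiency of the KKT conditions for convex problems, together with strict convexity for uniqueness.

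Your completing-the-square step needs the formula in hand but no optimization theory, and it gives slightly more. In the conflicting case it shows $\norm{\bm{z}-\bm{b}}^2\ge\norm{\tilde{\bm{b}}-\bm{b}}^2+\norm{\bm{z}-\tilde{\bm{b}}}^2$ for every feasible $\bm{z}$. This is a quantitative (Pythagorean-type) optimality bound, and uniqueness follows immediately from it. The paper's route, in exchange, explains where the two branches of Equation~\ref{eq:gfinal} come from: the gate $\inner{\bm{a}}{\bm{b}}<0$ is precisely the condition for a positive multiplier. It is also the template that carries over when more constraints are imposed.

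There are two minor redundancies in your write-up. The opening existence argument via coercivity is unnecessary, since your case analysis exhibits the minimizer and proves it unique directly. Also, no separate convention is needed for $\bm{a}=\bm{0}$, because then $\inner{\bm{a}}{\bm{b}}=0$ already selects the second branch of Equation~\ref{eq:gfinal}.
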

\noindent\emph{Proof.}
For $\bm{a}=\bm{0}$ every vector is feasible, and the minimizer is $\bm{b}$.
For $\bm{a}\ne\bm{0}$, the Lagrangian is
$\tfrac12\norm{\bm{z}-\bm{b}}^2-\mu\inner{\bm{a}}{\bm{z}}$.
Stationarity gives $\bm{z}=\bm{b}+\mu\bm{a}$.
Feasibility, $\mu\ge0$, and complementary slackness give
$\mu=0$ when $\inner{\bm{a}}{\bm{b}}\ge0$, and
$\mu=-\inner{\bm{a}}{\bm{b}}/\norm{\bm{a}}^2$ otherwise.
These conditions are sufficient for this convex problem, and strict convexity
makes the minimizer unique. $\square$

The \emph{always} gate instead requires
$\inner{\bm{a}}{\tilde{\bm{b}}}=0$: for nonzero $\bm{a}$ it applies the
projection in the first branch at every update, including aligned gradients.
Disabling projection sets $\tilde{\bm{b}}=\bm{b}$ and restores weighted
summation. CORD uses the single-constraint geometry of A-GEM
\citep{chaudhry2019efficient}, with two current objective gradients and an
explicit choice of protection direction and gate. PCGrad
\citep{yu2020gradient} modifies both gradients on conflict, whereas CORD keeps the
protected gradient intact.

\noindent\begin{minipage}{\linewidth}
\begin{proposition}[First-order effect on the protected loss]
\label{prop:firstorder}
Fix $\theta$, $\lambda$, and the mini-batch losses, and suppose $F_a$ is
differentiable at $\theta$. With conflict-only projection,
\begin{align}
\inner{\bm{a}}{\gfinal}
&=\norm{\bm{a}}^2+\max\{\inner{\bm{a}}{\bm{b}},0\},
\label{eq:audit-grpo}\\
F_a(\theta-\eta\gfinal)-F_a(\theta)
&=-\eta\inner{\bm{a}}{\gfinal}+o(\eta).
\notag
\end{align}
for $\eta\to0^+$. Whenever projection is applied,
$\inner{\bm{a}}{\gfinal}=\norm{\bm{a}}^2$. This equality also holds under the
always gate.
\end{proposition}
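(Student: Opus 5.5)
The plan is to compute $\inner{\bm{a}}{\gfinal}=\norm{\bm{a}}^2+\inner{\bm{a}}{\tilde{\bm{b}}}$ directly from Equation~\ref{eq:gfinal}, then obtain the loss expansion from differentiability of $F_a$. By linearity of the inner product, everything reduces to evaluating $\inner{\bm{a}}{\tilde{\bm{b}}}$ case by case. The convention that a zero protected gradient leaves the candidate unchanged is treated separately.

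\textbf{Step 1 (case split on the gate).} There are three cases.
\begin{itemize}
\item If $\bm{a}=\bm{0}$, both sides of Equation~\ref{eq:audit-grpo} vanish, since $\inner{\bm{a}}{\bm{b}}=0$.
\item If $\bm{a}\neq\bm{0}$ and $\inner{\bm{a}}{\bm{b}}\ge0$, then $\tilde{\bm{b}}=\bm{b}$, so $\inner{\bm{a}}{\tilde{\bm{b}}}=\inner{\bm{a}}{\bm{b}}=\max\{\inner{\bm{a}}{\bm{b}},0\}$.
\item If $\inner{\bm{a}}{\bm{b}}<0$ (which forces $\bm{a}\neq\bm{0}$), the projection branch gives
\[
\inner{\bm{a}}{\tilde{\bm{b}}}=\inner{\bm{a}}{\bm{b}}-\frac{\inner{\bm{a}}{\bm{b}}}{\norm{\bm{a}}^2}\norm{\bm{a}}^2=0=\max\{\inner{\bm{a}}{\bm{b}},0\}.
\]
\end{itemize}
This establishes Equation~\ref{eq:audit-grpo}. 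It also shows that whenever projection is applied, $\inner{\bm{a}}{\gfinal}=\norm{\bm{a}}^2$. As a cross-check, Lemma~\ref{lem:proj} confirms that $\tilde{\bm{b}}$ satisfies the constraint $\inner{\bm{a}}{\tilde{\bm{b}}}\ge0$. Complementary slackness with $\mu>0$ in the conflict case then gives equality.

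\textbf{Step 2 (always gate).} Under the always gate with $\bm{a}\neq\bm{0}$, the projection formula is applied unconditionally, and the same one-line computation gives $\inner{\bm{a}}{\tilde{\bm{b}}}=0$. Hence $\inner{\bm{a}}{\gfinal}=\norm{\bm{a}}^2$. For $\bm{a}=\bm{0}$ both sides are zero, so the equality holds trivially.

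\textbf{Step 3 (first-order expansion).} Since $F_a$ is differentiable at $\theta$ with $\nabla F_a(\theta)=\bm{a}$ (by the role assignment, including the $\lambda$ scaling when $F_a=\lambda F_2$), we have
\[
F_a(\theta+\bm{h})=F_a(\theta)+\inner{\bm{a}}{\bm{h}}+o(\norm{\bm{h}}).
\]
Set $\bm{h}=-\eta\gfinal$ with $\gfinal$ fixed, because $\theta$, $\lambda$, and the mini-batch are fixed. Then $\norm{\bm{h}}=\eta\norm{\gfinal}$, so $o(\norm{\bm{h}})=o(\eta)$ as $\eta\to0^+$.

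The only real subtlety is this last point: $\gfinal$ must not depend on $\eta$, which is exactly why the statement fixes $\theta$, $\lambda$, and the losses. Optimizer transformations are excluded. The rest is routine algebra.
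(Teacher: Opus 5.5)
Your proof is correct and follows essentially the same route as the paper. Both arguments get $\inner{\bm{a}}{\tilde{\bm{b}}}=0$ by orthogonality on the projection branch (under either gate) and use $\tilde{\bm{b}}=\bm{b}$ otherwise, handle the zero-reference case separately, and obtain the expansion from differentiability along the fixed direction $\gfinal$; you simply write out the algebra and the $o(\norm{\bm{h}})=o(\eta)$ step that the paper leaves implicit.
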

\noindent\emph{Proof.}
The projected candidate is orthogonal to $\bm{a}$, giving the equality on
projection. Otherwise $\tilde{\bm{b}}=\bm{b}$ and the inner product is
nonnegative. Differentiability gives the stated expansion for this fixed
direction. The zero-reference case has both inner products equal to zero.
$\square$
\end{minipage}

Projection preserves the protected gradient's first-order descent rate.
When it applies with $\bm{a}\ne\bm{0}$, orthogonal decomposition of
$\bm{b}$ gives
\begin{equation}
\label{eq:audit-opd}
\inner{\bm{b}}{\tilde{\bm{b}}}=\norm{\tilde{\bm{b}}}^2
=\norm{\bm{b}}^2-\frac{\inner{\bm{a}}{\bm{b}}^2}{\norm{\bm{a}}^2}.
\end{equation}
This identity describes the surviving candidate alone. The full direction
also includes $\bm{a}$, so the unprotected loss need not decrease.
These properties concern the ideal combined gradient. Global norm clipping
rescales it, whereas AdamW's momentum, adaptive scaling, and weight decay
can change the update direction \citep{loshchilov2019decoupled}.
Appendix~\ref{app:actual-update} relates the first-order term to the actual
parameter displacement and finite-step loss change.

\subsection{How protection interacts with the mixing weight}
\label{sec:protection-scale}

The two protection directions share a conflict test but respond differently
to the coefficient. Write $\Gobj{1}=\nabla F_1$ and
$\Gobj{2}=\nabla F_2$, and fix a conflicting pair with
$c=\inner{\Gobj{1}}{\Gobj{2}}<0$. Equation~\ref{eq:gfinal} becomes
\begin{align}
\bm{g}_{\mathrm{protect}\,1}
&=\Gobj{1}+\lambda\left(\Gobj{2}-
  \frac{c}{\norm{\Gobj{1}}^2}\Gobj{1}\right),\label{eq:protect-scale-one}\\
\bm{g}_{\mathrm{protect}\,2}
&=\lambda\Gobj{2}+\left(\Gobj{1}-
  \frac{c}{\norm{\Gobj{2}}^2}\Gobj{2}\right).\label{eq:protect-scale-two}
\end{align}
In the first case, $\lambda$ scales the candidate and its correction
together. In the second, it scales the protected gradient, while the
correction to $\Gobj{1}$ is independent of $\lambda$. If $\phi$ is the
angle between the unweighted gradients, the corresponding correction
norms are $\lambda\norm{\Gobj{2}}|\cos\phi|$ and
$\norm{\Gobj{1}}|\cos\phi|$. Swapping protection therefore changes which gradient component
$\lambda$ controls.

For a fixed pair and $\lambda>0$, scaling does not change the conflict
decision. Conflict-only projection retains aligned candidates. The always
gate removes their parallel component as well. The coefficient controls
scale, while the gate controls whether a helpful component is retained.
Appendix~\ref{app:task-projection-intervention} isolates these effects at
fixed gradients, while RQ2 compares the resulting trained policies.

\subsection{Accumulation and agent training}
\label{sec:minibatch}

For an aggregation rule $F_j=\sum_m\alpha_m F_j^{(m)}$, with fixed
weights $\alpha_m$, define $\gobj{j}{m}=\nabla F_j^{(m)}$ and
$\Gobj{j}=\sum_m\alpha_m\gobj{j}{m}$.
CORD acts once on these accumulated mini-batch gradients. Projection is
nonlinear in the gradient pair, so projecting each micro-batch separately
usually produces a different update. Accumulating first defines the
projection at the optimizer's update scale. The loss aggregation rule sets
the weights before accumulation. Keeping those weights fixed makes
repartitioning a computational choice, whereas changing from micro-batch to
global-token or trajectory averaging changes the gradients themselves.
Appendix~\ref{app:optimization-sensitivity} compares these choices.

\begin{algorithm}[t]
\DontPrintSemicolon
\KwIn{mini-batch $B$; losses $F_1,F_2$ with specified aggregation;
coefficient $\lambda$; protection direction; gate}
Accumulate $\Gobj{1}=\nabla F_1$ and $\Gobj{2}=\nabla F_2$ separately over $B$\;
$\bm{u}\leftarrow\Gobj{1}$,\quad $\bm{v}\leftarrow\lambda\Gobj{2}$\;
Select $(\bm{a},\bm{b})\leftarrow(\bm{u},\bm{v})$ or $(\bm{v},\bm{u})$\;
Reduce the inner-product and norm statistics across gradient shards\;
$\tilde{\bm{b}}\leftarrow\bm{b}$\;
\If{$\norm{\bm{a}}^2>0$ and (gate is always or
(gate is conflict-only and $\inner{\bm{a}}{\bm{b}}<0$))}{
$\tilde{\bm{b}}\leftarrow\bm{b}-
(\inner{\bm{a}}{\bm{b}}/\norm{\bm{a}}^2)\bm{a}$\;
}
Pass $\gfinal\leftarrow\bm{a}+\tilde{\bm{b}}$ to the optimizer\;
\caption{CORD gradient combination. At $\lambda=0$, use $\Gobj{1}$ directly.}
\label{alg:orthosd}
\end{algorithm}

For agents, Equation~\ref{eq:oneliner} becomes $\bm{u}=\Gr$ and
$\bm{v}=\lambda\Go$: protecting GRPO projects OPD, while protecting OPD
projects GRPO. Skill-SD \citep{wang2026skillsd} uses $k_3$. GRPO+OPSD and SDAR \citep{lu2026sdar} retain their host
$k_1$ objectives and weighting conventions. The loss analysis also examines
$\rho_{\mathrm d}$ from Equation~\ref{eq:prelim-opd}.
Appendix~\ref{app:baseline-details} specifies the recipes. Main host comparisons protect RL with conflict-only projection.
The direction and gate analysis tests both protection choices.

\paragraph{Computational complexity.}
Given the two objective gradients, CORD uses $O(d)$ arithmetic and $O(d)$
additional gradient storage for $d$ trainable parameters. With $K$ PPO
mini-batches per training iteration, the gradient-combination cost is
$O(Kd)$. When the objectives depend on the same student outputs, their
gradient evaluations can share the forward computation. They still require
distinct objective-gradient information, followed by inner products,
a projection, and vector addition. Shared computation and fused vector
operations can reduce redundant work without changing the projection rule.
For a fixed workload, actor overhead differs from full-iteration overhead. Rollout, teacher
scoring, and environment interaction are shared stages, so their share of
each iteration determines how additional gradient work affects total time.
The component model in Appendix~\ref{app:computational-cost} holds the
workload fixed and separates measured actor timings from estimated
iteration overhead.

\section{Experiments}
\label{sec:exp}

We test whether changing the gradient-combination rule adds value across
joint training methods, then use the analyses to guide its configuration.
The experiments answer three research questions:
\textbf{(RQ1) Effectiveness:} How much does CORD improve existing joint
training recipes?
\textbf{(RQ2) Projection choices:} How do the protected objective, projection
gate, and loss construction guide practical configuration?
\textbf{(RQ3) Agent behavior:} How do update choices appear in agent
interactions?

\subsection{Experimental setup}
\label{sec:exp-setup}

\paragraph{Tasks and training.}
We use the SDAR agent suite \citep{lu2026sdar} with
Qwen2.5-3B-Instruct \citep{qwen2024qwen25}.
ALFWorld \citep{shridhar2021alfworld} evaluates household tasks over at most
50 actions. We report success across six categories and the pooled average.
WebShop \citep{yao2022webshop} evaluates instruction-following shopping.
We cap episodes at 15 actions and report graded task score and strict
success rate on a 0--100 scale.
Our shared trainer uses 150 training iterations, eight rollouts per prompt,
AdamW at $10^{-6}$, reference-KL coefficient $0.01$, and responses capped at
512 tokens on eight NVIDIA A100 40\,GB GPUs. Full settings are given in
Appendix~\ref{app:experimental-details}.

\paragraph{Methods.}
Table~\ref{tab:main} includes Vanilla, Skill-Prompt, and OPSD values reported
by \citet{lu2026sdar}. GRPO and RLSD results are our own runs.
RLSD incorporates the self-distillation signal into the policy optimization
objective \citep{yang2026rlvr}.
Following SDAR, the joint recipes share a predefined skill bank whose
contents remain fixed throughout training. Each Base--CORD pair uses the
same bank and skill-selection rule to isolate gradient combination.
Household tasks select task-specific manipulation skills, while shopping
instructions select relevant skills by keyword alongside general
strategies. This context guides teacher scoring of student-generated
actions and is absent from the student's rollout and evaluation inputs.
We apply CORD to three joint training recipes in this shared interface:
Skill-SD \citep{wang2026skillsd} ($k_3$, $\lambda=0.001$), GRPO+OPSD
(host $k_1$, $\lambda=0.005$), and SDAR \citep{lu2026sdar}
(confidence-weighted host $k_1$, $\lambda=0.01$).
Each recipe without CORD uses naive addition. $+$ CORD retains its coefficient,
protects the RL gradient, and projects only on conflict.
RQ2 then varies the protected objective and projection gate.

\paragraph{Evaluation.}
We fix validation tasks to track progress on the same problems during
training and hold the final evaluation tasks constant across paired methods.
For the main comparison and full-training analyses in RQ2, including
PCGrad \citep{yu2020gradient}, we evaluate the final policy four times
after 150 updates at temperature $0.4$. Each run uses one training seed,
and the displayed $\pm$ is the population standard deviation across
these four passes.
Our WebShop evaluation uses 500 tasks, whereas the published baseline
values in Table~\ref{tab:main} use the 128-task protocol of
\citet{lu2026sdar}.
Appendix~\ref{app:implementation-evaluation} details the protocol.

\subsection{RQ1: Incremental effectiveness}
\label{sec:exp-main}

\begin{table}[t]
\centering
\caption{\textbf{Final results on Qwen2.5-3B-Instruct.}
ALFWorld success and WebShop score/success are on a 0--100 scale.
$\dagger$: \citet{lu2026sdar}.
$*$: skills at inference. Our step-150 results use one training seed and
four evaluation passes. $\pm$ denotes decode standard deviation.
CORD protects RL with conflict-only
projection. $\Delta=\mathrm{CORD}-\mathrm{Base}$ uses displayed means:
success differences are percentage points and score differences are score
points, with red/green indicating increases/decreases. Deltas have no assigned
standard deviation and are excluded from ranking. Bold/underline mark the
largest/second-largest distinct method means. Ties share the mark.}
\label{tab:main}
\begingroup
\small
\setlength{\tabcolsep}{1.4pt}
\renewcommand{\arraystretch}{1.22}
\begin{tabularx}{\linewidth}{@{}l*{6}{>{\centering\arraybackslash}X}>{\columncolor{verylightgray}\centering\arraybackslash}X>{\centering\arraybackslash}X>{\columncolor{verylightgray}\centering\arraybackslash}X@{}}
\toprule
& \multicolumn{7}{c}{ALFWorld} & \multicolumn{2}{c}{WebShop}\\
\cmidrule(lr){2-8}\cmidrule(l){9-10}
Method & Pick & Look & Clean & Heat & Cool & Pick2 & \textbf{Avg.} & Score & \shortstack{Success\\Rate}\\
\midrule
Vanilla$^\dagger$ & $44.4$ & $11.1$ & $6.2$ & $15.4$ & $28.6$ & $12.5$ & $21.9$ & $6.7$ & $0.8$\\
Skill-Prompt$^{*\dagger}$ & $51.7$ & $66.7$ & $48.4$ & $0.0$ & $4.3$ & $10.0$ & $28.9$ & $0.2$ & $0.8$\\
OPSD$^\dagger$ & $48.8$ & $41.7$ & $16.7$ & $0.0$ & $15.8$ & $16.7$ & $28.1$ & $11.3$ & $3.1$\\
GRPO & $93.5{\scriptstyle\pm 3.1}$ & $72.9{\scriptstyle\pm 12.3}$ & $67.0{\scriptstyle\pm 3.0}$ & $62.5{\scriptstyle\pm 5.9}$ & $59.5{\scriptstyle\pm 2.4}$ & $66.3{\scriptstyle\pm 8.3}$ & $71.3{\scriptstyle\pm 1.9}$ & $71.1{\scriptstyle\pm 1.1}$ & $59.2{\scriptstyle\pm 1.5}$\\
RLSD & $95.4{\scriptstyle\pm 1.6}$ & $68.8{\scriptstyle\pm 3.6}$ & $83.0{\scriptstyle\pm 3.9}$ & $\mathbf{87.5}{\scriptstyle\pm 7.8}$ & $56.0{\scriptstyle\pm 3.9}$ & $53.8{\scriptstyle\pm 6.7}$ & $74.4{\scriptstyle\pm 2.2}$ & $69.8{\scriptstyle\pm 0.7}$ & $54.4{\scriptstyle\pm 1.0}$\\
\midrule
Skill-SD & $89.3{\scriptstyle\pm 2.4}$ & $\mathbf{78.8}{\scriptstyle\pm 6.4}$ & $85.2{\scriptstyle\pm 3.7}$ & $65.6{\scriptstyle\pm 7.0}$ & $60.0{\scriptstyle\pm 2.8}$ & $\mathbf{70.8}{\scriptstyle\pm 7.8}$ & $76.4{\scriptstyle\pm 1.7}$ & $72.1{\scriptstyle\pm 0.9}$ & $55.8{\scriptstyle\pm 0.9}$\\
\quad\textbf{+ CORD} & $91.4{\scriptstyle\pm 3.5}$ & $67.3{\scriptstyle\pm 6.4}$ & $87.0{\scriptstyle\pm 4.1}$ & $67.2{\scriptstyle\pm 5.2}$ & $\mathbf{78.0}{\scriptstyle\pm 2.0}$ & $60.4{\scriptstyle\pm 2.1}$ & $\underline{77.9}{\scriptstyle\pm 1.5}$ & $78.1{\scriptstyle\pm 0.2}$ & $59.4{\scriptstyle\pm 0.8}$\\
\quad$\Delta$ & \corddeltaup{2.1} & \corddeltadown{11.5} & \corddeltaup{1.8} & \corddeltaup{1.6} & \corddeltaup{18.0} & \corddeltadown{10.4} & \corddeltaup{1.5} & \corddeltaup{6.0} & \corddeltaup{3.6}\\
\midrule
GRPO+OPSD & $94.4{\scriptstyle\pm 6.1}$ & $60.4{\scriptstyle\pm 12.3}$ & $75.0{\scriptstyle\pm 2.5}$ & $58.9{\scriptstyle\pm 5.9}$ & $50.0{\scriptstyle\pm 5.3}$ & $41.3{\scriptstyle\pm 4.2}$ & $65.0{\scriptstyle\pm 2.6}$ & $\mathbf{80.5}{\scriptstyle\pm 0.6}$ & $\mathbf{64.8}{\scriptstyle\pm 0.2}$\\
\quad\textbf{+ CORD} & $\mathbf{98.6}{\scriptstyle\pm 1.4}$ & $75.0{\scriptstyle\pm 6.4}$ & $\underline{88.0}{\scriptstyle\pm 4.0}$ & $75.0{\scriptstyle\pm 6.3}$ & $66.0{\scriptstyle\pm 2.0}$ & $\underline{69.8}{\scriptstyle\pm 4.5}$ & $\mathbf{80.9}{\scriptstyle\pm 1.6}$ & $77.5{\scriptstyle\pm 0.2}$ & $64.5{\scriptstyle\pm 1.0}$\\
\quad$\Delta$ & \corddeltaup{4.2} & \corddeltaup{14.6} & \corddeltaup{13.0} & \corddeltaup{16.1} & \corddeltaup{16.0} & \corddeltaup{28.5} & \corddeltaup{15.9} & \corddeltadown{3.0} & \corddeltadown{0.3}\\
\midrule
SDAR & $\underline{96.3}{\scriptstyle\pm 2.6}$ & $64.6{\scriptstyle\pm 3.6}$ & $81.2{\scriptstyle\pm 3.0}$ & $67.9{\scriptstyle\pm 3.6}$ & $59.5{\scriptstyle\pm 7.9}$ & $61.5{\scriptstyle\pm 4.7}$ & $73.8{\scriptstyle\pm 1.4}$ & $76.6{\scriptstyle\pm 1.0}$ & $60.7{\scriptstyle\pm 0.7}$\\
\quad\textbf{+ CORD} & $88.6{\scriptstyle\pm 6.4}$ & $\underline{76.9}{\scriptstyle\pm 0.0}$ & $\mathbf{89.8}{\scriptstyle\pm 1.6}$ & $\underline{79.7}{\scriptstyle\pm 8.1}$ & $\underline{74.0}{\scriptstyle\pm 3.5}$ & $51.0{\scriptstyle\pm 3.5}$ & $77.7{\scriptstyle\pm 3.1}$ & $\underline{78.2}{\scriptstyle\pm 1.1}$ & $\underline{64.6}{\scriptstyle\pm 1.1}$\\
\quad$\Delta$ & \corddeltadown{7.7} & \corddeltaup{12.3} & \corddeltaup{8.6} & \corddeltaup{11.8} & \corddeltaup{14.5} & \corddeltadown{10.5} & \corddeltaup{3.9} & \corddeltaup{1.6} & \corddeltaup{3.9}\\

\bottomrule
\end{tabularx}
\endgroup
\end{table}

Table~\ref{tab:main} measures the increment from adding CORD to each joint
training recipe. On ALFWorld, CORD improves all three recipes:
pooled success increases from 76.4\% to 77.9\% for Skill-SD, from 65.0\%
to 80.9\% for GRPO+OPSD, and from 73.8\% to 77.7\% for SDAR.
These are gains of 1.5, 15.9, and 3.9 percentage points, respectively.
The largest gain occurs for GRPO+OPSD, which reaches the highest pooled
success in the main comparison.

On WebShop, CORD improves Skill-SD success by 3.6 points
(55.8\% to 59.4\%) and SDAR success by 3.9 points (60.7\% to 64.6\%).
Their task scores also increase, from 72.1 to 78.1 and from 76.6 to 78.2,
respectively. For GRPO+OPSD, success changes from 64.8\% to 64.5\% and task
score from 80.5 to 77.5. Thus, the same projection configuration improves
all three recipes on ALFWorld and two on WebShop.
Appendix~\ref{app:statistical-significance} reports the corresponding
statistical tests. Appendix~\ref{app:backbone-7b} adds 7B results with
gains on ALFWorld and mixed outcomes on WebShop. These differences motivate examining how protection and gating interact
with the loss, coefficient, and gradient aggregation.

\subsection{RQ2: The roles of projection and gradient choices}
\label{sec:direction-gate}

\begin{table}[htbp]
\centering
\caption{\textbf{Protection direction and gating on WebShop.}
Qwen2.5-3B-Instruct with $k_3$, reporting success (\%).
Values are means and population standard deviations across four evaluation
passes at temperature $0.4$ after 150 updates, with one training seed.
Protect RL projects the distillation gradient. Protect distillation
projects the RL gradient. PCGrad projects both on conflict.
Bold identifies the largest observed mean at each coefficient.}
\label{tab:webshop-projection-full500}
\small
\setlength{\tabcolsep}{5pt}
\begin{tabular*}{\linewidth}{@{\extracolsep{\fill}}llcc@{}}
\toprule
Projection rule & Gate & $\lambda=0.001$ & $\lambda=0.005$\\
\midrule
No projection & --- & $55.75$\stdv{0.91} & $56.75$\stdv{0.89}\\
Protect RL & Conflict-only & $59.35$\stdv{0.79} & $61.85$\stdv{1.59}\\
Protect RL & Always & $62.65$\stdv{0.79} & $61.25$\stdv{0.92}\\
Protect distillation & Conflict-only & $58.35$\stdv{0.74} & $61.90$\stdv{1.08}\\
Protect distillation & Always & $38.90$\stdv{0.95} & \best{$62.10$}\stdv{1.54}\\
PCGrad \citep{yu2020gradient} & Conflict-only & \best{$63.10$}\stdv{0.91} & $58.85$\stdv{0.95}\\
\bottomrule
\end{tabular*}
\end{table}

\paragraph{Choose protection direction and trigger together.}
The protected objective determines which gradient remains intact, while the
gate determines whether projection also removes aligned components.
Table~\ref{tab:webshop-projection-full500} compares these choices on WebShop.
At $\lambda=0.001$, conflict-only projection improves success over no
projection for both RL and distillation protection, from 55.75\% to
59.35\% and 58.35\%, respectively.
Switching to always projection raises success to 62.65\% when protecting RL
but lowers it to 38.90\% when protecting distillation.
At $\lambda=0.005$, all four CORD configurations instead lie between
61.25\% and 62.10\%, above the 56.75\% unprojected result.
Direction and gating therefore interact with the coefficient: removing the
aligned component has a different effect depending on which objective is
protected and the relative gradient scales.

CORD keeps one gradient intact, whereas the two-objective PCGrad
adaptation \citep{yu2020gradient} modifies both when they conflict
(Appendix~\ref{app:pcgrad-adaptation}).
At $\lambda=0.001$, PCGrad has the largest observed mean of 63.10\%,
but at $\lambda=0.005$ its 58.85\% is below all four CORD configurations.

The fixed-gradient analysis in
Appendix~\ref{app:task-projection-intervention} clarifies how CORD's
protection direction interacts with $\lambda$. Protecting RL changes the weighted distillation gradient,
so the correction scales with $\lambda$. Protecting distillation changes
the RL gradient, whose correction is unchanged when only $\lambda$
varies. \emph{A small auxiliary coefficient therefore does not make
distillation protection a small intervention.} Because conflict-only
projection retains aligned contributions, the effect of always projection
depends on the selected protection direction.

\paragraph{Tune the loss and its coefficient together.}
\label{sec:loss-weight}
The token objective determines the distillation gradient supplied to CORD.
Table~\ref{tab:loss-type-success} compares $k_3$, the importance-weighted
log-ratio variant $\rho_{\mathrm d}$, and the host $k_1$ implementation
at $\lambda=0.005$, with RL protection and conflict-only projection.
The relative performance of the objectives changes across environments:
$\rho_{\mathrm d}$ exceeds $k_3$ on WebShop (58.95\% versus 55.55\%),
while $k_3$ exceeds $\rho_{\mathrm d}$ on ALFWorld (83.75\% versus 79.11\%).
The host $k_1$ reaches 59.95\% and 74.11\%, respectively.
The two importance-weighted losses supply different token gradients:
away from numerical caps, their derivatives with respect to the student
log-probability are $\rho_{\mathrm{on},t}\ell_t$ for $k_3$ and
$\rho_{\mathrm{on},t}(\ell_t+1)$ for $\rho_{\mathrm d}$
(Equation~\ref{eq:opd-token-derivatives}).
Changing the loss therefore changes what CORD projects, even at the same
coefficient. On WebShop, $k_3$ also has the highest graded score despite
lower strict success, so loss selection must follow the intended metric.

\begin{table}[t]
\centering
\caption{\textbf{Token objectives at $\lambda=0.005$.}
ALFWorld success and WebShop score/success are on a 0--100 scale.
CORD protects RL with conflict-only projection. Values are mean $\pm$
standard deviation across four decoding passes of step-150
Qwen2.5-3B-Instruct checkpoints at temperature $0.4$, with one training seed. Bold marks the largest mean per column.}
\label{tab:loss-type-success}
\begingroup
\small
\setlength{\tabcolsep}{1.6pt}
\renewcommand{\arraystretch}{1.17}
\begin{tabularx}{\linewidth}{@{}l*{9}{>{\centering\arraybackslash}X}@{}}
\toprule
& \multicolumn{7}{c}{ALFWorld} & \multicolumn{2}{c}{WebShop}\\
\cmidrule(lr){2-8}\cmidrule(l){9-10}
Objective & Pick & Look & Clean & Heat & Cool & Pick2 & \textbf{Avg.} & Score & Success\\
\midrule
$k_1$ & \makecell[c]{$\mathbf{95.71}$\\[0pt]${\scriptstyle\pm 4.74}$} & \makecell[c]{$71.15$\\[0pt]${\scriptstyle\pm 6.38}$} & \makecell[c]{$71.30$\\[0pt]${\scriptstyle\pm 4.81}$} & \makecell[c]{$56.25$\\[0pt]${\scriptstyle\pm 4.42}$} & \makecell[c]{$67.00$\\[0pt]${\scriptstyle\pm 9.54}$} & \makecell[c]{$66.67$\\[0pt]${\scriptstyle\pm 5.10}$} & \makecell[c]{$74.11$\\[0pt]${\scriptstyle\pm 3.00}$} & \makecell[c]{$73.68$\\[0pt]${\scriptstyle\pm 0.79}$} & \makecell[c]{$\mathbf{59.95}$\\[0pt]${\scriptstyle\pm 0.77}$}\\
$\rho_{\mathrm{d}}$ & \makecell[c]{$92.14$\\[0pt]${\scriptstyle\pm 3.11}$} & \makecell[c]{$65.38$\\[0pt]${\scriptstyle\pm 3.85}$} & \makecell[c]{$85.19$\\[0pt]${\scriptstyle\pm 2.62}$} & \makecell[c]{$78.13$\\[0pt]${\scriptstyle\pm 3.13}$} & \makecell[c]{$70.00$\\[0pt]${\scriptstyle\pm 2.00}$} & \makecell[c]{$\mathbf{70.83}$\\[0pt]${\scriptstyle\pm 2.95}$} & \makecell[c]{$79.11$\\[0pt]${\scriptstyle\pm 1.28}$} & \makecell[c]{$69.13$\\[0pt]${\scriptstyle\pm 0.73}$} & \makecell[c]{$58.95$\\[0pt]${\scriptstyle\pm 0.38}$}\\
$k_3$ & \makecell[c]{$93.57$\\[0pt]${\scriptstyle\pm 1.24}$} & \makecell[c]{$\mathbf{80.77}$\\[0pt]${\scriptstyle\pm 8.60}$} & \makecell[c]{$\mathbf{91.67}$\\[0pt]${\scriptstyle\pm 3.07}$} & \makecell[c]{$\mathbf{79.69}$\\[0pt]${\scriptstyle\pm 6.81}$} & \makecell[c]{$\mathbf{85.00}$\\[0pt]${\scriptstyle\pm 1.73}$} & \makecell[c]{$63.54$\\[0pt]${\scriptstyle\pm 1.80}$} & \makecell[c]{$\mathbf{83.75}$\\[0pt]${\scriptstyle\pm 1.06}$} & \makecell[c]{$\mathbf{74.66}$\\[0pt]${\scriptstyle\pm 0.44}$} & \makecell[c]{$55.55$\\[0pt]${\scriptstyle\pm 0.79}$}\\
\bottomrule
\end{tabularx}
\endgroup

\end{table}

\begin{figure}[!htbp]
\centering
\includegraphics[width=0.58\linewidth]{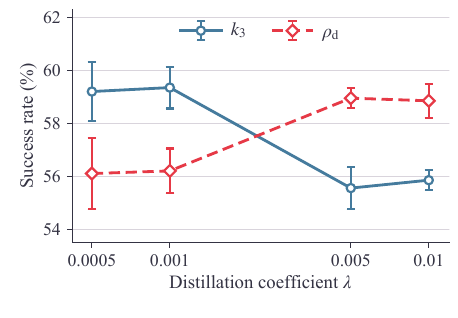}
\caption{\textbf{Effect of $\lambda$ on WebShop success.}
Error bars show standard deviation across decoding passes.}
\label{fig:lambda-success}
\end{figure}

Figure~\ref{fig:lambda-success} shows how the effect of $\lambda$ depends
on the token objective. Increasing it from 0.001 to 0.005 lowers $k_3$ success from
59.35\% to 55.55\%, but raises $\rho_{\mathrm d}$ success from 56.20\% to
58.95\%. \emph{Projection therefore does not make a coefficient transferable
between losses.} These loss and coefficient analyses use a separate
checkpoint collection from the direction and gate analysis.

\paragraph{Fix aggregation before comparing projection choices.}
\label{sec:aggregation-behavior}
The mini-batch size sets how many response sequences contribute to each
gradient pair. Five-round ALFWorld continuations from the same checkpoint
and AdamW state vary this size from 32 to 128 at a fixed episode budget
(Table~\ref{tab:optimization-sensitivity-aggregation}). Success under RL
protection varies by 0.78 percentage points, while distillation protection
peaks at size 64. Larger mini-batches change both gradient estimation and
the number of optimizer updates. These continuations update the policy
and collect new trajectories, testing training consequences beyond the
geometry of a fixed gradient pair.

Aggregation also determines how tokens and trajectories contribute to
each gradient. At size 64, the protection directions nearly tie under
microbatch weighting. Distillation protection has the higher mean under
global-token weighting, whereas RL protection has the higher mean under
trajectory weighting. All three paired intervals include zero, so these
orderings remain descriptive. Appendix~\ref{app:optimization-sensitivity}
defines the weights and protocol. Repartitioning fixed weights changes
the computation, whereas changing the weights changes the objectives
being protected.

\begin{table}[!htbp]
\centering
\caption{\textbf{Effect of gradient aggregation and mini-batch size.}
ALFWorld task-macro success (\%) after five continuation rounds from the
same Qwen2.5-3B step-90 checkpoint and AdamW state.
$B$ counts response rows per optimizer mini-batch.
Appendix~\ref{app:optimization-sensitivity} specifies the single-seed
evaluation protocol and aggregation weights. Bold marks row maxima.}
\label{tab:optimization-sensitivity-aggregation}
\small
\setlength{\tabcolsep}{5pt}
\renewcommand{\arraystretch}{1.08}
\begin{tabular*}{\linewidth}{@{\extracolsep{\fill}}lcrrr@{}}
\toprule
Aggregation & $B$ & No projection & Protect RL & Protect distillation \\
\midrule
Microbatch & 64 & 53.98 & \textbf{57.55} & 57.42 \\
Global token & 32 & 52.52 & \textbf{57.46} & 49.22 \\
Global token & 64 & 59.11 & 57.33 & \textbf{61.03} \\
Global token & 128 & \textbf{61.81} & 58.10 & 55.86 \\
Trajectory & 64 & 60.99 & \textbf{61.77} & 57.42 \\
\bottomrule
\end{tabular*}
\end{table}

\FloatBarrier

\subsection{RQ3: Case analysis of agent behavior}
\label{sec:case-study}

Success rates do not show which attributes an agent missed or how many
actions it repeated. Figure~\ref{fig:case-study} compares
three policies that open the same
product to buy a youth garment in dark heather, size 4t.
The unprojected policy selects the size and age group but omits the color,
receiving a partial score of 0.90. Both projected policies complete all
requested options and receive 1.00. With RL protection, the agent selects
each option once and purchases in six steps. With distillation protection,
it repeats four selections and purchases in ten.
\emph{Attribute completion and action efficiency are thus separate aspects of
behavior}: both directions succeed here, but their repetitions differ.
Further examples are provided in
Appendices~\ref{app:additional-cases} and~\ref{app:token-case-study}.

\begin{figure}[!htbp]
\centering
\includegraphics[width=0.95\linewidth]{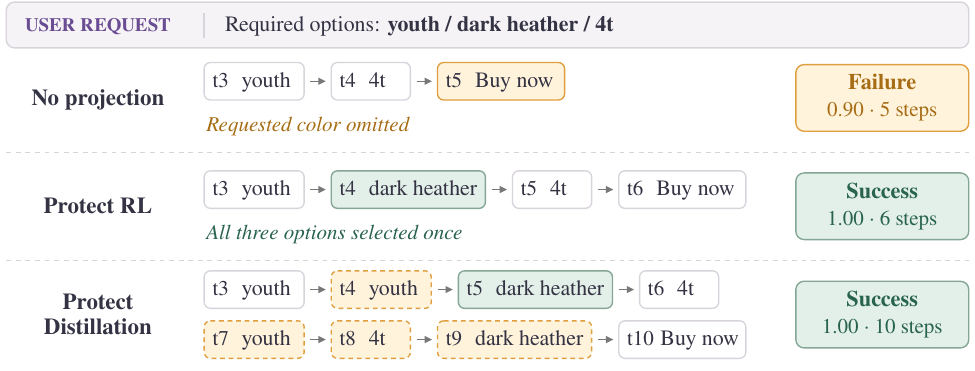}
\caption{\textbf{Option completion and repetition in WebShop.}
Qwen2.5-3B-Instruct, $k_3$, $\lambda=0.005$, step 150,
conflict-only projection, and greedy decoding.
Boxes show selected options and the purchase action, with original step indices.
Green highlights selection of the requested color. Dashed amber boxes mark
repeated selections. Scores are on $[0,1]$. Success requires $1.00$.}
\label{fig:case-study}
\end{figure}

\FloatBarrier
\section{Conclusion}
\label{sec:conclusion}

Joint RL and distillation gives agents complementary feedback, but opposing
gradient components can reduce its local optimization benefit. CORD applies
a simple projection rule to this interaction, allowing either objective to
be protected and making the trigger explicit. The first-order analysis
explains the correction on fixed surrogates before optimizer transformations, and the agent
experiments establish incremental gains across three host recipes on
ALFWorld and two of three on WebShop. The analyses also yield concrete
configuration guidance: choose protection and triggering together, tune
the mixing weight for the selected distillation loss, and specify
aggregation before comparing gradient directions. CORD thus makes gradient
combination a reusable design choice.

\clearpage
\bibliography{references}
\bibliographystyle{plainnat}

\clearpage
\appendix
\raggedbottom
\startcontents[appendices]
\cordappendixcontents
\clearpage
\section{Additional Related Work}
\label{app:related}

\subsection{On-policy and skill-conditioned distillation}
\label{app:rw-distill}

On-policy distillation trains a student on its own generated sequences while
a teacher supplies token-level targets, reducing the distribution mismatch
between training and inference. \citet{agarwal2024opd} develop generalized
knowledge distillation with a separate teacher. On-policy self-distillation
instead uses different contextual views of a model:
\citet{zhao2026opd} condition the teacher on privileged information, such as
verified reasoning traces, while the student sees the original question.
For multi-turn agents, Skill-SD \citep{wang2026skillsd} derives dynamic skills
from completed trajectories and uses an importance-weighted reverse-KL
objective with a periodically synchronized teacher. These methods supply dense supervision on student-generated trajectories.

SDAR \citep{lu2026sdar} combines GRPO with skill-conditioned distillation and
modulates token-level supervision using a detached sigmoid gate. Its
experimental protocol draws on the SkillBank of SkillRL
\citep{xia2026skillrl}, which learns a hierarchical skill library and evolves
it with the policy. SDAR also evaluates retrieved-skill adaptations of
Skill-SD and OPSD, together with prompt-based skill injection. Our three host
recipes---Skill-SD, GRPO+OPSD, and SDAR---follow the agent-training
implementations described in Section~\ref{sec:method} and
Appendix~\ref{app:experimental-details}. In particular, the retrieved-skill
$k_3$ host follows the adaptation in \citet{lu2026sdar}, which differs from
the original Skill-SD dynamic skill-generation protocol.
A token-level distillation gate changes the input gradient, whereas CORD's
projection gate decides when to modify the relationship between the two
accumulated gradients. The two operations can therefore be composed.

Another approach incorporates teacher information directly into the
advantage or policy-loss signal. RLSD \citep{yang2026rlvr} uses detached
teacher--student log-probability differences to reweight token advantages while
preserving their signs. This route modifies the policy update internally, without a separate
distillation loss to combine.

\subsection{Evolving skills and reflection memory}
\label{app:rw-evolving-skills}

The guidance available to an agent can evolve with its experience.
SkillRL expands and refines a hierarchical skill library using failure
analysis \citep{xia2026skillrl}. RetroAgent stores retrospective lessons
from successful and failed attempts in a growing reflection memory, and
combines retrieved lessons with progress-based intrinsic rewards
\citep{zhang2026retroagent}. Its RL-trained variant also jointly optimizes
decision making and self-reflection. These approaches improve what the
agent retains from its interactions and retrieves for later tasks.

Evolving guidance need not reside in a persistent library. SEED trains the
policy to generate hindsight skills from its current trajectories and
internalizes their guidance through joint RL and on-policy distillation
\citep{wu2026seed}. OPID constructs episode- and step-level skills whose
induced log-probability shifts enter the policy update
\citep{yang2026opid}. GRSD uses reflections from successful and failed
rollouts to form group-level guidance and modulate advantages
\citep{zheng2026grsd}.
CORD operates at the subsequent combination of objective gradients, so its
use is orthogonal to whether guidance comes from a fixed bank or evolves
with the policy. Potential applications include combining reward and
distillation gradients in SEED, or decision-making and reflection
gradients in the RL-trained RetroAgent variant, while preserving their
respective guidance mechanisms.

\subsection{Conflicting gradients and projection}
\label{app:rw-conflict}

Multi-task loss balancing adjusts objective scales using gradient magnitudes
or learned uncertainty \citep{chen2018gradnorm,kendall2018uncertainty}.
Gradient projection instead changes directions and has a long history in
multi-task and continual learning.
For two conflicting task gradients, PCGrad \citep{yu2020gradient} removes each
gradient's component along the other before combining them. A-GEM
\citep{chaudhry2019efficient} uses an asymmetric projection: it modifies the
current-task gradient to satisfy a single half-space constraint defined by
the gradient estimated from a sampled episodic-memory batch. This constraint prevents an
increase in that sampled memory loss under a first-order approximation.
Orthogonal gradient descent \citep{farajtabar2020orthogonal} projects updates
away from a stored subspace of gradients of previous network outputs.
CAGrad \citep{liu2021cagrad} balances progress on the mean task loss with the
worst local improvement among tasks.

CORD reuses the single-half-space projection underlying A-GEM
(Lemma~\ref{lem:proj}). The protected reference in CORD comes from one of the
two jointly trained objectives, and the projected candidate is added to that
reference. Either objective can be protected. Conflict-only projection uses
a half-space constraint, while the always-project variant imposes an
orthogonality constraint even when the gradients agree.
The agent experiments study how protection, triggering, loss construction,
and aggregation affect learning. The main host comparisons protect GRPO
with conflict-only projection, while the direction and gate analyses test
both choices.

\subsection{Agent RL and GRPO-style policy optimization}
\label{app:rw-agentrl}

GRPO \citep{shao2024deepseekmath} estimates advantages relative to groups of
sampled responses, avoiding a separate learned critic. PPO
\citep{schulman2017proximal} supplies the clipped policy surrogate used by
these host implementations. The truncated importance weighting in our $k_3$ and
$\rho_{\mathrm d}$ distillation objectives is related
to the correction used in V-trace/IMPALA \citep{espeholt2018impala}, although
the distillation weight does not implement the full V-trace return
estimator. GiGPO \citep{feng2025gigpo} improves multi-turn credit assignment
by combining episode-level advantages with step-level comparisons among
actions grouped by shared anchor states. These choices determine the
reward-derived gradient. CORD operates after the two objective gradients
have been accumulated for a mini-batch, so credit assignment and gradient
combination address different parts of the update.

\section{Experimental Details and Additional Results}
\label{app:experimental-details}

This section provides the task definitions, baseline configurations, and
evaluation procedures used to study CORD. Our implementation extends the
SDAR codebase \citep{lu2026sdar}, which builds on the multi-turn training
framework \emph{verl-agent} released with GiGPO \citep{feng2025gigpo}.
We retain this environment and skill interface for the three host recipes.

\subsection{Datasets and Tasks}
\label{app:datasets-tasks}

Table~\ref{tab:app-environments} summarizes the two environments. ALFWorld
and WebShop require an agent to execute actions whose consequences determine
which observations become available next. Both tasks connect token-level
distillation to rewards obtained after several decisions.

\begin{table}[!htbp]
\centering
\caption{Task and environment configuration. A turn denotes one generated
response followed by an environment interaction. Evaluation sizes describe the
source task pools or episode assignments. Section~\ref{app:implementation-evaluation} specifies the fixed evaluation protocol. The WebShop catalogue
size counts products, rather than training instructions.}
\label{tab:app-environments}
\begingroup
\small
\setlength{\tabcolsep}{4pt}
\renewcommand{\arraystretch}{1.14}
\begin{tabularx}{\textwidth}{@{}lXXcl@{}}
\toprule
Environment & Training source & Evaluation source & Turns & Metric \\
\midrule
ALFWorld & $3{,}553$ training games & $140$ in-distribution games & $50$ & Success rate \\
WebShop & Goal pool from a $1{,}000$-product catalogue & $500$ worker--goal assignments & $15$ & Score; success \\
\bottomrule
\end{tabularx}
\endgroup
\end{table}

\paragraph{ALFWorld.}
ALFWorld \citep{shridhar2021alfworld} provides text-based household tasks in
which the agent navigates rooms, locates objects, and performs the operations
specified by a goal. We use the TextWorld interface and the training split
shared by SDAR and GiGPO. Evaluation uses the in-distribution split, whose games contain environments
seen during training.
The six task categories are Pick and Place (Pick), Examine in Light (Look),
Clean and Place (Clean), Heat and Place (Heat), Cool and Place (Cool), and
Pick Two and Place (Pick2). Success requires completing the full goal within
$50$ turns. We report the success rate for each category and the pooled success
rate over evaluated episodes, so the overall score weights categories by their
number of evaluated tasks.

\paragraph{WebShop.}
WebShop \citep{yao2022webshop} asks the agent to find and purchase a product
that satisfies an instruction, including constraints such as attributes,
options, and price. We use the text interface and the small catalogue selected
by the inherited environment configuration. The agent searches the catalogue,
opens product pages, selects options, and purchases an item within $15$ turns.
The environment constructs goals from this catalogue: the wrapper reserves
goal indices $0$--$499$ for evaluation and samples training goals from the
remaining indices. This index partition is interpreted within each worker's
seeded goal ordering. We retain both the graded task score, which measures
how closely the purchase satisfies the instruction, and strict success, which
requires a task score of $1$. Both are reported on a $0$--$100$ scale.

\subsection{Baselines}
\label{app:baseline-details}

We instantiate Skill-SD, GRPO+OPSD, and SDAR in a common trainer. Each recipe
combines the GRPO objective with a token-level auxiliary loss, and uses the
same skill bank and teacher-input construction. Their differences can be
expressed through three choices: the distillation coefficient $\lambda$,
whether teacher-confidence weighting is enabled, and the OPD token objective.
Table~\ref{tab:app-baseline-configs} states these choices explicitly.

\paragraph{Token objectives and confidence weights.}
For a sampled response token $y_t$ and its interaction history $x_t$, define
\begin{equation}
\label{eq:opd-ell}
s_t=\log\pi_\theta(y_t\mid x_t),\qquad
q_t=\operatorname{sg}\!\left[\log\pi_\theta(y_t\mid x_t\oplus S(x_t))\right],
\qquad \ell_t=s_t-q_t.
\end{equation}
These log-probabilities follow Equation~\ref{eq:prelim-logratio}.
The teacher receives skill context $S(x_t)$. Its scores $q_t$ and the rollout
log-probabilities $s_t^{\mathrm{old}}$ are detached. The capped on-policy ratio is
\begin{equation}
\label{eq:app-rho-on}
\rho_{\mathrm{on},t}=
\min\!\left\{\exp\!\left(\min\{s_t-s_t^{\mathrm{old}},10\}\right),2\right\}.
\end{equation}
The token objectives used by the host recipes and loss analysis are
\begin{equation}
\label{eq:opd-loss}
\psi_t=
\begin{cases}
\rho_{\mathrm{on},t}\big(\exp(\min\{-\ell_t,20\})-1+\ell_t\big), & k_3,\\
\rho_{\mathrm{on},t}\ell_t, & \rho_{\mathrm d},\\
-\ell_t, & \text{SDAR-style }k_1.
\end{cases}
\end{equation}
The $k_3$ and $\rho_{\mathrm d}$ objectives retain the student-side gradient
through $\rho_{\mathrm{on},t}$. For fixed sampled tokens, with neither ratio nor
exponential cap active,
\begin{equation}
\label{eq:opd-token-derivatives}
\frac{\partial\psi_t^{k_3}}{\partial s_t}
=\rho_{\mathrm{on},t}\ell_t,\qquad
\frac{\partial\psi_t^{\rho_{\mathrm d}}}{\partial s_t}
=\rho_{\mathrm{on},t}(\ell_t+1).
\end{equation}
Thus $k_3$ has zero token derivative at matched teacher and student
log-probabilities, while $\rho_{\mathrm d}$ retains an offset.
The objectives can change the direction of the aggregate gradient, rather
than only its scale.
The loss and coefficient analyses examine the $\rho_{\mathrm d}$ log-ratio
objective as an alternative to the host objectives in
Table~\ref{tab:app-baseline-configs}, which remain unchanged. The $k_1$ term follows the SDAR and
GRPO+OPSD implementations. With fixed sampled tokens and detached $q_t$,
its gradient is $-\nabla s_t$. Teacher scores affect this host variant
through confidence weights when those weights are enabled.
Confidence weights are $w_t=1$ when disabled and
$w_t=\operatorname{sg}[\sigma(-\beta\ell_t)]$ when enabled, where $\sigma$
is the sigmoid function. These weights act before gradient accumulation,
whereas the projection gate acts on the resulting gradient pair.

\paragraph{Loss aggregation.}
Let $m_t$ mask padding and inactive response tokens. The standard host path
forms a valid-token mean within each physical micro-batch $B^m$ and applies
its accumulation and distributed scaling $\alpha_m$:
\begin{equation}
\label{eq:app-auxiliary}
\Lopd(B)=\mathcal A_B[w_t\psi_t]=\sum_m\alpha_m
\frac{\sum_{t\in B^m}m_t w_t \psi_t}{\sum_{t\in B^m}m_t}.
\end{equation}
The sum is over nonempty micro-batches. For a complete mini-batch with
$M$ equally weighted micro-batches, $\alpha_m=1/M$. Thus a token's effective weight
also depends on its micro-batch's valid-token count. A global-token mean
instead uses one denominator $\sum_{t\in B}m_t$ for the complete optimizer
mini-batch. Appendix~\ref{app:optimization-sensitivity} defines and compares
micro-batch, global-token, and trajectory aggregation. The coefficient
$\lambda$ multiplies the aggregated loss separately from these weights.

Importance weighting of the host $k_1$ term would give
$\psi_t=-\rho_{\mathrm{on},t}\ell_t$, which differs in sign from
the $\rho_{\mathrm d}$ objective and is not used by the Base recipes in
Table~\ref{tab:app-baseline-configs}.

\begin{table}[!htbp]
\centering
\caption{Base recipes in the shared implementation. The OPD column
specifies the token term, including importance weighting.
Every Base configuration disables projection.}
\label{tab:app-baseline-configs}
\begingroup
\small
\setlength{\tabcolsep}{6pt}
\renewcommand{\arraystretch}{1.16}
\begin{tabular*}{\linewidth}{@{\extracolsep{\fill}}lccc@{}}
\toprule
Base recipe & $\lambda$ & Confidence weighting $w_t$ & OPD token term $\psi_t$ \\
\midrule
Skill-SD & $10^{-3}$ & $1$ (off) & Importance-weighted $k_3$ \\
GRPO+OPSD & $5\!\times\!10^{-3}$ & $1$ (off) & $k_1$, without $\rho_{\mathrm{on},t}$ \\
SDAR & $10^{-2}$ & $\operatorname{sg}[\sigma(-5\ell_t)]$ & $k_1$, without $\rho_{\mathrm{on},t}$ \\
\bottomrule
\end{tabular*}
\endgroup
\end{table}

\paragraph{Skill-SD.}
Our shared implementation of the Skill-SD recipe uses the importance-weighted
$k_3$ surrogate and a coefficient of $10^{-3}$. It retains the SDAR skill
interface. In this implementation, the student-to-rollout importance
ratio is capped at $2$, and the exponential argument in the $k_3$ term is
upper-clamped at $20$ for numerical stability. These safeguards are applied
identically to the Base and $+$ CORD configurations. The auxiliary loss uses Equation~\ref{eq:app-auxiliary} without
teacher-confidence weighting.

\paragraph{GRPO+OPSD.}
GRPO+OPSD uses the plain teacher-minus-student log-probability term and uniform
token weights. Its coefficient in our unified implementation is
$\lambda=0.005$. This matches the effective scale of the SDAR implementation's
GRPO+OPSD configuration: the latter sets the outer coefficient to $0.01$ and
the sigmoid sharpness to zero, yielding a constant weight
$\sigma(0)=0.5$. Our gate-off branch uses $w_t=1$, so absorbing this constant
into $\lambda$ preserves the auxiliary loss and its gradient. The numerical
difference in coefficients therefore reflects the gate convention rather than
a separate hyperparameter search.

\paragraph{SDAR.}
SDAR uses the same $k_1$ token term with $\lambda=0.01$ and $\beta=5$.
Its detached sigmoid weight assigns greater weight to sampled tokens for which
the skill-conditioned teacher has a higher log-probability than the student.
This confidence weighting acts before differentiation. CORD subsequently
acts on the accumulated auxiliary gradient, using its alignment with the
GRPO actor gradient to decide whether to project. This separates token
weighting from the subsequent choice of gradient combination.

\paragraph{Additional baselines.}
Our GRPO runs disable the distillation contribution. RLSD uses the
skill-conditioned self-teacher with coefficient $0.5$, a 100-step warmdown,
and clipping parameter $0.2$. We evaluate its final policies after 150 updates
in four passes using the ordered ALFWorld assignments
and the full WebShop task set described in
Appendix~\ref{app:implementation-evaluation}. The reference Vanilla,
Skill-Prompt, and OPSD entries in Table~\ref{tab:main} are reported values from
the SDAR paper.

\paragraph{Fixed skill bank and teacher context.}
\label{app:fixed-skill-bank}
Following SDAR, we use the predefined SkillBank adopted from SkillRL
\citep{lu2026sdar,xia2026skillrl}. The bank contains general strategies and
task-specific skills distilled from successful experience and failure
analysis. We keep its contents fixed throughout training. Thus, we use
SkillRL's bank as a source of guidance without its recursive bank-evolution
procedure. In ALFWorld, the household task determines the task-specific
skills. In WebShop, instruction keywords select relevant skills alongside
the general strategies. Table~\ref{tab:skill-bank-examples} illustrates
representative content.

The teacher is a skill-conditioned view of the current policy, not a
separately trained model. It scores the student's generated tokens with
gradients disabled. The student receives no additional skill context during
rollout or evaluation. Each Base--CORD pair shares the bank, skill-selection
rule, and auxiliary coefficient, so its comparison isolates the
gradient-combination rule. CORD does not prescribe how the skill bank is selected or constructed.

\begin{table}[!htbp]
\centering
\caption{Representative content from the fixed skill bank, paraphrased for
brevity. These entries guide the teacher without entering the student's
rollout or evaluation input.}
\label{tab:skill-bank-examples}
\begingroup
\small
\setlength{\tabcolsep}{5pt}
\renewcommand{\arraystretch}{1.16}
\begin{tabularx}{\linewidth}{@{}p{0.24\linewidth}X@{}}
\toprule
Skill context & Guidance \\
\midrule
ALFWorld: cleaning & Take the object, clean it at a sink, move to the target location, and place it there. \\
WebShop: apparel & Select the required size and color, then check the price of the chosen variant before purchasing. \\
WebShop: search recovery & After two or three unproductive variant changes, return to search instead of continuing the same loop. \\
\bottomrule
\end{tabularx}
\endgroup
\end{table}

\paragraph{CORD configuration.}
The main CORD configuration protects the GRPO actor gradient,
uses the Euclidean inner product, and projects once per complete mini-batch
only when the two gradients conflict. The coefficient is constant, and the
projection reference is the current protected gradient without EMA smoothing.

\subsection{Implementation and Evaluation Details}
\label{app:implementation-evaluation}

\paragraph{Training configuration.}
Table~\ref{tab:app-training-config} summarizes the common settings. We use
Qwen2.5-3B-Instruct for the comparisons currently reported in
Table~\ref{tab:main}, with FSDP \citep{zhao2023fsdp} for model training and vLLM \citep{kwon2023pagedattention} for rollout generation. Each iteration samples $16$ task groups with eight trajectories per group. Group members share
the initial task and environment state, while their actions are sampled.
The resulting trajectories are converted into step-level prompt--response
records for optimization. Accordingly, the PPO mini-batch sizes below count
these records rather than complete multi-turn episodes.

\begin{table}[!htbp]
\centering
\caption{Common training settings for the three base recipes and their CORD
variants. Task groups and trajectories are counted before trajectories are
expanded into step-level training records. Learning rates are constant after
the indicated warm-up.}
\label{tab:app-training-config}
\begingroup
\small
\setlength{\tabcolsep}{6pt}
\renewcommand{\arraystretch}{1.1}
\begin{tabular*}{\linewidth}{@{\extracolsep{\fill}}lcc@{}}
\toprule
Setting & ALFWorld & WebShop \\
\midrule
Training iterations & $150$ & $150$ \\
Task groups per rollout & $16$ & $16$ \\
Trajectories per group & $8$ & $8$ \\
Maximum prompt tokens & $2{,}048$ & $4{,}096$ \\
Maximum response tokens per turn & $512$ & $512$ \\
History retained (previous turns) & $2$ & $2$ \\
PPO mini-batch size (step records) & $256$ & $64$ \\
PPO epochs per rollout batch & $1$ & $1$ \\
Learning rate & $10^{-6}$ & $10^{-6}$ \\
Learning-rate warm-up fraction & $0$ & $0$ \\
Reference-policy KL coefficient & $0.01$ & $0.01$ \\
Invalid-action penalty & $0.1$ & $0.1$ \\
Training rollout temperature & $1.0$ & $1.0$ \\
\bottomrule
\end{tabular*}
\endgroup
\end{table}

The group-relative advantage in Section~\ref{sec:preliminaries} uses the
host's response-record returns, with group statistics pooled across the
records from all attempts at the same task. Each record's normalized
advantage is broadcast over its valid response tokens.
The policy-loss implementation additionally uses dual clipping for negative
advantages: if $c_t=-\min(r_t\widehat A_t,
\operatorname{clip}(r_t,1-\varepsilon,1+\varepsilon)\widehat A_t)$
is the token loss in Equation~\ref{eq:prelim-grpo}, it replaces $c_t$ by
$\min(c_t,-3\widehat A_t)$ when $\widehat A_t<0$.
The full minimized actor loss is
$\Lgrpo=\mathcal L_{\mathrm{pg}}+\beta_{\mathrm{ref}}\mathcal K_{\mathrm{ref}}
-\beta_{\mathrm H}\mathcal H$, where $\mathcal L_{\mathrm{pg}}$ includes this
safeguard and the regularization terms use the same valid-token aggregation.
The frozen rollout probabilities and advantages carry no gradient.

We optimize with AdamW, using momentum coefficients $(0.9,0.999)$, weight
decay $0.01$, a PPO clip ratio of $0.2$, and gradient-norm clipping at $1$.
The frozen initial policy supplies the reference for the low-variance KL
regularizer, with the same coefficient across recipes. The GRPO leg accumulated
by CORD differentiates the complete host actor loss: the clipped policy
surrogate plus reference-policy KL, and the entropy term when enabled.
Protection therefore refers to this actor surrogate. The skill-conditioned
auxiliary loss is computed separately.
The interactive environments assign reward $10$ for completing the goal and
$0$ otherwise.
Invalid actions receive the additional penalties in
Table~\ref{tab:app-training-config}. The graded WebShop task score is retained
as an evaluation metric even though its training reward uses strict success.

\paragraph{Multi-turn inputs and gradient accumulation.}
Following \emph{verl-agent}, each decision uses the current observation,
the task description, and a bounded history of previous interactions.
ALFWorld and WebShop retain the two most recent turns. Losses are computed on
the generated responses. Environment observations and padding do not provide
target tokens. We accumulate the GRPO actor and auxiliary gradients over
the full PPO mini-batch with the selected aggregation weights before computing
their inner product and applying the projection. In the fixed-coefficient
implementation, the auxiliary backward already includes $\lambda$, so its
buffer stores $\lambda\Go$.
After distributed gradient reduction and accumulation, a packed reduction
supplies inner products and norms across parameter shards. The reported
CORD configurations use Euclidean statistics. Numerical
protection skips projection when the squared reference norm is at most
$\epsilon$ and otherwise divides by $\norm{\bm{a}}^2+\epsilon$, with
$\epsilon=10^{-12}$. The ideal projection identities in
Proposition~\ref{prop:firstorder} omit this numerical adjustment.
Writing $A=\norm{\bm{a}}^2$ and $c=\inner{\bm{a}}{\bm{b}}$, the
regularized combined gradient $\bm{g}_{\epsilon}$ satisfies
$\inner{\bm{a}}{\bm{g}_{\epsilon}}=A+c\epsilon/(A+\epsilon)$
before floating-point error. For $c<0$, this is below $A$ and need not be
positive. Skipping projection instead gives $A+c$.
The combined gradient is clipped before the AdamW optimizer step. Task sampling and data order use recorded seeds, and the
rollout implementation separately exposes the generation seed so that sampled
decoding can be reproduced with a fixed worker configuration.

\paragraph{PCGrad two-objective adaptation.}
\label{app:pcgrad-adaptation}
We apply PCGrad \citep{yu2020gradient} to the host's fully accumulated actor gradient
$\bm{u}$ and $\lambda$-weighted distillation gradient $\bm{v}$.
When they conflict, each is projected against the other's original gradient,
and the two projected gradients are summed. Aligned pairs are added unchanged.
Each projection adds $\epsilon=10^{-12}$ to its squared-reference-norm
denominator and is skipped when that squared norm is at most $\epsilon$.
The runs in Table~\ref{tab:webshop-projection-full500} use $k_3$ and the
final-evaluation protocol below.

\paragraph{From the combined gradient to the parameter step.}
\label{app:actual-update}
At an anchor $\theta$, fix the sampled inputs and responses, masks, rollout
and reference-policy probabilities,
advantages, teacher scores, detached confidence values, aggregation weights,
the stochastic realization, and $\lambda$. These define a local surrogate $F_a$ with
$\bm{a}=\nabla F_a(\theta)$.
For the actual optimizer displacement $\Delta$, write
$p=-\bm{a}^\top\Delta$.

If $F_a$ has an $L$-Lipschitz gradient on a
neighborhood of the segment from $\theta$ to $\theta+\Delta$, then
\begin{equation}
\label{eq:actual-step-bound}
F_a(\theta+\Delta)-F_a(\theta)
\le -p+\frac{L}{2}\|\Delta\|^2.
\end{equation}
Indeed, the fundamental theorem of calculus expresses the remainder as
$\int_0^1[\nabla F_a(\theta+t\Delta)-\bm{a}]^\top\Delta\,dt$.
Cauchy--Schwarz and the Lipschitz condition bound its absolute value by
$L\|\Delta\|^2/2$.
For ideal SGD, $\Delta=-\eta\gfinal$, strict descent follows when
$L>0$, $\bm{a}^\top\gfinal>0$, and
\begin{equation}
\label{eq:ideal-step-sufficient}
0<\eta<\frac{2\bm{a}^\top\gfinal}{L\|\gfinal\|^2}.
\end{equation}
This is a sufficient local condition, without an estimate of $L$.
PPO or ratio-cap kinks can invalidate the smoothness assumption.
Scalar gradient clipping preserves the sign of $\bm{a}^\top\gfinal$ but
changes the effective step size. AdamW's momentum, coordinate scaling, and
weight decay change $\Delta$, so the Euclidean projection identity alone
does not establish $p>0$. These statements concern the fixed surrogate and imply neither monotonic
return improvement nor training convergence.

\paragraph{Fixed validation and final-evaluation tasks.}
We fix the validation tasks so that training progress is measured on the
same problems at each evaluation. In the inherited ALFWorld protocol, we
preserve an ordered list of $128$ episode assignments drawn from the
in-distribution evaluation pool. The list contains $91$ distinct games.
Retaining repeated assignments preserves their evaluation weights.
Analyses identified as full-game evaluations use all $140$ in-distribution
ALFWorld games with fixed assignments.
For WebShop, final evaluation uses $500$ predetermined task assignments.
Each comparison uses the same tasks, ordering, and decoding settings for
Base and CORD, separating changes in the policy from changes in task
composition.

\paragraph{Monitoring and final evaluation.}
We separate training-time monitoring from the measurements reported as final
performance. When enabled, monitoring uses greedy decoding on a fixed set
every five training iterations. Final evaluation uses the policy after
150 updates, without selecting the checkpoint with the highest intermediate
validation score. This follows
the \emph{verl-agent} authors' clarification that final scores are obtained
by repeatedly evaluating the step-$150$ checkpoint and averaging the
results.\footnote{\href{https://github.com/langfengQ/verl-agent/issues/246\#issuecomment-4641967418}{\emph{verl-agent}, issue \#246, author clarification (June 7, 2026).}}

The final-evaluation protocol performs four evaluation passes, generating one
trajectory per task in each pass. Sampled evaluation uses temperature $0.4$
and top-$p=1$, without top-$k$ truncation. If $s_r$ is a reported metric in
pass $r$, we report
\begin{equation}
\label{eq:app-eval-spread}
\bar{s}=\frac{1}{K}\sum_{r=1}^{K}s_r,
\qquad
\sigma_{\mathrm{eval}}=
\sqrt{\frac{1}{K}\sum_{r=1}^{K}(s_r-\bar{s})^2},
\qquad K=4.
\end{equation}
Thus, $\bar{s}\pm\sigma_{\mathrm{eval}}$ reports the mean and population standard
deviation across four evaluation passes of the final policy. It does not
estimate variation across training seeds, and averaging four passes does not change the metric into a
best-of-four success rate. The final-evaluation comparisons use one training seed.

\paragraph{Relation to the reported SDAR scores.}
SDAR reports WebShop results on $128$ fixed validation tasks
\citep{lu2026sdar}. The paper does not specify the number of independent
final-evaluation passes or report variability across such passes.
We use $500$ fixed assignments and report the mean and population standard
deviation over four independently sampled passes of the final policy.
These protocol differences limit direct comparison with the published
scores, identified separately in Table~\ref{tab:main}. Our Base--CORD
comparisons share task coverage and evaluation settings.

\FloatBarrier
\subsection{Additional Experiments Across Model Scales}
\label{app:backbone-7b}

Table~\ref{tab:backbone-7b} provides supplementary results with
Qwen2.5-7B-Instruct for GRPO, RLSD, and the Skill-SD and SDAR host pairs.
On ALFWorld, CORD raises Skill-SD success from $69.5\%$ to $77.0\%$ and
SDAR from $68.9\%$ to $76.8\%$, with displayed-mean gains of $7.5$ and
$7.9$ percentage points. WebShop outcomes depend on the host: SDAR gains
$4.7$ score points and $2.9$ success percentage points, whereas Skill-SD
loses $2.9$ score points and $1.8$ success percentage points. These
single-seed 7B results supplement the 3B comparison and show that the
benefit depends on both task and host.

\begin{table}[H]
\centering
\caption{\textbf{Final results on Qwen2.5-7B-Instruct.}
ALFWorld success and WebShop score/success are on a 0--100 scale.
Results use one training seed and four evaluation passes at temperature $0.4$
after update 150. ALFWorld uses 128 fixed assignments and WebShop uses 500 fixed tasks.
$\pm$ denotes population standard deviation across the four decoding passes. CORD protects RL with conflict-only projection.
$\Delta=\mathrm{CORD}-\mathrm{Base}$ uses displayed means, in percentage points
for success and score points for score. Red/green indicate increases/decreases.
Bold/underline mark the largest/second-largest method means.}
\label{tab:backbone-7b}
\begingroup
\small
\setlength{\tabcolsep}{1.4pt}
\renewcommand{\arraystretch}{1.22}
\begin{tabularx}{\linewidth}{@{}l*{6}{>{\centering\arraybackslash}X}>{\columncolor{verylightgray}\centering\arraybackslash}X>{\centering\arraybackslash}X>{\columncolor{verylightgray}\centering\arraybackslash}X@{}}
\toprule
& \multicolumn{7}{c}{ALFWorld} & \multicolumn{2}{c}{WebShop}\\
\cmidrule(lr){2-8}\cmidrule(l){9-10}
Method & Pick & Look & Clean & Heat & Cool & Pick2 & \textbf{Avg.} & Score & \shortstack{Success\\Rate}\\
\midrule

GRPO & $95.4{\scriptstyle\pm 3.1}$ & $64.6{\scriptstyle\pm 3.6}$ & $\mathbf{83.0}{\scriptstyle\pm 5.3}$ & $71.4{\scriptstyle\pm 5.1}$ & $45.2{\scriptstyle\pm 5.3}$ & $65.4{\scriptstyle\pm 2.7}$ & $72.9{\scriptstyle\pm 1.2}$ & $80.1{\scriptstyle\pm 0.8}$ & $69.3{\scriptstyle\pm 1.1}$\\
RLSD & $95.4{\scriptstyle\pm 3.1}$ & $72.9{\scriptstyle\pm 9.1}$ & $67.0{\scriptstyle\pm 3.9}$ & $42.9{\scriptstyle\pm 5.1}$ & $32.1{\scriptstyle\pm 6.2}$ & $69.2{\scriptstyle\pm 3.8}$ & $65.6{\scriptstyle\pm 1.8}$ & $\mathbf{84.4}{\scriptstyle\pm 0.3}$ & $\underline{71.0}{\scriptstyle\pm 0.6}$\\
\midrule
Skill-SD & $\underline{96.3}{\scriptstyle\pm 2.6}$ & $68.8{\scriptstyle\pm 9.1}$ & $\underline{80.4}{\scriptstyle\pm 4.0}$ & $\mathbf{75.0}{\scriptstyle\pm 6.2}$ & $\underline{48.8}{\scriptstyle\pm 3.9}$ & $44.2{\scriptstyle\pm 8.4}$ & $69.5{\scriptstyle\pm 3.3}$ & $\underline{84.2}{\scriptstyle\pm 0.5}$ & $\mathbf{71.6}{\scriptstyle\pm 0.7}$\\
\quad\textbf{+ CORD} & $\mathbf{99.1}{\scriptstyle\pm 1.6}$ & $\underline{83.3}{\scriptstyle\pm 5.9}$ & $76.8{\scriptstyle\pm 4.0}$ & $\underline{73.2}{\scriptstyle\pm 5.9}$ & $\mathbf{51.2}{\scriptstyle\pm 3.9}$ & $74.0{\scriptstyle\pm 3.2}$ & $\mathbf{77.0}{\scriptstyle\pm 1.3}$ & $81.3{\scriptstyle\pm 1.3}$ & $69.8{\scriptstyle\pm 1.5}$\\
\quad$\Delta$ & \corddeltaup{2.8} & \corddeltaup{14.5} & \corddeltadown{3.6} & \corddeltadown{1.8} & \corddeltaup{2.4} & \corddeltaup{29.8} & \corddeltaup{7.5} & \corddeltadown{2.9} & \corddeltadown{1.8}\\
\midrule
SDAR & $90.7{\scriptstyle\pm 5.6}$ & $75.0{\scriptstyle\pm 0.0}$ & $59.8{\scriptstyle\pm 4.6}$ & $58.9{\scriptstyle\pm 11.7}$ & $45.2{\scriptstyle\pm 5.3}$ & $\mathbf{77.9}{\scriptstyle\pm 5.7}$ & $68.9{\scriptstyle\pm 3.5}$ & $79.0{\scriptstyle\pm 1.0}$ & $65.8{\scriptstyle\pm 0.4}$\\
\quad\textbf{+ CORD} & $93.5{\scriptstyle\pm 4.8}$ & $\mathbf{87.5}{\scriptstyle\pm 9.3}$ & $\underline{80.4}{\scriptstyle\pm 3.1}$ & $71.4{\scriptstyle\pm 5.1}$ & $\underline{48.8}{\scriptstyle\pm 3.9}$ & $\underline{76.0}{\scriptstyle\pm 6.9}$ & $\underline{76.8}{\scriptstyle\pm 3.5}$ & $83.7{\scriptstyle\pm 0.5}$ & $68.7{\scriptstyle\pm 0.4}$\\
\quad$\Delta$ & \corddeltaup{2.8} & \corddeltaup{12.5} & \corddeltaup{20.6} & \corddeltaup{12.5} & \corddeltaup{3.6} & \corddeltadown{1.9} & \corddeltaup{7.9} & \corddeltaup{4.7} & \corddeltaup{2.9}\\
\bottomrule
\end{tabularx}
\endgroup
\end{table}

We omit GRPO+OPSD from this 7B comparison because, for fixed sampled
tokens, its $k_1$ auxiliary term uses constant token weights and a detached
teacher. Its auxiliary gradient therefore does not vary with teacher
log-probabilities and cannot express teacher--student mismatch in this
implementation. SDAR's detached confidence weights still depend on this
mismatch.

\section{Supplementary Analyses}
\label{app:supplementary-analyses}

\subsection{Motivation Diagnostics}
\label{app:gradient-diagnostics}

\paragraph{Gradient diagnostics.}
Figure~\ref{fig:motivation-triptych} uses 2,741 WebShop/Qwen3-1.7B
mini-batch gradient records collected with projection disabled and
$\lambda=0.001$. Negative RL--distillation cosine occurs in 1,322 records
(48.2\%). The left panel shows the 5th, 25th, 50th, 75th, and 95th
percentiles of this subset, using linear interpolation. It compares the
original cosines with those after analytic conflict-only projection of the
same pairs before the optimizer. Aligned records are omitted from this panel.
The middle panel reweights distillation to $\lambda=0.005$ and compares
DirectSum with conflict-only RL protection. Negative and nonnegative
cosines define the conflict and aligned groups. RL-surrogate progress is
normalized by the RL-gradient-alone value. The plotted coordinates are
recovered from the archived vector figure, with tails clipped to the
displayed range.

\paragraph{Training comparison.}
The right panel uses WebShop/Qwen2.5-3B-Instruct, $k_3$, $\lambda=0.001$,
and conflict-only distillation protection. It shows success on on-policy
training trajectories over iterations 75--148 after 21-step smoothing,
with limits $0.35$--$0.65$. Curves average training seeds 0 and 1, and bands
span the two seed curves. Two unprojected seed-0 runs are averaged before
combining seeds. GRPO-only is trained separately at seeds 0 and 1.

\subsection{Token Objectives and Mixing Coefficients}
\label{app:loss-coefficient-analysis}

Table~\ref{tab:loss-type-success} reports the category-level results for the
token-objective comparison in RQ2. Figure~\ref{fig:lambda-score}
adds the WebShop task score to the success-rate curves in
Figure~\ref{fig:lambda-success}. These experiments use RL protection and
conflict-only projection. Each point summarizes four evaluations of the
step-150 checkpoint at temperature $0.4$. The ALFWorld comparison uses the
full-game protocol described in Appendix~\ref{app:implementation-evaluation}.

\begin{figure}[!htbp]
\centering
\includegraphics[width=0.55\linewidth]{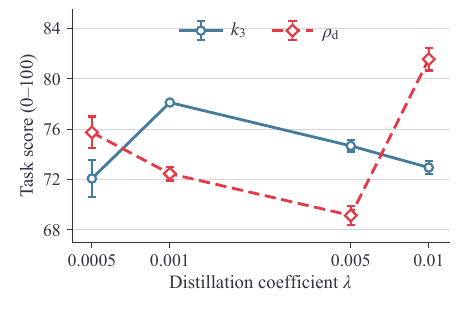}
\caption{\textbf{Effect of the mixing coefficient on WebShop score.}
The checkpoints and evaluation protocol are the same as in
Figure~\ref{fig:lambda-success}. Error bars show standard deviation across decoding passes.}
\label{fig:lambda-score}
\end{figure}

\FloatBarrier
\subsection{Task-level Projection Intervention}
\label{app:task-projection-intervention}

\paragraph{Measurement protocol.}
Figure~\ref{fig:task-projection-intervention} compares the two protection
directions under local coefficient reweighting.
For this diagnostic, $\Gr$ and $\Go$ denote the two objective gradients
aggregated over eight complete attempts at one task, before the outer
coefficient $\lambda$ is applied. We measure
\[
100\,\frac{\norm{\dcord(\lambda)-\dsum(\lambda)}}{\norm{\dsum(\lambda)}},
\qquad
\dsum(\lambda)=-(\bm{u}+\bm{v}(\lambda)),\quad
\dcord(\lambda)=-\gfinal.
\]
Here $\bm{u}=\Gr$ and $\bm{v}(\lambda)=\lambda\Go$, matching the weighting
in Equation~\ref{eq:oneliner}. The bold $\bm{d}$ symbols denote descent
directions, hence the minus signs. The ratio requires a nonzero direct-sum
direction. Within each environment, both columns
reuse the same checkpoint, trajectories, and unweighted gradients, with
$\bm{v}(\lambda)=(\lambda/\lambda_0)\bm{v}(\lambda_0)$.
ALFWorld uses checkpoint 90, confidence-weighted $k_1$ with $\beta=5$,
and native $\lambda_0=0.01$. WebShop uses checkpoint 75, $k_3$,
and $\lambda_0=0.001$. We use the eight ALFWorld source tasks whose direct
gradients were available, out of 32 source tasks, and all 16 WebShop source
tasks. The ALFWorld subset follows a failed task-to-block gradient
reconstruction check. Task indices retain the same order in both columns.
These interventions compare changes to fixed gradient pairs, without
training a new policy at each coefficient.

\begin{figure}[H]
\centering
\includegraphics[width=\linewidth]{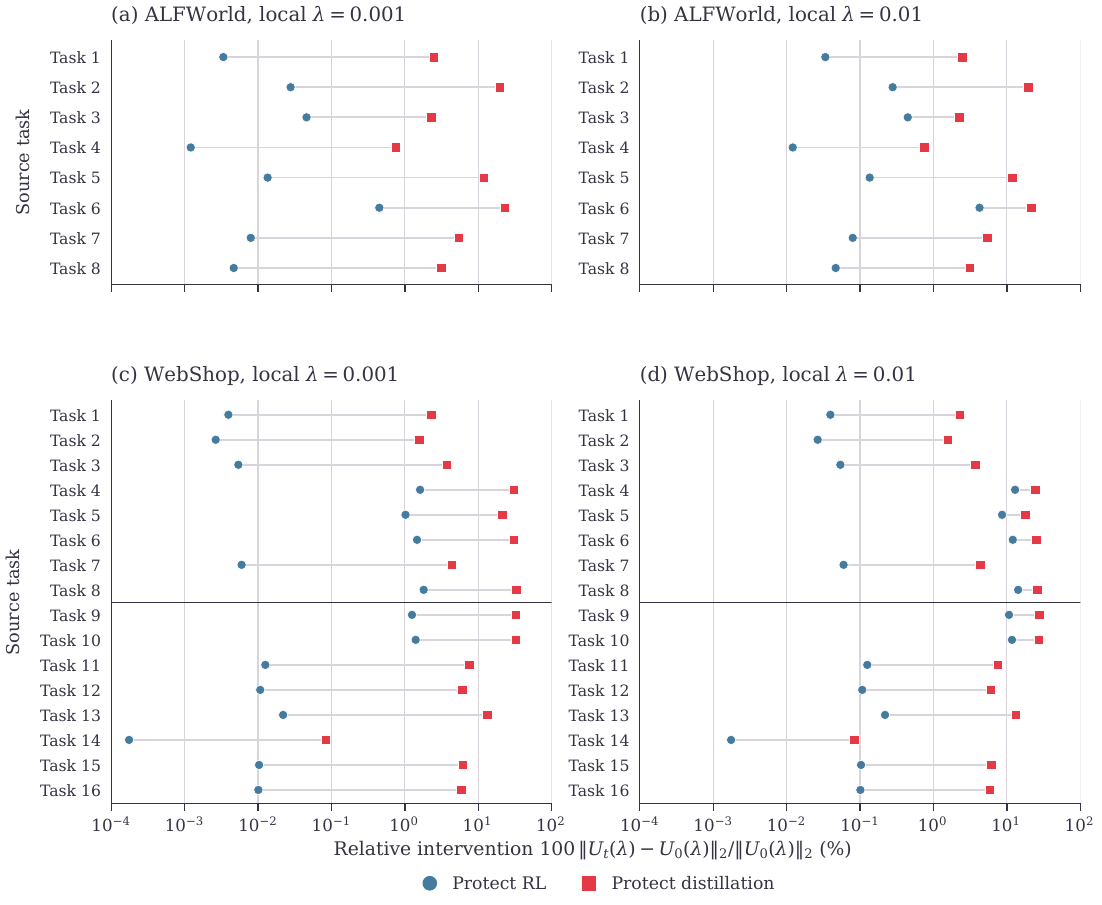}
\caption{\textbf{Task-level projection intervention under local coefficient reweighting.}
Each line connects the two protection directions for one source task under
always projection. Columns reuse the same gradients at different coefficients.
The horizontal axis reports the relative change in the combined update.
Its labels $U_0$ and $U_t$ denote $\dsum$ and $\dcord$, respectively.}
\label{fig:task-projection-intervention}
\end{figure}

\paragraph{Dependence on the coefficient.}
For fixed nonzero loss gradients $\Gr$ and $\Go$, write $\bm{v}=\lambda\Go$ with
$\lambda>0$, and let $\phi$ be the angle between $\Gr$ and $\Go$.
Under the always gate used here, the projection correction has norm $\lambda\lVert\Go\rVert|\cos\phi|$ when protecting
RL and $\lVert\Gr\rVert|\cos\phi|$ when protecting distillation, by
Equation~\ref{eq:gfinal}. Thus the second correction is independent of
$\lambda$ at this fixed gradient pair, whereas the kept distillation term still
scales with $\lambda$. These identities also hold on the conflicting branch
of the conflict-only gate, whose aligned branch has zero correction.

\subsection{Sensitivity to Optimization Choices}
\label{app:optimization-sensitivity}

Table~\ref{tab:optimization-sensitivity-aggregation} reports the endpoints of
the aggregation and mini-batch study in Section~\ref{sec:aggregation-behavior}.
Here we specify the continuation protocol and token weights, and examine
the corresponding uncertainty in Figure~\ref{fig:optimization-sensitivity}.

\paragraph{Continuation protocol.}
Every configuration starts from the same ALFWorld/Qwen2.5-3B checkpoint at
training step~90 and its AdamW state. We continue for five rounds with one
training seed, gated $k_1$, $\lambda=.01$, $\beta=5$, and conflict-only
projection. Each run uses 640 episodes, including the same initial 128
episodes. Four response rows per GPU on eight GPUs give $1/2/4$ accumulation
steps for optimizer mini-batch sizes $B=32/64/128$.
Evaluation uses 512 attempts over 91 in-distribution tasks at temperature
$0.4$, with success averaged across tasks. The 95\% task-bootstrap intervals
describe evaluation uncertainty for this seed. Paired contrasts are
descriptive. The figure labels \emph{Protect GRPO} and \emph{Protect OPD}
denote RL and distillation protection, respectively.

\paragraph{Aggregation weights.}
Let $S$ be an optimizer mini-batch containing $b$ valid response rows, where
$B$ is the configured mini-batch size. For objective $j$ and token $t$ in response row $i$, let $f_{it}^{(j)}$ include its token loss and any
confidence weight: in particular, $f_{it}^{(2)}=w_{it}\psi_{it}$ for OPD.
Then $F_j(S)=\sum_{(i,t)\in S}\omega_{it}f_{it}^{(j)}$ before projection.
The three rules assign token weights
\[
\omega_{it}=
\begin{cases}
1/(M T_m), & \text{microbatch averaging }(L_1),\\
1/T, & \text{global-token averaging }(L_2),\\
R/(N b L_{\tau(i)}), & \text{trajectory averaging }(L_3).
\end{cases}
\]
Here $T_m$ and $T$ count valid tokens in microbatch $m$ and $S$, and $M$
counts nonempty microbatches across GPUs. For trajectory averaging, $R$ and
$N$ count response rows and complete trajectories in the rollout.
Let $\tau(i)$ identify the complete trajectory containing row $i$, and let
$L_k$ count valid response tokens across all rows of trajectory $k$.
When $b=R$, the trajectory-averaged objective is
\[
F_{j,3}=\frac1N\sum_{k=1}^{N}\frac1{L_k}
\sum_{(i,t):\,\tau(i)=k} f_{it}^{(j)},
\]
so every complete trajectory has equal total weight. Mini-batches of response rows retain
these complete-trajectory length weights.

\begin{figure}[!htbp]
\centering
\includegraphics[width=\linewidth]{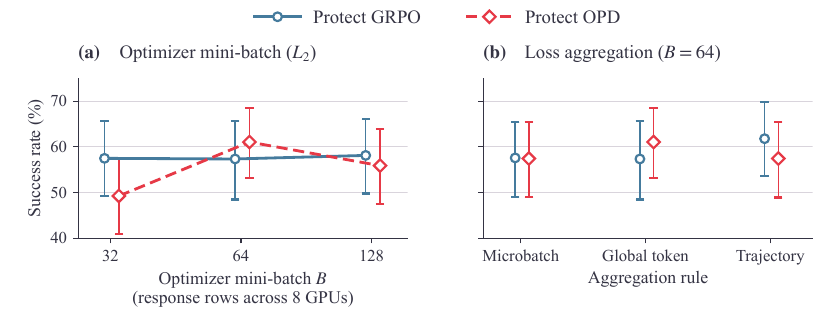}
\caption{\textbf{ALFWorld continuation results with task-bootstrap uncertainty.}
Panel (a) varies mini-batch size under global-token averaging ($L_2$),
while panel (b) varies aggregation at $B=64$.
Error bars show 95\% intervals over tasks under the protocol above.}
\label{fig:optimization-sensitivity}
\end{figure}

\paragraph{Optimizer mini-batch size.}
At fixed $L_2$, RL-protected success spans $0.78$ percentage points over
$B\in\{32,64,128\}$, whereas distillation protection peaks at $B=64$
(Figure~\ref{fig:optimization-sensitivity}a).
At $B=128$, distillation protection is $5.95$ percentage points below no
projection, with a paired 95\% interval of $[-11.13,-0.96]$.
Using $\bm{u}=\Gr$ and $\bm{v}=\lambda\Go$ as in Equation~\ref{eq:oneliner}, during conflict,
correction norms are $\lambda\|\Go\||\cos\phi|$ under RL protection and
$\|\Gr\||\cos\phi|$ under distillation protection
(Appendix~\ref{app:task-projection-intervention}), where $\phi$ is the angle between $\Gr$ and $\Go$. Small $\lambda$ thus
leaves the reverse correction unattenuated at fixed gradients.
Logged median ratios $\|\bm{v}\|/\|\bm{u}\|$ of $0.009$--$0.013$ describe the scale
regime in which this contrast occurs.
At fixed episode budget and learning rate $10^{-6}$, increasing $B$
reduces the number of updates from $616$ to $300$ to $150$ under
distillation protection. This sweep therefore changes both gradient
estimation and the number of optimizer updates.

\paragraph{Loss aggregation.}
At $B=64$, the directions nearly tie under microbatch averaging.
Distillation protection leads by $3.71$ points under global-token averaging,
and RL protection leads by $4.35$ under trajectory averaging
(Figure~\ref{fig:optimization-sensitivity}b). All three paired intervals
include zero, so this ordering remains descriptive.
Aggregation reweights gradients before projection, changing the conflict
test and protected objective. Global-token averaging ($L_2$) weights each
valid token equally, while trajectory averaging ($L_3$) gives each complete
trajectory equal total weight when all rollout rows are included.

\subsection{Training Dynamics}
\label{app:training-dynamics}

We examine how gradient combination affects rollout success, interaction
length, and policy statistics during training. Each Base/CORD pair below
shares its code revision, active training configuration, and training seed,
with conflict-only RL protection enabled only for CORD. The plots include
3B settings and an additional 7B training setting. All measurements describe
single-seed on-policy training trajectories and are separate from the
final evaluations.

\begin{figure}[t]
\centering
\includegraphics[width=\linewidth]{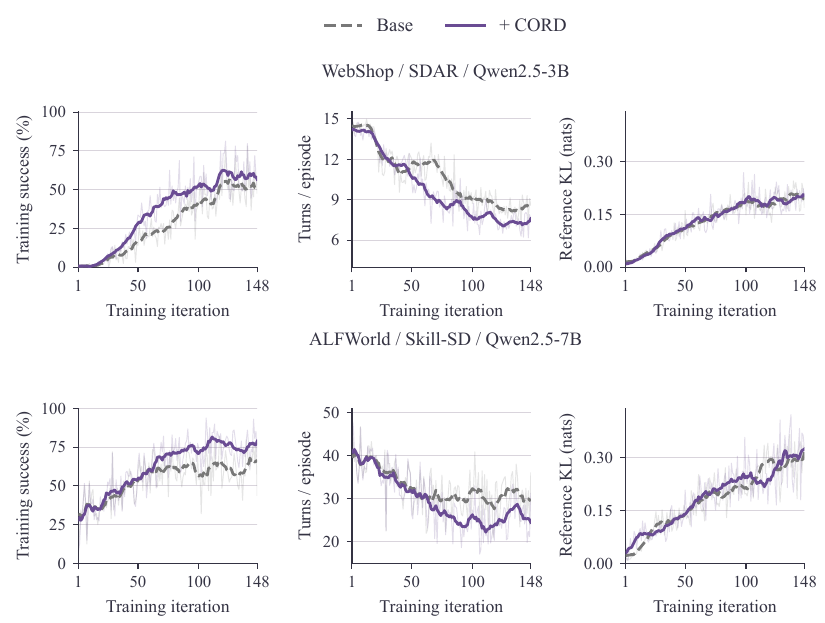}
\caption{\textbf{Training success, interaction length, and reference KL.}
Top: SDAR on WebShop with Qwen2.5-3B ($\lambda=0.01$).
Bottom: Skill-SD on ALFWorld with Qwen2.5-7B ($\lambda=0.001$).
Columns show rollout success, mean environment turns per episode, and the
logged sample-based KL estimate to the frozen reference policy.
Faint curves show raw observations, while darker curves show
11-iteration centered moving averages with shorter windows at the boundaries.
Each method has one training seed, shown over the common range of
iterations 1--148.}
\label{fig:reward-dynamics}
\end{figure}

\paragraph{Reward progress and interaction length.}
Both settings show higher late-training success and shorter episodes with
CORD (Figure~\ref{fig:reward-dynamics}). Using unsmoothed means over
iterations 101--148, WebShop success increases from $50.0\%$ to $57.1\%$
while episode length decreases from $8.58$ to $7.48$ turns. In the 7B
ALFWorld setting, success increases from $61.0\%$ to $76.0\%$ and episode
length decreases from $30.67$ to $25.34$ turns. Mean reference-KL estimates
are close within each pair: $0.188$ versus $0.189$ on WebShop and $0.274$
versus $0.271$ on ALFWorld. Thus, the shorter episodes accompany higher
training success at a similar logged reference-KL level.
Within these runs, the success and episode-length differences retain
their signs over the alternative windows 76--148 and 124--148.

\begin{figure}[t]
\centering
\includegraphics[width=\linewidth]{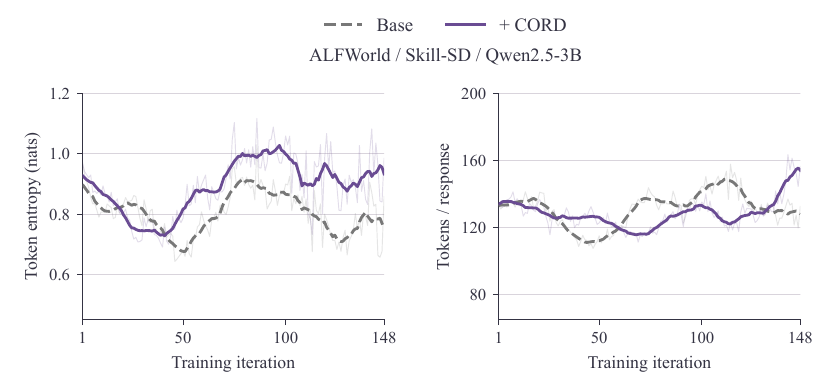}
\caption{\textbf{Token entropy and response length on ALFWorld.}
Skill-SD with Qwen2.5-3B at $\lambda=0.001$.
Left: token-mean policy entropy on the rollout responses,
measured before the actor update. Right: mean response-token count,
rather than tokens per complete episode. Each method uses one training seed.
The raw overlays, 11-iteration smoothing, and common range of iterations
1--148 follow Figure~\ref{fig:reward-dynamics}.}
\label{fig:entropy-dynamics}
\end{figure}

\paragraph{Token entropy and response length.}
CORD increases token entropy while slightly reducing mean response length
(Figure~\ref{fig:entropy-dynamics}). Over iterations 101--148, entropy
increases from $0.780$ to $0.925$ nats, while mean response length
decreases from $137.1$ to $133.3$ tokens. The entropy difference remains
positive in both alternative windows. These observations distinguish
changes in the token distribution from changes in response verbosity.
Entropy is measured on each policy's own visited states, so it does not
by itself measure exploration quality or establish a final-evaluation gain.

\FloatBarrier
\subsection{Computational cost}
\label{app:computational-cost}

Let $d$ be the number of trainable parameters and let a training iteration
contain $K$ PPO mini-batches, each partitioned into $M$ micro-batches.
CORD accumulates the two objective gradients over each complete PPO
mini-batch before applying the projection. Computing their inner products,
norms, and combined update requires $O(d)$ arithmetic per mini-batch,
or $O(Kd)$ per iteration. The additional gradient buffers require $O(d)$
storage, which is distributed across workers when gradients are sharded.
The scalar inner products and norms can be communicated in a packed
reduction whose message size does not grow with $d$.

For gradient evaluation, write $F$ for the cost of a student forward
computation and $B_1$, $B_2$ for the reverse-mode computations of the two
objectives on a fixed micro-batch. If both objectives use the same student
outputs, a shared-forward schedule has actor-update cost
\[
T_{\mathrm{CORD}}^{\mathrm{actor}}
= KM(F+B_1+B_2)+O(Kd)+T_{\mathrm{opt}},
\]
where $T_{\mathrm{opt}}$ includes gradient clipping and optimizer updates.
A fused-loss Base instead has cost
\[
T_{\mathrm{Base}}^{\mathrm{actor}}
= KM(F+B_{\mathrm{sum}})+T_{\mathrm{opt}},
\]
with $B_{\mathrm{sum}}$ denoting the reverse-mode computation of the
weighted loss sum. These expressions use equal-sized micro-batches to simplify
notation. For variable workloads, the corresponding costs are summed over
micro-batches. Parameter and gradient communication belong to the gradient
evaluation costs, separately from the packed reduction for projection.

Sharing the forward computation avoids evaluating identical student
outputs twice. It does not make two objective-gradient evaluations
equivalent to one backward pass on their summed loss: the latter supplies
only the combined gradient, which generally does not determine the inner
products needed by CORD. Both schedules have the same asymptotic order
when the reverse-mode costs are comparable, while their constant factors
depend on graph reuse, gradient-buffer handling, sharding, and hardware.
Forward sharing, fused vector operations, and selective diagnostic
collection are implementation opportunities rather than a fixed
wall-clock guarantee. Teacher scores already computed for the auxiliary
objective are reused.

\paragraph{Cost of a complete training iteration.}
Let $A$ denote the actor-update time of an ordinary fused-loss Base, and
let $S$ include rollout and environment interaction, teacher, reference,
and old-policy scoring, reward and advantage computation, and the
remaining work within a training iteration. For a fixed workload, if these
shared stages remain unchanged and the actor-update multiplier is
$r_{\mathrm{actor}}$, then
\[
R_{\mathrm{iter}}
=\frac{S+r_{\mathrm{actor}}A}{S+A}
=1+(r_{\mathrm{actor}}-1)f_{\mathrm{actor}},
\qquad
f_{\mathrm{actor}}=\frac{A}{S+A}.
\]
Thus, the actor-update multiplier is weighted by its share of Base
iteration time. This comparison excludes initialization, periodic
evaluation, checkpoint saving, and time outside the training iteration.

\paragraph{Component-based estimate.}
We evaluate an isolated shared-forward implementation on Qwen2.5-3B
with eight A100-40GB GPUs, FSDP, activation checkpointing, a global PPO
mini-batch size of 64, and a per-GPU micro-batch size of four.
We replay the first, middle, and last complete mini-batches saved from
one ALFWorld iteration. All replays use the same checkpoint, optimizer,
and random states, with fixed $\lambda=0.01$ and conflict-only projection
with GRPO protection. After two warmup calls, each setting
has seven timed repeats per mini-batch. Timings include the student
forward and backward computations, communication, gradient combination,
clipping, and AdamW, with GPU synchronization and the maximum elapsed
time across workers. The ratio of summed per-mini-batch median times is
$1.78\times$ relative to an ordinary Base that backpropagates the summed
loss once.

To estimate the iteration-level impact, we use two recorded WebShop
workload profiles with the same model, GPU type and count, and actor
configuration. Both profiles cover the same 131 iterations between
iterations 11 and 148 after excluding evaluation and checkpoint-saving
iterations. The projection-off profile already collects both objective
gradients for diagnostics. We therefore reconstruct ordinary Base actor
time by dividing the logged actor time by its separately measured
calibration ratio: $2.00$ for the diagnostic path and $2.01$ for the
original CORD path. Applying the shared-forward multiplier to these
reconstructed actor times gives Table~\ref{tab:iteration-cost-estimate}.
The estimated Base actor share is 25--27\%, yielding an estimated
iteration multiplier of $1.20$--$1.21\times$. Applying the same calculation
within the three consecutive windows 11--50, 51--100, and 101--148 gives
$1.17$--$1.22\times$.

\begin{table}[ht]
\centering
\caption{Component-based estimates for two WebShop workload profiles.
Each row holds the recorded workload fixed for both methods.
Shared-stage times are observed means, whereas actor times and iteration
ratios are estimates for the ordinary Base and the shared-forward CORD
implementation.}
\label{tab:iteration-cost-estimate}
\small
\setlength{\tabcolsep}{3pt}
\begin{tabular*}{\linewidth}{@{\extracolsep{\fill}}lrrrr@{}}
\toprule
Recorded workload &
\shortstack{Shared\\stages (s)} &
\shortstack{Base actor\\(s, est.)} &
\shortstack{CORD actor\\(s, est.)} &
\shortstack{Iteration\\ratio (est.)} \\
\midrule
Projection-off run & 150.2 & 54.8 & 97.2 & $1.21\times$ \\
CORD run           & 147.7 & 50.0 & 88.8 & $1.20\times$ \\
\bottomrule
\end{tabular*}
\end{table}

These estimates assume that the actor ratios measured on the ALFWorld
replay inputs transfer to the WebShop workloads and approximate the
logged actor stage, which also includes worker and data-transfer
overhead. The historical profiles and replay use different physical
machines of the same GPU type. The estimates describe the recorded
workloads with shared-stage costs held fixed and exclude policy-dependent
changes in subsequent trajectories. They are not measurements of a complete
training run with the optimized implementation.

\FloatBarrier
\subsection{Statistical Significance}
\label{app:statistical-significance}

The fixed-task, repeated-evaluation protocol in
Appendix~\ref{app:implementation-evaluation} separates decoding variability
from changes in task composition. We compare Base and CORD using two-sided
Welch tests on the four
evaluation-pass means for each checkpoint (Table~\ref{tab:app-significance}).
These tests summarize variation across decoding passes at one training seed.
Holm correction is applied jointly to all 27 ALFWorld and WebShop comparisons.

\begin{table}[H]
\centering
\caption{\textbf{Significance tests for Table~\ref{tab:main}.}
Raw two-sided Welch $p$-values are reported to three significant digits.
Bold $\dagger$ entries have Holm-adjusted $p<0.05$.}
\label{tab:app-significance}
\begingroup
\small
\setlength{\tabcolsep}{9pt}
\renewcommand{\arraystretch}{1.18}
\begin{tabular*}{\linewidth}{@{\extracolsep{\fill}}lccc@{}}
\toprule
Metric & Skill-SD & GRPO+OPSD & SDAR \\
\midrule
\multicolumn{4}{@{}l}{\textbf{ALFWorld}} \\
Pick & $4.18\!\times\!10^{-1}$ & $3.32\!\times\!10^{-1}$ & $1.25\!\times\!10^{-1}$ \\
Look & $6.86\!\times\!10^{-2}$ & $1.35\!\times\!10^{-1}$ & $9.62\!\times\!10^{-3}$ \\
Clean & $5.85\!\times\!10^{-1}$ & $5.16\!\times\!10^{-3}$ & $8.39\!\times\!10^{-3}$ \\
Heat & $7.67\!\times\!10^{-1}$ & $1.79\!\times\!10^{-2}$ & $8.01\!\times\!10^{-2}$ \\
Cool & $\mathbf{1.88\!\times\!10^{-4}}^{\dagger}$ & $9.15\!\times\!10^{-3}$ & $4.23\!\times\!10^{-2}$ \\
Pick2 & $1.00\!\times\!10^{-1}$ & $\mathbf{2.14\!\times\!10^{-4}}^{\dagger}$ & $2.33\!\times\!10^{-2}$ \\
\textbf{Average} & $3.16\!\times\!10^{-1}$ & $\mathbf{2.88\!\times\!10^{-4}}^{\dagger}$ & $1.18\!\times\!10^{-1}$ \\
\addlinespace[3pt]
\multicolumn{4}{@{}l}{\textbf{WebShop}} \\
Score & $\mathbf{9.09\!\times\!10^{-4}}^{\dagger}$ & $\mathbf{1.05\!\times\!10^{-3}}^{\dagger}$ & $1.28\!\times\!10^{-1}$ \\
Success rate & $\mathbf{2.20\!\times\!10^{-3}}^{\dagger}$ & $6.95\!\times\!10^{-1}$ & $2.84\!\times\!10^{-3}$ \\
\bottomrule
\end{tabular*}
\endgroup
\end{table}

\FloatBarrier
\begin{samepage}
\subsection{Additional Case Analyses}
\label{app:additional-cases}

These four cases extend Section~\ref{sec:case-study} using
Qwen2.5-3B-Instruct, $k_3$, step-150 checkpoints and greedy decoding,
with $\lambda=0.005$ for WebShop and $0.001$ for ALFWorld.
The unprojected variant adds the gradients directly, while both protected
variants use conflict-only projection. The ALFWorld tasks and decoding settings match,
but training used eight GPUs for distillation protection and sixteen for
the other checkpoints.
\par
\end{samepage}

\begin{samepage}
\paragraph{WebShop: repetition and attribute completion.}
In the rug case (Figure~\ref{fig:app-webshop-cases}a), all three policies buy
the same product and options with score $1.00$, but the policy with RL
protection makes eight repeated option clicks and uses 14 steps, compared
with six under
distillation protection. This reverses the ordering by total steps in
Figure~\ref{fig:case-study}. For the garment
(Figure~\ref{fig:app-webshop-cases}b), all three finish in five steps.
The policy with RL protection omits the requested size, scoring $0.80$
compared with $1.00$ for the other two. Thus, reaching a purchase promptly and completing its required
attributes are separate aspects of behavior.
\par
\end{samepage}

\begin{figure}[!htbp]
\centering
{\small\bfseries (a) Same purchase, different repetition\par}
\smallskip
\includegraphics[width=\linewidth]{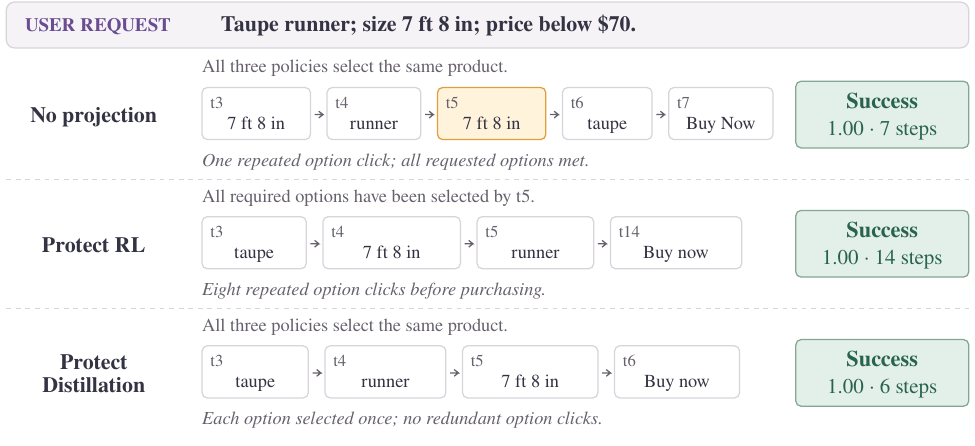}
\medskip
{\small\bfseries (b) Equal step counts, different attribute completion\par}
\smallskip
\includegraphics[width=\linewidth]{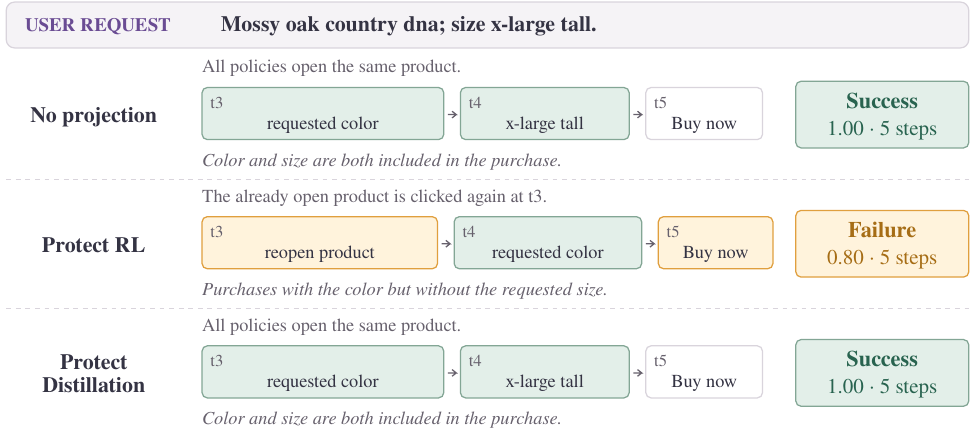}
\caption{\textbf{Additional WebShop trajectories.}
Trajectories retain the original step indices and use context lines to
summarize omitted actions.
Green marks requested attributes and amber marks repeated or incomplete
actions. Outcomes report the final score and total number of steps,
with score $1.00$ required for success.}
\label{fig:app-webshop-cases}
\end{figure}
\FloatBarrier

\begin{samepage}
\paragraph{ALFWorld: object state and completed subgoals.}
In Figure~\ref{fig:app-alfworld-cases}a, the unprojected policy repeatedly
cools eggs and never heats one within 50 steps. Both projected policies
finish with the same four-action heating-and-placement sequence, taking
eight steps under RL protection and sixteen under distillation protection.
The extra actions occur before this shared suffix, including five
ineffective actions under distillation protection.
Figure~\ref{fig:app-alfworld-cases}b reverses the length ordering.
With distillation protection, the agent keeps the first clock on the
dresser and completes the second delivery in 11 steps, compared with
25 steps after backtracking under RL protection. The unprojected policy attempts placement with the wrong
held object or at the wrong location and ends with only one delivered clock.
The shorter successful trajectory occurs under different protection
directions in these two cases.
\par
\end{samepage}

\begin{figure}[!htbp]
\centering
\includegraphics[width=\linewidth]{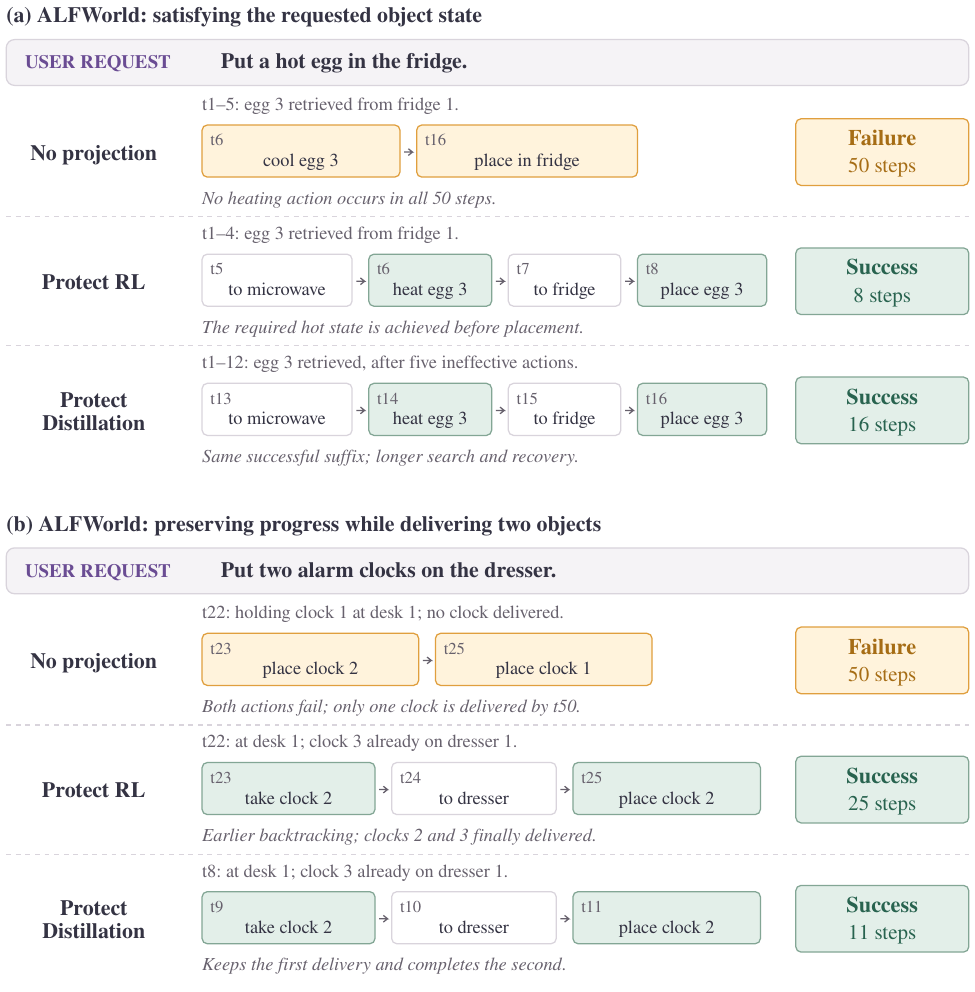}
\caption{\textbf{Additional ALFWorld trajectories.}
(a) Achieving the requested object state. (b) Completing two deliveries.
Action labels are abbreviated, with original step indices and context for
omitted intervals. ``Clock'' denotes an alarm clock and ``place'' denotes
the environment's move action. Green marks goal-relevant manipulation,
and amber marks incorrect or ineffective actions.}
\label{fig:app-alfworld-cases}
\end{figure}

\FloatBarrier
\begin{samepage}
\subsection{Token-Level Effects of Projection}
\label{app:token-case-study}

Figure~\ref{fig:token-case-trajectory} follows a complete 12-step WebShop
trajectory. From the Qwen2.5-3B step-75 checkpoint and identical AdamW
states, we apply one update per rule on the same conflicting batch
(OPD weight $\lambda=0.001$), then rescore tokens at their saved prefixes.
In the query, CORD with OPD protection retains more probability on the token
\texttt{he} in \texttt{heather} and the fit term \texttt{men} than DirectSum.
For \texttt{3}, it reduces suppression
at the first size selection (step~5), but reverses DirectSum's increase
when the choice repeats (step~7). This parameter-space projection produces
context-dependent token changes, including an adverse increase for the
unlisted \texttt{[next} token.
\par
\end{samepage}

\begin{figure}[!htbp]
\centering
\includegraphics[width=\linewidth]{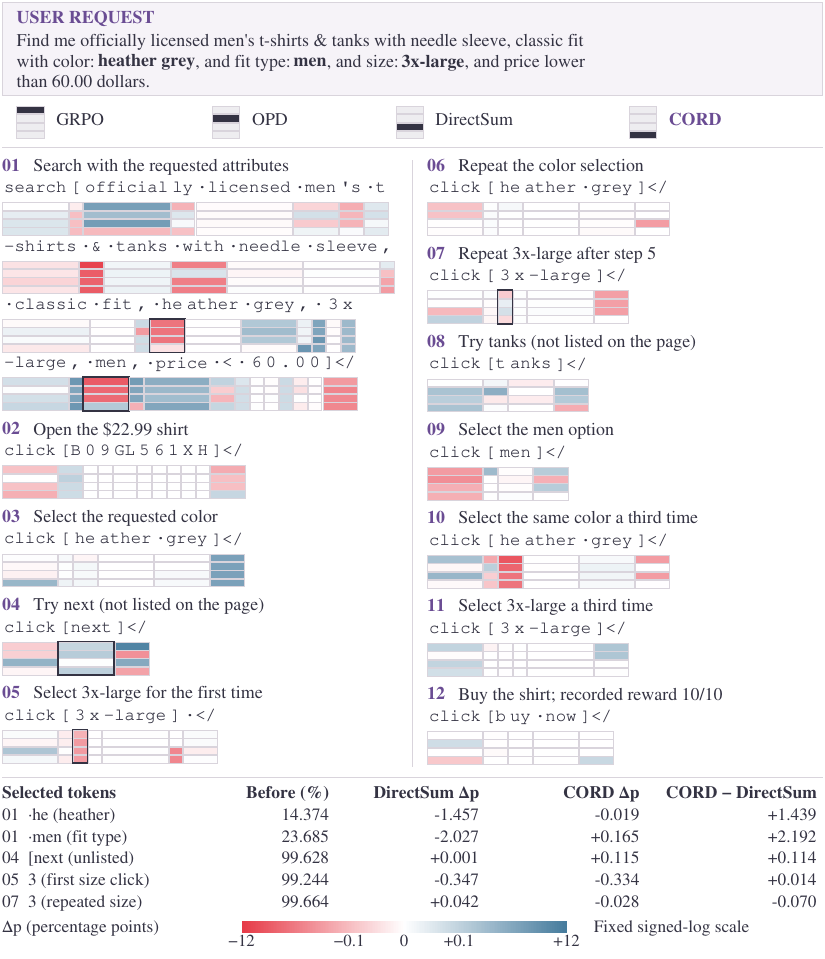}
\caption{\textbf{Token probability changes along a complete trajectory.}
Legend keys mark row positions, while outlines identify tabulated tokens,
not update quality. CORD protects OPD. GRPO protection matches DirectSum
at measured precision and is omitted. Dots denote spaces, and tag-boundary
pieces are retained. The recorded outcome predates these updates.}
\label{fig:token-case-trajectory}
\end{figure}

\end{document}